\newcommand{\real}{\mathbb{R}}
\newcommand{\RR}{\mathbb{R}}
\mathchardef\mhyphen="2D % Define a "math hyphen"
\DeclarePairedDelimiter{\norm}{\lVert}{\rVert}
\DeclarePairedDelimiter{\abs}{\lvert}{\rvert}
\DeclareMathOperator{\rank}{rank}
\DeclareMathOperator{\diag}{diag}
\newtheorem{thm}{Theorem}[section]
\newtheorem{lemma}[thm]{Lemma}
\newtheorem{definition}[thm]{Definition}
\newtheorem{lem}[thm]{Lemma}
\theoremstyle{definition}
\theoremstyle{remark}
\newcommand{\Amap}{\mathcal{A}}
\newcommand{\dm}{d}
\newcommand{\dmone}{d_1}
\newcommand{\dmtwo}{d_2}
\newcommand{\ncons}{m}
\newcommand{\overk}{k}
\newcommand{\inprod}[2]{\langle #1, #2 \rangle}
\newcommand{\twonorm}[1]{\left\|#1\right\|_2}
\newcommand{\twoinfnorm}[1]{\left\|#1\right\|_{2,\infty}}
\newcommand{\infnorm}[1]{\left\|#1\right\|_{\infty}}
\newcommand{\fronorm}[1]{\left\|#1\right\|_{\mbox{\tiny{F}}}}
\newcommand{\opnorm}[1]{\left\|#1\right\|_{\mbox{\tiny{\textup{op}}}}}
\newcommand{\nucnorm}[1]{\left\|#1\right\|_*}
\newcommand{\oneonenorm}[1]{\left\|#1\right\|_{1,1}}
\DeclareMathOperator{\dist}{dist}
\newcommand{\tr}{\mathop{\rm tr}}
\newcommand{\str}{\normalfont{\textrm{str}}}
\let\originalleft\left
\let\originalright\right
\renewcommand{\left}{\mathopen{}\mathclose\bgroup\originalleft}
\renewcommand{\right}{\aftergroup\egroup\originalright}
\newcommand{\trux}{x_\natural}
\newcommand{\truM}{M_\natural}
\newcommand{\trur}{r_\natural}
\newcommand{\truU}{U_\natural}
\newcommand{\truV}{V_\natural}
\newcommand{\truS}{S_\natural}
\newcommand{\truSig}{\Sigma_\natural}
\newcommand{\ones}{\mathbf{1}}
\newcommand{\Hessian}{\mathcal{H}}
\title{Flat minima generalize for low-rank matrix recovery}
\author{
Lijun Ding\thanks{Wisconsin Institute for Discovery, University of Wisconsin - Madison, Madison, WI 53715; \texttt{lding47@wisc.edu}. Research supported by NSF CCF-2023166.}
\and
Dmitriy Drusvyatskiy\thanks{Department of Mathematics, University of Washington, Seattle, WA 98195; \texttt{www.math.washington.edu/$\sim$ddrusv}.
Research supported by NSF CAREER DMS-1651851 and CCF-2023166.}
\and 
Maryam Fazel\thanks{Department of Electrical \& Computer Engineering, University of Washington, Seattle, WA 98195; \texttt{people.ece.uw.edu/fazel\_maryam/}. Research supported by NSF CCF-2023166, CCF-2007036, and DMS-1839371. }
\and 
Zaid Harchaoui \thanks{Department of Statistics, University of Washington, Seattle, WA 98195; \texttt{https://faculty.washington.edu/zaid}.
}
}
\date{}
\begin{document}

\maketitle

\begin{abstract}
	Empirical evidence suggests that for a variety of overparameterized nonlinear models, most notably in neural network training, the growth of the loss around a minimizer strongly impacts its performance. Flat minima---those around which the loss grows slowly---appear to generalize well. This work takes a step towards understanding this phenomenon by focusing on the simplest class of overparameterized nonlinear models: those arising in low-rank matrix recovery. We analyze overparameterized matrix and bilinear sensing, robust PCA, covariance matrix estimation, and single hidden layer neural networks with quadratic activation functions. In all cases, we show that flat minima, measured by the trace of the Hessian, {\em exactly recover} the ground truth under standard statistical assumptions. For matrix completion, we establish weak recovery, although empirical evidence suggests exact recovery holds here as well. We conclude with synthetic experiments that illustrate our findings and discuss the effect of depth on flat solutions. 
\end{abstract}

\section{Introduction}\label{sec: intro}
Recent advances in machine learning and artificial intelligence have relied on fitting highly overparameterized models, notably deep neural networks, to observed data \cite{tan2019efficientnet,kolesnikov2020big,huang2019gpipe,zhang2021understanding}. In such settings, the number of parameters of the model is much greater than the number of data samples, thereby resulting in models that achieve near-zero training error. Although classical learning paradigms caution against overfitting, recent work suggests ubiquity of the ``double descent'' phenomenon \cite{belkin2019reconciling}, wherein significant overparameterization actually improves generalization. There is an important caveat, however, that is worth emphasizing. There is typically a continuum of models with zero training error; some of these models generalize well and some do not. Reassuringly, there is evidence that basic algorithms, such as the stochastic gradient method, are implicitly biased towards finding models that do generalize; see for example \cite{soudry2018implicit,gunasekar2018implicitNN,jacot2018neural,heckel2020compressive,jastrzkebski2017three,smith2017bayesian,hoffer2017train,masters2018revisiting,neyshabur2014search,gunasekar2018implicit,du2018algorithmic,mulayoff2020unique}. 
Other seminal works \cite{bartlett1998sample,bartlett2002rademacher,neyshabur2017exploring} seeking to explain generalization  have focused on quantifying  stability, capacity, and margin bounds. 
Understanding generalization of overparameterized models remains an active  area of research, and is the topic of our work.

Existing literature highlights two intriguing properties---{\bf small norm}  and {\bf flat landscape}---that  correlate with generalization \cite{neyshabur2017exploring,dziugaite2017computing,dinh2017sharp}. 
Indeed, it has long been known that the magnitude of the weights plays an important role for neural network training. As a result, one typically incorporates a squared $\ell_2$-penalty on the weights---called weight decay---when applying iterative methods. One intuitive explanation is that minimizing the square Frobenius norm of the factors in matrix factorization problems is equivalent to minimizing the nuclear norm---a well-known regularizer for inducing low-rank structure \cite{recht2010guaranteed}. Far reaching generalizations of this phenomenon for various neural network architectures have been recently pursued in \cite{savarese2019infinite,ongie2019function,ongie2022role}. In parallel, empirical evidence \cite{hochreiter1997flat,Keskar2016,li2017visualizing} strongly suggests that those models around which the landscape is flat---meaning the training loss grows slowly---generalize well. See Figure~\ref{fig:graphicalflatsolution} for an illustration of flat and sharp minima. Inspired by this observation, a variety of algorithms have been proposed to explicitly bias the iterates towards flat solutions \cite{chaudhari2019entropy,izmailov2018averaging,norton2021diametrical,foret2020sharpness}, with impressive observed performance. In contrast to the magnitude of the weights, the theoretic basis for flatness is much less clear even for simple overparameterized nonlinear problems. The goal of our work is to answer the following question:
\begin{quote}
	\centering
	{\em Do flat minimizers generalize for a broad family of overparameterized problems?}
\end{quote} 

\begin{figure}[h]
	\centering
	\includegraphics[width= 0.4 \textwidth]{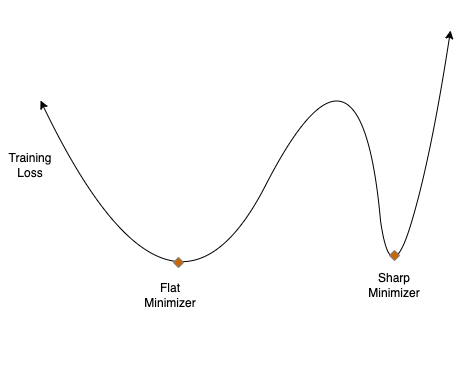}
	\caption{Flat vs. sharp minima of the training loss. }
	\label{fig:graphicalflatsolution}
\end{figure}

Putting generalization aside, one would hope that flat solutions are in some sense regular, occurring in a benign region where algorithms perform well. For example, numerical methods for neural network training are strongly influenced by how {\bf balanced} the parameters appear. Namely, the set of interpolating neural networks contains models with consecutive weight matrices that are poorly scaled relative to each other \cite{du2018algorithmic,shamir2018resnets}.  It has recently been shown that gradient descent in continuous time keeps the factors  balanced \cite{ye2021global,ma2021beyond} for matrix factorization and for deep learning \cite{du2018algorithmic,mulayoff2020unique}. Despite ubiquity of the three notions discussed so far---small norm, flatness, and balancedness---the exact relationship between them is unclear. Thus our secondary question is as follow:

\begin{center}
	{\em Are flat minimizers  nearly norm-minimal and nearly balanced\\ 
		for a broad family of  overparameterized problems?}
\end{center}

\subsection{Problem setting: overparameterized matrix factorization}
We answer both questions in the setting of low-rank matrix factorization---a prototypical  problem class often used to gain insight into more general deep learning models \cite{li2018algorithmic,du2018algorithmic,ye2021global}. 
Setting the stage, consider a ground truth matrix $\truM\in \real^{\dmone\times \dmtwo}$ with rank $\trur$.  The goal is to recover $\truM$ from the observed measurements $b=\Amap(\truM)$ under a linear measurement map $\Amap\colon\real^{\dmone \times \dmtwo} \rightarrow \real^{\ncons}$.
A common approach to this task is through the nonconvex optimization problem:
\begin{equation}\label{eq: main minimization}
	\min_{L,R} \quad f(L,R):=\twonorm{\Amap(LR ^\top) -b}^2\qquad \textrm{with }L \in \real^{\dmone \times \overk}\textrm{ and }R \in \real^{\dmtwo \times \overk}.
\end{equation}
The set of minimizers of $f$, which we denote by $\mathcal{S}$, consists of all solutions to the equation $\Amap(LR ^\top) =b$.
In order to model overparameterization, we focus on the rank-overparameterized setting $k\geq \trur$; indeed $k$ can be arbitrarily large.
The three notions discussed so for can be formally defined for pairs $(L,R) \in \mathcal{S}$ as follows.
\begin{itemize}
	\item $(L,R)$ is {\bf norm-minimal}  if it minimizes  over $\mathcal{S}$ the square Frobenius norm $\fronorm{L}^2+\fronorm{R}^2$.
	\item $(L,R)$ is {\bf balanced} if it satisfies $L^{\top}L=R^{\top}R$.
	\item $(L,R)$ is {\bf flat}  if it minimizes over $\mathcal{S}$ the ``scaled trace'' of the Hessian, $\str(D^2 f(L,R))$.
\end{itemize}
Thus being norm-minimal  means that $(L,R)$ is the closest pair from $\mathcal{S}$  to the origin in Frobenius norm. Being balanced amounts to requiring $L$ and $R$ to have the same singular values and right-singular vectors. Flat solutions are defined in terms of the ``scaled trace'' of the bilinear form $D^2 f(L,R)$ defined as
\begin{equation}\label{eqn:sctrace}
	\str(D^2 f(L,R)) := \tfrac{1}{d_1}\sum_{i\leq d_1,j\in [k]} D^2 f(L,R)[(e_ie_j^{\top},0_{d_2\times k})]+\tfrac{1}{d_2}\sum_{i>d_1,j\in[k]}D^2 f(L,R)[(0_{d_1\times k},e_ie_j^{\top})].
\end{equation}
where $e_i$ and $e_j$ are the unit coordinate vectors in $\RR^{d_1+d_2}$ and $\RR^k$, respectively. 
%
%
%
% $\str(D^2 f(L,R))$, which is the usual trace of the bilinear form $D^2 f(L,R)$ with respect to the scaled bases $\{\frac{1}{\sqrt{d_1}}e_ie_j^{\top}\}_{i\leq d_1,j\in [k]}$ and $\{\frac{1}{\sqrt{d_2}}e_ie_j^{\top}\}_{i> d_1,j\in [k]}$. Here, $e_i$ and $e_j$ are the unit coordinate vectors in $\RR^{d_1+d_2}$ and $\RR^k$ respectively.
%Explicitly, we  can express
%the scaled trace as
%
%
 In the square setting $d_1=d_2=d$, the scaled trace reduces to the usual trace divided by $d$.
The scaled trace appears to have not been used previously in the literature, but is important in order to account for a possible mismatch in the dimension of the $L$ and $R$ factors. A number of recent papers use the trace of the Hessian to measure flatness (e.g. \cite{dinh2017sharp}). Other alternatives are possible, such as the maximal eigenvalue \cite{dinh2017sharp,mulayoff2020unique} or the condition number \cite{liu2021noisy}, but we do not focus on them here.
Our main contribution can be succinctly summarized as follows:

\begin{center}
	{\em For various statistical models,  flat solutions  of \eqref{eq: main minimization}
		\underline{exactly recover} $\truM$. \\
		Moreover, flat solutions have nearly minimal norm and are almost balanced.}
\end{center}

The exact recovery guarantee may be striking at first because flat solutions are distinct from  minimal norm solutions, and thus do not correspond to nuclear norm minimization over $\mathcal{S}$. Yet, our main result shows that flat solutions do exactly recover the ground truth  $\truM$ under standard statistical assumptions.
The precise statistical models for which this is the case are matrix and bilinear sensing,  robust PCA (or PCA with outliers), covariance matrix estimation, and single hidden layer neural networks with quadratic activation functions.
Moreover, we prove weak recovery for the matrix completion problem, though our numerical experiments suggest that exact recovery holds here as well.

\subsection{Main results and outline of the paper}
We next outline our main results and the arguments that underpin them. We begin in Section~\ref{sec: fbminimalell2norm} with the idealized ``population level'' setting where $\mathcal{A}$ is the identity map. In this case, we show that there is no distinction between flat, norm-minimal, and balanced solutions. As soon as $\mathcal{A}$ deviates from the identity, however, all three notions become distinct in general.

\begin{figure}[h]
	\centering
	\includegraphics[scale=0.5]{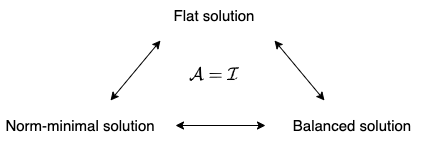}
	\caption{Equivalence between balanced, minimal norm, and flat solutions when $\mathcal{A}=\mathcal{I}$.}
	\label{fig:equiv_noa}
\end{figure}

An immediate difficulty with analyzing flat solutions of the problem \eqref{eq: main minimization} with a general measurement map $\mathcal{A}$ is that flat solutions are defined as minimizers of a highly nonconvex optimization problem corresponding to minimizing the scaled trace over the solution set. In Section~\ref{sec:conv_relax_flat}, we derive a simple convex relaxation of flat minimizers. Setting the notation, let us write $\mathcal{A}$ as $\mathcal{A}(X)=(\langle A_i,X\rangle,\ldots, \langle A_m,X\rangle)$ for some matrices $A_i\in\RR^{d_1\times d_2}$ and define the
``rescaling'' matrices
\begin{equation}
	D_1 := \left(\frac{1}{\ncons \dmtwo} \sum_{i=1}^m A_i A_i^\top \right)^{\frac{1}{2}}\quad \text{and}\quad D_2 := \left(\frac{1}{\ncons \dmone} \sum_{i=1}^m A_i^\top  A_i\right)^{\frac{1}{2}}.
\end{equation} 
We will show in Theorem~\ref{thm: equivalenceConvexNonconvexScaledTraceHessioan} that flat solutions can be identified with minimizers of the problem 
\begin{equation}\label{eq: PerturbnucnormXrank}
	\min_{X\in \RR^{d_1\times d_2}:~\rank(X)\leq k} \nucnorm{D_1XD_2}\qquad\textrm{subject to}\qquad \Amap(X)=b. 
\end{equation}
It is worthwhile to note that without the $D_1$ and $D_2$	matrices and without the rank constraint, the problem \eqref{eq: PerturbnucnormXrank} is classically known  to characterize norm-minimal solutions and is known as nuclear norm minimization. Herein, we already see the distinction between the two solution concepts.  A natural convex relaxation for  flat solutions simply drops the rank constraint: 
\begin{equation}\label{eq: PerturbnucnormX_con_relax_intro}
	\min_{X\in \RR^{d_1\times d_2}}~ \nucnorm{D_1XD_2}\qquad\textrm{subject to}\qquad \Amap(X)=b. 
\end{equation}
Summarizing, verifying that flat solutions exactly recover $\truM$ is  reduced to showing that  $\truM$ (which has rank $\trur$) is the unique solution of the convex problem \eqref{eq: PerturbnucnormX_con_relax_intro}. 

In Section~\ref{sec:RIP}, we will show that if the linear map $\mathcal{A}$ satisfies $\ell_2/\ell_2$ or $\ell_1/\ell_2$  restricted isometry properties (RIP) and the rescaling matrices $D_1$ and $D_2$ are sufficiently close to the identity, then $\truM$ is the unique solution of \eqref{eq: PerturbnucnormX_con_relax_intro}. As a consequence, we deduce that flat solutions exactly recover $\truM$ for matrix sensing \cite{recht2010guaranteed,candes2011tight} and bilinear sensing \cite{ling2015self,ahmed2013blind} problems with Gaussian design. The former corresponds to the setting where the entries of $A_i$ are independent standard Gaussian random variables, while the latter corresponds to the setting $A_i=a_ib_i^{\top}$ where $a_i\in\RR^{d_1}$ and $b_i\in \RR^{d_2}$ are independent standard Gaussian  vectors.  The end result is the following theorem. Simplifying notation, we set $d_{\max} =\max\{d_1,d_2\}$ and $d_{\min} = \min \{d_1,d_2\}$.

\begin{thm}[Matrix and bilinear sensing (Informal)]
	Suppose that  $\mathcal{A}$ is generated according to a Gaussian matrix sensing or bilinear sensing model. Then as long as we are in the regime  $m\gtrsim \trur d_{\max}$ and $d_{\min}\gtrsim \log \ncons$, with high probability, any flat solution $(L_f,R_f)$ satisfies $L_fR_f^\top = \truM,
	$ and is nearly norm-minimal and nearly balanced.
\end{thm}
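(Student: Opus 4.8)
\medskip
\noindent\textbf{Proof sketch.}
The plan is to turn this nonconvex statistical claim into a deterministic statement about a convex program and then verify the deterministic hypotheses with high probability. By Theorem~\ref{thm: equivalenceConvexNonconvexScaledTraceHessioan}, any flat solution $(L_f,R_f)$ has $X_f:=L_fR_f^\top$ equal to a minimizer of the rank-constrained program~\eqref{eq: PerturbnucnormXrank}. Since $\Amap(\truM)=b$ and $\rank(\truM)=\trur\le k$, the ground truth is feasible for~\eqref{eq: PerturbnucnormXrank} and for its unconstrained relaxation~\eqref{eq: PerturbnucnormX_con_relax_intro}; so it suffices to show that $\truM$ is the \emph{unique} minimizer of~\eqref{eq: PerturbnucnormX_con_relax_intro}, because then the minimum of~\eqref{eq: PerturbnucnormX_con_relax_intro} is attained at the rank-$\trur$ point $\truM$, the rank constraint in~\eqref{eq: PerturbnucnormXrank} is inactive, and $X_f=\truM$. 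For this I invoke the deterministic result of Section~\ref{sec:RIP}: if $\Amap$ obeys the $\ell_2/\ell_2$ RIP (matrix sensing) or the $\ell_1/\ell_2$ RIP (bilinear sensing) on matrices of rank $\bigO(\trur)$ with a small enough isometry constant, and if $\opnorm{D_1-I}$ and $\opnorm{D_2-I}$ are below an absolute threshold, then $\truM$ is the unique solution of~\eqref{eq: PerturbnucnormX_con_relax_intro}. The theorem therefore reduces to two high-probability facts: (i) the relevant RIP, and (ii) the near-identity bounds on $D_1,D_2$.

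For (i): in the Gaussian matrix sensing model the rank-$\bigO(\trur)$ $\ell_2/\ell_2$ RIP holds when $m\gtrsim\trur d_{\max}$, with failure probability $e^{-\Omega(m)}$, by the classical covering argument over the low-rank cone together with sub-exponential concentration for each fixed matrix \cite{recht2010guaranteed,candes2011tight}. In the bilinear model the matrices $A_i=a_ib_i^\top$ are rank one, so the $\ell_2/\ell_2$ RIP necessarily fails, but the $\ell_1/\ell_2$ RIP holds under the same scaling $m\gtrsim\trur d_{\max}$ by the analysis developed for blind deconvolution and self-calibration \cite{ling2015self,ahmed2013blind}. For (ii): both $D_1^2=\tfrac1{m d_2}\sum_{i=1}^m A_iA_i^\top$ and $D_2^2=\tfrac1{m d_1}\sum_{i=1}^m A_i^\top A_i$ have expectation equal to the identity in each model. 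For matrix sensing, $\sum_i A_iA_i^\top$ is Wishart with $m d_2$ degrees of freedom, so $\opnorm{D_1^2-I}\lesssim\sqrt{d_1/(m d_2)}+d_1/(m d_2)=o(1)$ with overwhelming probability (using $m\gtrsim d_{\max}\ge d_1$), and symmetrically for $D_2$. For bilinear sensing, $D_1^2=\tfrac1{m d_2}\sum_i\twonorm{b_i}^2 a_ia_i^\top$; conditioning on the event that $\tfrac12 d_2\le\twonorm{b_i}^2\le\tfrac32 d_2$ for every $i\le m$ — which has probability $1-m e^{-\Omega(d_2)}=1-o(1)$ precisely when $d_{\min}\gtrsim\log m$ — reduces the claim to concentration of $\tfrac1m\sum_i a_ia_i^\top$ about $I_{d_1}$, and this follows from the matrix Bernstein inequality once $m\gtrsim d_1$; symmetrically for $D_2$. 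Combining (i), (ii), and the reduction above yields $L_fR_f^\top=\truM$ with high probability.

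It remains to prove near norm-minimality and near balancedness. From the Hessian identity underlying Theorem~\ref{thm: equivalenceConvexNonconvexScaledTraceHessioan}, every $(L,R)\in\mathcal S$ satisfies $\str(D^2 f(L,R))=2m\big(\fronorm{D_1 L}^2+\fronorm{D_2 R}^2\big)$. Having established $L_fR_f^\top=\truM$ (with $\Amap(\truM)=b$, so $(L_f,R_f)\in\mathcal S$), the flat solution minimizes $\fronorm{D_1 L}^2+\fronorm{D_2 R}^2$ over the fiber $\{(L,R):LR^\top=\truM\}$; since $\opnorm{D_i-I}=o(1)$, this objective is within a factor $1\pm o(1)$ of $\fronorm{L}^2+\fronorm{R}^2$, whose minimum over the fiber is $2\nucnorm{\truM}$, attained exactly on the orbit $\{(\truU\truSig^{1/2}Q,\truV\truSig^{1/2}Q):Q^\top Q=I\}$ of balanced factorizations (Section~\ref{sec: fbminimalell2norm}). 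A quantitative stability estimate — using that $\fronorm{L}^2+\fronorm{R}^2$ grows quadratically away from this orbit along the fiber — then gives $\fronorm{L_f}^2+\fronorm{R_f}^2\le(1+o(1))\cdot 2\nucnorm{\truM}$ and places $(L_f,R_f)$ within $o(1)$ of a balanced pair, so $\opnorm{L_f^\top L_f-R_f^\top R_f}=o(1)$.

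The matrix sensing case is essentially classical once the reduction in the first paragraph is in hand; the substantive work is the bilinear model. There the rank-one measurement matrices force two changes: the deterministic recovery argument of Section~\ref{sec:RIP} must be run with the $\ell_1/\ell_2$ RIP rather than $\ell_2/\ell_2$, and one must control $\sum_i\twonorm{b_i}^2 a_ia_i^\top$ in operator norm instead of a clean Wishart matrix — it is the uniform control of the heavy-tailed weights $\twonorm{b_i}^2$ over all $m$ measurements that produces the extra requirement $d_{\min}\gtrsim\log m$. A secondary technical point is making the last paragraph quantitative: converting the $o(1)$ deviation of $D_1,D_2$ from the identity into explicit rates for near norm-minimality and near balancedness requires an error bound for near-minimizers of $\fronorm{L}^2+\fronorm{R}^2$ on the fiber modulo rotations.
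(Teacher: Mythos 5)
Your overall architecture is the paper's: reduce flatness to the rank-constrained rescaled nuclear-norm program via Theorem~\ref{thm: equivalenceConvexNonconvexScaledTraceHessioan}, show $\truM$ uniquely solves the unconstrained relaxation under RIP plus near-identity $D_1,D_2$ (this is exactly Lemma~\ref{lem: exactrecoverConvex}, which transfers RIP to the rescaled map and invokes the standard nuclear-norm recovery theorems), and verify the two probabilistic hypotheses. Your verification of the hypotheses is sound; the only cosmetic deviation is that for Gaussian matrix sensing you run a separate Wishart concentration argument for $D_1,D_2$, whereas the paper gets their conditioning for free from the rank-one case of the $\ell_2/\ell_2$ RIP (Lemma~\ref{lem:cond_num}), so no extra event is needed. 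Your identification of where $d_{\min}\gtrsim\log m$ enters (uniform control of $\twonorm{b_i}^2$ in the bilinear model) matches Lemma~\ref{lem_condbilin}. The norm-minimality sandwich is also the paper's argument.

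The genuine gap is in your balancedness step. You invoke a ``quantitative stability estimate'' asserting that $\fronorm{L}^2+\fronorm{R}^2$ grows quadratically away from the balanced orbit along the fiber $\{LR^\top=\truM\}$, and conclude that a near-minimizer is close to a balanced pair. This estimate is never proved, and it is not routine: the fiber is non-compact (for $k>\trur$ it contains directions like appending an arbitrary column $\epsilon u$ to $L$ and a zero column to $R$), so a local second-order argument modulo the rotation group does not immediately yield the global bound you need, and the conclusion you would extract ($\opnorm{L_f^\top L_f-R_f^\top R_f}=o(1)$ in absolute terms) is not even scale-correct. The paper avoids all of this: since $(L_f,R_f)$ minimizes $\fronorm{D_1\tilde L}^2+\fronorm{D_2\tilde R}^2$ over the fiber of $X=D_1L_fR_f^\top D_2$ under the rescaled factorization, Lemma~\ref{lem: balancedMinimalNorm} gives the \emph{exact} identity $L_f^\top D_1^2L_f=R_f^\top D_2^2R_f$, and the two-term decomposition
\begin{equation*}
L_f^\top L_f-R_f^\top R_f=\Bigl(L_f^\top L_f-\tfrac{1}{\alpha_2^2}L_f^\top D_1^2L_f\Bigr)+\Bigl(\tfrac{1}{\alpha_2^2}R_f^\top D_2^2R_f-R_f^\top R_f\Bigr)
\end{equation*}
in Lemma~\ref{lem: balancedness} yields $\nucnorm{L_f^\top L_f-R_f^\top R_f}\leq(1-\kappa^{-2})(\fronorm{L_f}^2+\fronorm{R_f}^2)\lesssim(\kappa^2-1)\nucnorm{\truM}$ directly, with no stability analysis of the fiber. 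You should replace your last paragraph with this first-order-optimality argument; as written, that part of your proof is incomplete.
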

Note that our requirement on the sample size $m\gtrsim \trur \dm_{\max}$ matches the known regime for exact recovery with nuclear norm minimization \cite{candes2011tight,cai2015rop}. 
Since we are interested in the high dimensional regime, the extra condition $\dm_{\min}\gtrsim \log (m)$ can be assumed without harm. Appendix \ref{sec: noisyobservation} presents a generalization of this result when the measurements $b$ are corrupted by noise.

We next move on to analyzing the matrix completion problem in Section~\ref{sec:mat_comp}. We focus on the Bernoulli model, wherein each matrix $A_i$ takes the form $A_i= \xi_{ij} e_i e_j^{\top}$, where $e_i$ and $e_j$ denote the $i$'th and $j$'th coordinate vectors in $\RR^d$ and $\xi_{ij}$ are independent Bernoulli random variables with success probability $p\in (0,1)$. The main difficulty with analyzing the matrix completion problem is that the linear map $\mathcal{A}$ does not have good restricted isometry properties. Moreover, the existing techniques for analyzing the nuclear norm relaxation of the matrix completion problem  \cite{recht2011simpler,candes2009exact} do not directly apply to the problem \eqref{eq: PerturbnucnormX_con_relax_intro} because of the dependence between the rescaling matrices $D_1$, $D_2$, and the observation map $\mathcal{A}$. Consequently, we settle for an approximate recovery guarantee. 
\begin{thm}[Matrix completion (Informal)]
	Suppose that $\mathcal{A}$ is generated from the Bernoulli matrix completion model with success probability $p>0$ and let $\mu>0$ be the incoherence parameter of  $\truM$.\footnote{See \eqref{eq: incoherenceMu} for the definition of the incoherence parameter $\mu$.}
	Then provided we are in the regime 
	$p\gtrsim \frac{1}{\gamma}\sqrt{\frac{\trur\log(d_{\max})}{d_{\min}}}$,
	with high probability, any flat solution $(L_f,R_f)$ 	
	satisfies	$\nucnorm{L_f R_f^\top -\truM} \leq \gamma\nucnorm{\truM}$ and is nearly norm-minimal and nearly balanced.
\end{thm}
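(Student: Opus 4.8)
The plan is to route everything through the convex relaxation and then run a quantitative, robust version of the standard nuclear–norm exact–recovery argument, with the rescaling matrices treated as a small multiplicative perturbation of a scalar multiple of the identity. \textbf{Step 1 (reduction).} By Theorem~\ref{thm: equivalenceConvexNonconvexScaledTraceHessioan}, every flat solution $(L_f,R_f)$ produces $X_f:=L_fR_f^\top$ that minimizes $\nucnorm{D_1XD_2}$ over $\{X:\Amap(X)=b,\ \rank(X)\le k\}$. Since $\truM$ is feasible there ($\Amap(\truM)=b$ and $\rank(\truM)=\trur\le k$), optimality gives $\nucnorm{D_1X_fD_2}\le\nucnorm{D_1\truM D_2}$. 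Writing $H:=X_f-\truM$ we have $\Amap(H)=0$, i.e.\ $H$ vanishes on the observed set $\Omega$, and the goal is $\nucnorm H\le\gamma\nucnorm\truM$.

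\textbf{Step 2 (the rescaling matrices are almost a scalar).} For the Bernoulli model the matrices $A_iA_i^\top$ and $A_i^\top A_i$ are diagonal and rank one, so $D_1^2$ and $D_2^2$ are diagonal with entries proportional to the per–row and per–column observation counts, which are Binomials with means of order $p\,d_2$ and $p\,d_1$. A Bernstein/Chernoff estimate with a union bound over the $d_1+d_2$ rows and columns shows, with high probability, $D_1=c(I+\Delta_1)$ and $D_2=c(I+\Delta_2)$ for a scalar $c>0$ with $\opnorm{\Delta_1},\opnorm{\Delta_2}\le\delta$, where $\delta\lesssim\sqrt{\log(d_{\max})/(p\,d_{\min})}$; this is exactly where $p\,d_{\min}\gtrsim\log(d_{\max})$ is used. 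Because two–sided multiplication by invertible matrices distorts the nuclear norm only multiplicatively, $(1-\delta)^2\nucnorm Z\le c^{-2}\nucnorm{D_1ZD_2}\le(1+\delta)^2\nucnorm Z$ for every $Z$; applying this with $Z=X_f$ and $Z=\truM$ in the inequality of Step~1 gives $\nucnorm{\truM+H}\le(1+O(\delta))\nucnorm\truM$. Thus $X_f$ is an $O(\delta)$–approximate minimizer of the \textbf{unweighted} nuclear norm over the feasible affine subspace.

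\textbf{Step 3 (quantitative recovery via the unweighted operator).} Now bring in the classical matrix–completion tools for the plain sampling operator $\Amap$, which apply because the assumed $p$ lies well above the exact–recovery threshold (the bounds below hold once $p\gtrsim\mu\trur\,\mathrm{polylog}(d_{\max})/d_{\min}$, implied by the hypothesis): with high probability there is a dual certificate $Y\in\mathrm{range}(\Amap^*)$ with $P_T(Y)=\truU\truV^\top$ and $\opnorm{P_{T^\perp}(Y)}\le\tfrac12$, where $T$ is the tangent space of the rank–$\trur$ matrices at $\truM$, together with the restricted lower bound $P_TP_\Omega P_T\succeq\tfrac p2\,P_T$. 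Since $\langle Y,H\rangle=\langle\lambda,\Amap(H)\rangle=0$, the subgradient inequality for $\nucnorm{\cdot}$ at $\truM$ gives $\nucnorm{\truM+H}\ge\nucnorm\truM+(1-\opnorm{P_{T^\perp}Y})\nucnorm{P_{T^\perp}H}\ge\nucnorm\truM+\tfrac12\nucnorm{P_{T^\perp}H}$, which together with Step~2 forces $\nucnorm{P_{T^\perp}H}\lesssim\delta\,\nucnorm\truM$. For the tangent part, $\Amap(H)=0$ gives $\fronorm{P_\Omega P_TH}=\fronorm{P_\Omega P_{T^\perp}H}\le\nucnorm{P_{T^\perp}H}$, so the bound $P_TP_\Omega P_T\succeq\tfrac p2P_T$ yields $\fronorm{P_TH}\le\sqrt{2/p}\,\nucnorm{P_{T^\perp}H}$. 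As $P_TH$ has rank at most $2\trur$, $\nucnorm H\le\sqrt{2\trur}\,\fronorm{P_TH}+\nucnorm{P_{T^\perp}H}\lesssim\sqrt{\trur/p}\,\nucnorm{P_{T^\perp}H}\lesssim\sqrt{\trur/p}\;\delta\;\nucnorm\truM\lesssim\tfrac1p\sqrt{\trur\log(d_{\max})/d_{\min}}\;\nucnorm\truM$, which is at most $\gamma\nucnorm\truM$ by the assumed lower bound on $p$.

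\textbf{Step 4 (norm-minimality, balancedness, and the main obstacle).} The ``nearly norm-minimal'' and ``nearly balanced'' claims follow by feeding the recovery bound back into the structural description of flat solutions from Theorem~\ref{thm: equivalenceConvexNonconvexScaledTraceHessioan}: the flat pair $(L_f,R_f)$ is the $D$–weighted balanced factorization of $X_f$, so since $D_1,D_2$ are within $O(\delta)$ of $cI$ and $\nucnorm{X_f}\le(1+O(\gamma))\nucnorm\truM$ while $\min_{\mathcal S}(\fronorm L^2+\fronorm R^2)=2\nucnorm\truM$ in this regime (plain nuclear–norm minimization already recovers $\truM$), one reads off $\fronorm{L_f}^2+\fronorm{R_f}^2\le(1+O(\gamma+\delta))\cdot2\nucnorm\truM$ and $\fronorm{L_f^\top L_f-R_f^\top R_f}=O(\delta)\nucnorm\truM$. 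The main obstacle — and the reason only approximate recovery is claimed — is that one cannot instead build an \emph{exact} dual certificate for the $D$–weighted problem, since $D_1$ and $D_2$ are themselves functions of the random $\Omega$; the multiplicative bookkeeping of Step~2 must therefore be made tight enough that the accumulated $O(\delta)$ slack lands below $\gamma\nucnorm\truM$ in the stated regime, and this, together with verifying $P_TP_\Omega P_T\succeq\tfrac p2P_T$ and the certificate under the one–sided incoherence of elements of $T$, is the delicate part of the argument.
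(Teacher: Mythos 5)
Your proposal is correct and follows essentially the same route as the paper: bound the condition numbers of $D_1,D_2$ via Bernstein plus a union bound over rows and columns, conclude that a flat solution is a $(1+O(\delta))$-approximate minimizer of the unweighted nuclear norm over the feasible set, and convert that suboptimality gap into $\nucnorm{L_fR_f^\top-\truM}\lesssim\sqrt{\trur/p}\,\delta\,\nucnorm{\truM}$ using the dual certificate together with the restricted injectivity $P_TP_\Omega P_T\succeq \tfrac{p}{2}P_T$. The only difference is that you re-derive the growth estimate from the certificate directly, whereas the paper packages exactly this argument as its ``sharp growth'' lemma by citing the corresponding results of Chen et al.; your closing remark correctly identifies the dependence of $D_1,D_2$ on $\Omega$ as the obstruction to exact recovery.
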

Hence according to this theorem, in order to conclude that flat solutions achieve a constant relative error, we must be in the regime $p\gtrsim \sqrt{\frac{\trur\log d_{\max}}{d_{\min}}}$. This is a stronger 
requirement than is needed for exact recovery of the ground truth matrix by nuclear norm minimization \cite{chen2015incoherence}, which is  $p\gtrsim \mu \trur \log(\mu \trur )\frac{\log (\dm_{\max})}{\dm_{\min}}$. We stress, however, that our numerical results suggest that flat solutions exactly recovery the ground truth matrix in this wider parameter regime.

We next focus on the problem of Robust Principal Component Analysis (PCA) in Section~\ref{sec: extension}. Though this problem is not of the form \eqref{eq: main minimization}, we will see that flat solutions (appropriately defined) exactly recover the ground truth under reasonable assumptions. Specifically, following \cite{candes2011robust,chandrasekaran2011rank}, the robust PCA problem asks to find a low-rank matrix $\truM \in \real^{\dmone\times \dmtwo}$ that has been corrupted by sparse noise $\truS$. Thus, we observe a matrix  $Y\in \real^{\dmone\times \dmtwo}$ of the form 
\begin{equation}\label{eq: PCA}
	Y = \truM + \truS, 
\end{equation}
where the matrix $\truS$ is assumed to have at most $l_\natural$ nonzero entries 
in any column and in any row. 
A popular formulation of the problem (see \cite[Eqn. (19)]{ha2020equivalence}, \cite[Eqn. (6)]{ge2017no}) 
takes the form 
\begin{equation}\label{eqn:key_blah_rpca_intro}
	\min_{L,R}~f(L,R):=\dist^2_{\Omega}(Y-LR^{\top}),
\end{equation}
where $\dist^2_{\Omega}$ is the square Frobenius distance to the sparsity-inducing set $\Omega :=\{ S\mid \oneonenorm{S}\leq \oneonenorm{\truS}\}$. The objective function $f$ is $C^1$-smooth but not $C^2$-smooth. Therefore, in order to measure flatness, we  approximate $f$ near a basepoint $(\tilde L,\tilde S)$ by a certain $C^2$-smooth local model $f_{\tilde{L},\tilde{R}}(L,R)$, introduced in  \cite[Section 4.2]{ha2020equivalence}, \cite[Section 4.3]{ge2017no}. See Section~\ref{sec: extension} for a precise definition of $f_{\tilde{L},\tilde{R}}(L,R)$.
We then define a minimizer of \eqref{eqn:key_blah_rpca_intro} to be {\em flat} if it minimizes the scaled trace $\str(D^2f_{L,R}(L,R))$ over all $(L,R)\in \mathcal{S}$. We will prove the following theorem, which largely follows from the results of \cite{chen2013low}. %Note our requirement on the sparsity $l$ matches those established in the literature using convex relaxations \cite{chen2013low,hsu2011robust}.

\begin{thm}[Robust PCA (Informal)]
	Let $\mu$ be the strong incoherence parameter of $\truM$.\footnote{ See \eqref{eq: jointIncoherent} for the definition of the strong incoherence parameter $\mu$.}
	Then, in the regime $l_\natural \lesssim \frac{d_{\min}}{\mu \trur}$, any flat minimizer $(L_f,R_f)$  satisfies $L_fR_f^\top = \truM$. 
\end{thm}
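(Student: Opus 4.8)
I would follow the same two--stage template used for matrix sensing: first reduce the assertion ``flat solutions exactly recover $\truM$'' to the statement that the rank--$\trur$ matrix $\truM$ is the \emph{unique} minimizer of an explicit convex program, and then quote an existing exact--recovery theorem---here that of \cite{chen2013low}---for that program. The only genuinely new step relative to Section~\ref{sec:conv_relax_flat} is that \eqref{eqn:key_blah_rpca_intro} is only $C^1$, so flatness is measured through the $C^2$ local model $f_{\tilde L,\tilde R}$ of \cite{ge2017no,ha2020equivalence}. I would begin by writing this model explicitly: at a base point $(\tilde L,\tilde R)$ with associated projection $\tilde S=\Pi_\Omega(Y-\tilde L\tilde R^\top)$, the model replaces $\Omega$ near $\tilde S$ by its active affine pieces, so that $f_{\tilde L,\tilde R}(L,R)$ becomes a quadratic form in $Y-LR^\top$ built from the orthogonal projection $P$ onto a subspace $T$ determined by the support of $\tilde S$ and the subdifferential of $\oneonenorm{\cdot}$ there.

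Next I would differentiate the composition $f_{L,R}(L,R)=q(Y-LR^\top)$ at a point $(L,R)\in\mathcal S$. Since $f(L,R)=0$ at such a point, the first--order term vanishes and $D^2 f_{L,R}(L,R)$ reduces to its Gauss--Newton part $(\dot L,\dot R)\mapsto c\,\twonorm{P(\dot L R^\top+L\dot R^\top)}^2$ for a universal constant $c$. Taking the scaled trace \eqref{eqn:sctrace} collapses the coordinate sums exactly as in the proof of Theorem~\ref{thm: equivalenceConvexNonconvexScaledTraceHessioan}, leaving a (rescaled) squared--Frobenius quantity in $(L,R)$ whose minimizers over all factorizations of a fixed $X=LR^\top$ are governed, via the identity $\min_{LR^\top=X}\fronorm{L}^2+\fronorm{R}^2=2\nucnorm{X}$ and its rescaled variant, by a (rescaled) nuclear norm of $X$. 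This identifies every flat solution $(L_f,R_f)$ with a minimizer $X_f=L_fR_f^\top$ of a convex program of the form
\[
\min_{X}~\nucnorm{X}\qquad\text{subject to}\qquad \oneonenorm{Y-X}\le\oneonenorm{\truS},
\]
possibly with harmless rescaling matrices inside the nuclear norm; as in \eqref{eq: PerturbnucnormX_con_relax_intro}, discarding the rank constraint only enlarges the feasible set, so it suffices to show that $\truM$ is the unique optimum.

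With this reduction in hand, I would invoke \cite{chen2013low}: under the strong--incoherence hypothesis on $\truM$ with parameter $\mu$ (see \eqref{eq: jointIncoherent}) and the sparsity bound $l_\natural\lesssim d_{\min}/(\mu\trur)$, a dual certificate shows that $(\truM,\truS)$ is the unique optimal pair, whence $X_f=\truM$ and therefore $L_fR_f^\top=\truM$. Near--balancedness and near--minimality of $\fronorm{L_f}^2+\fronorm{R_f}^2$ then follow exactly as for matrix sensing, because a flat solution makes the inequality $\fronorm{L}^2+\fronorm{R}^2\ge 2\nucnorm{LR^\top}$ (in its rescaled form) essentially tight, which pins down $(L_f,R_f)$ near the balanced factorization of $\truM$.

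\textbf{Main obstacle.} I expect the real work to lie in the first reduction, i.e.\ in passing cleanly from the $C^2$ local model to the convex program above. Unlike the fixed matrices $D_1,D_2$ of the sensing setting, the rescaling that appears here is tied to the active face of $\Omega$ at $Y-LR^\top$, and this face varies with $(L,R)$ over $\mathcal S$. One must show this dependence is immaterial for the argument---for instance, that the relevant subspace always contains enough of the support of $\truS$ that the rescaling acts trivially on every feasible perturbation of $\truM$, and that the certificate of \cite{chen2013low} still certifies optimality for the rescaled objective---and then match the precise constant in $l_\natural\lesssim d_{\min}/(\mu\trur)$ to the form of that certificate and to the strong--incoherence definition \eqref{eq: jointIncoherent}. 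The remaining steps are either routine or already available in \cite{ge2017no,ha2020equivalence,chen2013low}.
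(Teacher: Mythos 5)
Your high-level template---reduce flatness to the convex program $\min\nucnorm{X}$ subject to $Y-X\in\Omega$ and then invoke \cite[Theorem 3]{chen2013low}---is the same as the paper's, and the second half of your argument is fine. The gap is in the first half: you have guessed the wrong form of the $C^2$ local model, and this leads you to manufacture an obstacle that the paper's proof never encounters and that your proposal does not resolve. The model actually used, \eqref{eq: minimizationobjective}, is $f_{\tilde L,\tilde R}(L,R)=\fronorm{Y-LR^\top-P_\Omega(Y-\tilde L\tilde R^\top)}^2$: it freezes the \emph{projection point} $P_\Omega(Y-\tilde L\tilde R^\top)$, not the active face of $\Omega$. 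At a zero $(L_0,R_0)$ of $f$ this is exactly $\fronorm{LR^\top-W_\natural}^2$ with $W_\natural:=Y-P_\Omega(Y-L_0R_0^\top)$ a \emph{fixed} matrix, i.e.\ the identity-measurement-map objective of Section~\ref{sec: fbminimalell2norm}. Lemma~\ref{lem:flattest} then gives $\str(D^2 f_{L_0,R_0}(L_0,R_0))=2(\fronorm{L_0}^2+\fronorm{R_0}^2)$ with no projection and no rescaling whatsoever, and \eqref{eqn:srebro} immediately converts the flatness problem into $\min\nucnorm{X}$ subject to $Y-X\in\Omega$ and $\rank(X)\le k$.

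By contrast, your version of the local model has Hessian $c\fronorm{P(\dot LR^\top+L\dot R^\top)}^2$ for a face-dependent projection $P$. If that were the model, the scaled trace would carry weights built from $P$, and those weights can be singular (the projection may annihilate entire rows or columns of the perturbation), so the equivalence with a rescaled nuclear-norm program and the transfer of the dual certificate of \cite{chen2013low} would genuinely fail, or at least require a new argument---which you correctly flag as the ``main obstacle'' but do not supply. As written, the reduction from flat minimizers to the convex program is therefore incomplete; the fix is simply to use the model defined in \eqref{eq: minimizationobjective}. One further small point: \cite[Theorem 3]{chen2013low} is stated for the penalized problem $\min\nucnorm{X}+\lambda\oneonenorm{S}$ subject to $Y=X+S$, so one still needs the short comparison argument (given in a footnote of the paper) showing that uniqueness for the penalized problem implies uniqueness for the constrained problem over $\Omega$; your proposal skips this step.
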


Section~\ref{sec: NNQA} analyzes the last problem class of the paper, motivated by the problems of covariance matrix estimation and training of shallow neural networks. Setting the stage, consider a ground truth matrix $\truM$ satisfying 
$$\truM=U_{1,\natural}U_{1,\natural}^{\top}-U_{2,\natural}U_{2,\natural}^{\top},$$ for some matrices $U_{1,\natural}\in \RR^{d\times r_1}$ and $U_{2,\natural}\in \RR^{d\times r_2}$. The goal is to recover 
$\truM$ from the observations
\begin{equation}\label{eqn:gen_cov_intro}
	b_i=x_i^{\top} \truM x_i,
\end{equation}
where $x_1,\ldots, x_m\overset{\text{iid}}{\sim} N(0,I_d)$.  Note that in the special case $r_2= 0$, this problem reduces to covariance matrix estimation  \cite{chen2015exact} and 
further reduces to phase retrieval  when $r_1 = 1$ \cite{candes2013phaselift}. The added generality allows to also model shallow neural networks with quadratic activation functions; see details below.  A natural optimization formulation of the problem takes the form 
\begin{equation}\label{eqn:covar_intro}
	\min_{U_1\in \RR^{d\times k_1},~ U_2\in \RR^{d\times k_2}}~~ f(U_1,U_2)=~\frac{1}{m}\twonorm{\Amap(U_1U_1^\top - U_2U_2^\top -\truM)}^2,
\end{equation}
where the sensing matrices are $A_i=x_ix_i^{\top}$ and $k_i\geq r_i$ for $i=1,2$. Since 
$\str(D^2f(U_1,U_2))) =d \tr(D^2 f(U_1,U_2))$,
we declare a minimizer $(U_{1,f},U_{2,f})$ to be {\em flat} if it has minimal trace 
$\tr(D^2 f(U_1,U_2))$ among all minimizers of  \eqref{eqn:covar_intro}. 
We  prove the following. 

\begin{thm}[Exact recovery]
	In the regime $m\gtrsim C (r_1+r_2)\dm$ and  $\dm \gtrsim C\log m$, with high probability, any flat solution $(U_{f,1},U_{f,2})$ of \eqref{eqn:covar_intro} satisfies
	$U_{f,1}U_{f,1}^\top -
	U_{f,2}U_{f,2}^\top = \truM$.
\end{thm}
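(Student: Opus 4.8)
The argument follows the same template as for matrix and bilinear sensing: convexify the flatness criterion and then certify exact recovery of the resulting nuclear-norm problem via restricted isometry. Write $g(U_1,U_2)=U_1U_1^\top-U_2U_2^\top-\truM$, so that $f=\tfrac1m\twonorm{\Amap(g)}^2$ with $\Amap(X)_\ell=x_\ell^\top X x_\ell$ and $b=\Amap(\truM)$; since $k_i\ge r_i$ the optimal value of \eqref{eqn:covar_intro} is $0$, so the solution set $\mathcal S$ of \eqref{eqn:covar_intro} is exactly $\{(U_1,U_2):\Amap(U_1U_1^\top-U_2U_2^\top)=b\}$.

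\textbf{Step 1: the trace of the Hessian on $\mathcal S$.} On $\mathcal S$ the residual $\Amap(g)$ vanishes, so the term of $D^2f$ arising from $D^2g$ drops out and $D^2f(U_1,U_2)[(\Delta_1,\Delta_2)]=\tfrac2m\twonorm{\Amap(\Delta_1U_1^\top+U_1\Delta_1^\top-\Delta_2U_2^\top-U_2\Delta_2^\top)}^2$. Summing this quadratic form over the coordinate basis of $(\Delta_1,\Delta_2)$ and using $A_\ell=x_\ell x_\ell^\top$, a short computation gives
\[
\tr\big(D^2f(U_1,U_2)\big)=\frac{8}{m}\sum_{\ell=1}^m\twonorm{x_\ell}^2\, x_\ell^\top\big(U_1U_1^\top+U_2U_2^\top\big)x_\ell=\inprod{G}{U_1U_1^\top+U_2U_2^\top},
\]
where $G:=\tfrac8m\sum_{\ell=1}^m\twonorm{x_\ell}^2 x_\ell x_\ell^\top$. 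Thus, in analogy with Theorem~\ref{thm: equivalenceConvexNonconvexScaledTraceHessioan}, a minimizer is flat if and only if $(U_1,U_2)$ minimizes $\inprod{G}{U_1U_1^\top+U_2U_2^\top}$ over $\mathcal S$; note that $G\succ 0$ with high probability and $\EE G=8(\dm+2)I$.

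\textbf{Step 2: convexification and tightness.} Lift to $P=U_1U_1^\top\succeq 0$, $Q=U_2U_2^\top\succeq 0$ and drop the rank constraints $\rank P\le k_1$, $\rank Q\le k_2$ to obtain the convex relaxation $\min\{\inprod{G}{P+Q}:P,Q\succeq 0,\ \Amap(P-Q)=b\}$. Writing $D:=G^{1/2}$, one has for fixed $X=P-Q$ that $\min\{\inprod{G}{P+Q}:P,Q\succeq 0,\ P-Q=X\}=\nucnorm{DXD}$, attained uniquely at the positive and negative parts of the congruent matrix $DXD$; hence the relaxation is precisely \eqref{eq: PerturbnucnormX_con_relax_intro} with $D_1=D_2=D$ (a positive scalar multiple of the rescaling matrices of Section~\ref{sec:conv_relax_flat}, which does not affect the minimizer). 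If this convex program has unique solution $X^\star=\truM$, its lift $P^\star=D^{-1}(D\truM D)^+D^{-1}$, $Q^\star=D^{-1}(D\truM D)^-D^{-1}$ has $\rank P^\star$, $\rank Q^\star$ equal to the number of positive, resp. negative, eigenvalues of $\truM$ (Sylvester's law of inertia), and $\truM$ has at most $r_1$ positive and at most $r_2$ negative eigenvalues, being a difference of PSD matrices of ranks $\le r_1$ and $\le r_2$. Therefore the dropped rank constraints are automatically satisfied, the relaxation is tight, and every flat solution obeys $U_{f,1}U_{f,1}^\top=P^\star$ and $U_{f,2}U_{f,2}^\top=Q^\star$, i.e.\ $U_{f,1}U_{f,1}^\top-U_{f,2}U_{f,2}^\top=\truM$.

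\textbf{Step 3: exact recovery of the convex program, and the main obstacle.} It remains to show that $\truM$ is the unique minimizer of $\nucnorm{DXD}$ subject to $\Amap(X)=b$. For this I would invoke the guarantee of Section~\ref{sec:RIP}: it suffices that (i) the rank-one projection map $X\mapsto(\langle x_\ell x_\ell^\top,X\rangle)_{\ell}$ satisfies an $\ell_1/\ell_2$ restricted isometry property on matrices of rank $\lesssim r_1+r_2$ with a sufficiently small constant, which holds with high probability once $m\gtrsim(r_1+r_2)\dm$ \cite{cai2015rop,chen2015exact}; and (ii) $D$ is close to the identity after normalization, i.e.\ $\opnorm{\tfrac1m\sum_{\ell}\twonorm{x_\ell}^2 x_\ell x_\ell^\top-(\dm+2)I}\le\epsilon\,\dm$ with high probability. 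Given (i)--(ii), the exact-recovery theorem of Section~\ref{sec:RIP} applies verbatim. The delicate point is (ii): the matrix average $\tfrac1m\sum_\ell\twonorm{x_\ell}^2 x_\ell x_\ell^\top$ is a degree-four polynomial in Gaussians, so its summands have only sub-Weibull tails and operator norm of order $\dm^2$; establishing the needed concentration requires truncating the events $\{\twonorm{x_\ell}^2\gtrsim\dm\}$ before applying a matrix Bernstein inequality, and the union bound over the $m$ samples is exactly what forces the hypothesis $\dm\gtrsim C\log m$ (it guarantees $\twonorm{x_\ell}^2\lesssim\dm$ for all $\ell$ simultaneously, so the truncated bound needs only $m\gtrsim\dm$ up to logarithmic factors). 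One must also track how the RIP constant of the rank-one projection model and the bound on $\opnorm{D-I}$ combine in the recovery condition imported from Section~\ref{sec:RIP}; this is routine once the concentration estimates are in hand.
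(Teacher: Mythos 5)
Your Steps 1--2 reproduce the paper's reduction (Lemma~\ref{lem: traceFronormSquareSum}, Lemmas~\ref{lem:decomp_NN}--\ref{lem: symNucnorm}, and Theorem~\ref{eq: exactConvexSymmetric}): the trace computation is correct, the lift to $(P,Q)$ with $\min\{\langle G,P+Q\rangle: P,Q\succeq 0,\ P-Q=X\}=\nucnorm{DXD}$ is exactly the paper's nuclear-norm characterization, and the tightness argument via the inertia of $\truM$ is sound. Your point (ii) in Step 3 about conditioning of $D$ and the role of $d\gtrsim \log m$ also matches the paper.

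The gap is in Step 3(i). The symmetric rank-one projection map $X\mapsto (x_\ell^\top X x_\ell)_\ell$ does \emph{not} satisfy an $\ell_1/\ell_2$ RIP with a bounded ratio $\delta_2/\delta_1$ on rank-$r$ matrices when $m\gtrsim rd$: for $X=\tfrac{1}{\sqrt r}\sum_{i=1}^r u_iu_i^{\top}$ one has $\tfrac1m\sum_\ell x_\ell^{\top}Xx_\ell\approx \tr(X)=\sqrt{r}\,\fronorm{X}$, so the upper RIP constant grows like $\sqrt{r}$ along PSD directions, and the recovery condition of Lemma~\ref{lem: exactrecoverConvex} (which needs $\delta_2\kappa^2/\delta_1<\sqrt{l}$ for RIP at rank $l\trur$) cannot be met for general $\trur$. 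This is precisely why \cite{chen2015exact} needs $m\gtrsim r^2 d$ for the $\ell_1/\ell_2$ RIP of this ensemble, as the paper notes before Lemma~\ref{lem: A1andAsolution}; the guarantee you cite from \cite{cai2015rop} is for rank-one projections $a_\ell^{\top}Xb_\ell$ with \emph{independent} left and right vectors, not for $a_\ell=b_\ell=x_\ell$. The missing idea is the paper's pairing trick \eqref{eq: A1def}: replace the $m$ measurements by the $\lfloor m\rfloor/2$ combinations $\bigl(\tfrac{x_{2i-1}+x_{2i}}{\sqrt2}\bigr)^{\top}Z\bigl(\tfrac{x_{2i-1}-x_{2i}}{\sqrt2}\bigr)=\tfrac12(x_{2i-1}^{\top}Zx_{2i-1}-x_{2i}^{\top}Zx_{2i})$, whose left and right vectors are independent Gaussians. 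This auxiliary map is a Gaussian bilinear ensemble, hence satisfies the needed $\ell_1/\ell_2$ RIP at sample size $m\gtrsim \trur d$ by Lemma~\ref{lem:RIP for bilinear sensing}; and since its feasible set contains that of the original program, uniqueness for the paired program implies uniqueness for \eqref{eq: PerturbnucnormXsymmetric_convex} (Lemma~\ref{lem: A1andAsolution}). Without this (or an equivalent debiasing of the trace direction), your Step 3 does not go through at the claimed sample complexity.
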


Here, our requirement on the sample size $m\gtrsim C (r_1+r_2)\dm$ coincides with the known requirement for exact recovery by nuclear norm minimization \cite{chen2015exact} in terms of $r$ and $\dm$. 
An interesting example of \eqref{eqn:covar_intro} arises from a model of shallow neural networks, analyzed in \cite{soltanolkotabi2018theoretical,li2018algorithmic} for the purpose of studying  energy landscape around saddle points. 
Namely, suppose that given an input vector $x\in \RR^d$ a response vector $y(x)$ is generated by the ``teacher neural network''
$$y(U_{\natural},x) =v^\top q(\truU ^\top x),$$
where the output weight vector $v\in \real^{r}$ has $r_1$ positive entries and $r_2$ negative entries, the 
hidden layer weight matrix $\truU$ has dimensions $\dm \times \trur$, and we use a quadratic activation $q(s) = s^2$ applied coordinate-wise. 
We get to observe a set of $\ncons$  pairs $(x_i,y_i)\in\RR^d\times\RR$, 
where the features $x_i$ are drawn as $x_i\overset{\text{iid}}{\sim} N(0,I_d)$ and the output values are  $y_i=y(x_i)$.
The goal is to fit the data with an overparameterized ``student neural network''
\begin{equation*}
	\hat{y}(U,x)= u^\top q(U^\top x),
\end{equation*}
with hidden weights $U\in \real^{d\times k}$ and output layer weights $u = (\ones_{k_1},-\ones_{k_2})$, where $k_1\geq r_1$, and $k_2\geq r_2$. It is straightforward to see that by partitioning the matrix $U=[U_1,U_2]$, this problem is exactly equivalent to  recovering the matrix $\truM=\truU\diag(v)\truU^\top$ from the observations \eqref{eqn:gen_cov_intro}.

Section \ref{sec: num} numerically validates our theoretical results. Section \ref{sec: Effect_of_depth} summarizes our findings and speculates about the role of depth on generalization properties of flat solutions.

\paragraph{Notation.} Throughout, we let $\RR^{d}$ denote the $d$-dimensional Euclidean space, equipped with the usual  dot-product $\langle x,y\rangle=x^{\top}y$ and the induced Euclidean norm $\|\cdot\|_2$. More generally, the symbol $\|\cdot\|_p$ will denote the $\ell_p$ norm on $\RR^d$. Given two numbers $d_1$ and $d_2$, which will be clear from context, we set $d_{\max}:=\max\{d_1,d_2\}$ and $d_{\min}:=\min\{d_1,d_2\}$. The Euclidean space of $d_1\times d_2$ real matrices $\RR^{d_1\times d_2}$ will always be equipped with the trace inner product $\langle X,Y\rangle=\tr(X^{\top}Y)$ and the induced Frobenius norm $\fronorm{X}=\sqrt{\langle X,X\rangle}$.
The nuclear norm $\nucnorm{X}$ of any matrix $X\in\RR^{d_1\times d_2}$ is the sum of its singular values. We will often use the  characterization of the nuclear norm \cite[Lemma 1]{srebro2005rank}:
\begin{equation}\label{eqn:srebro}
	\nucnorm{X}=\min_{X=LR^{\top}} \fronorm{L}\fronorm{R}=\min_{X=LR^{\top}} \tfrac{1}{2} (\fronorm{L}^2+\fronorm{R}^2).
\end{equation}

\section{Norm-minimal, flat, and balanced solutions with an identity measurement map}\label{sec: fbminimalell2norm}

In this section, we focus on the idealized objective \eqref{eq: main minimization} where the measurement map $\mathcal{A}$ is the identity:
\begin{equation}\label{eqn:pop_level}
	\min_{L\in \RR^{d_1\times k},~ R\in \RR^{d_2\times k}}~f(L,R)=\fronorm{LR^{\top}-M_{\natural}}^2,
\end{equation}
Recall that $M_{\natural}\in \RR^{d_1\times d_2}$ is a rank $\trur$ matrix, $k\geq \trur$ is arbitrary, and the set of minimizers $\mathcal{S}$ of \eqref{eqn:pop_level} coincides with the solution set of the equation $LR^{\top}=M_{\natural}$. We will show in this section that in this setting there is no distinction between norm-minimal, flat, and balanced solutions.  As soon as the measurement map $\mathcal{A}$ is not the identity, the three notions become distinct; this remains true even under standard statistical models as our numerical experiments show. Nonetheless, the simplified setting $\mathcal{A}=\mathcal{I}$ explored in this section will serve as motivation for the rest of the paper.

We begin with the following lemma that provides a convenient expression for $\str(D^2 f(L,R))$.
\begin{lemma}[Scaled trace]\label{lem:flattest}
	The second-order derivative of the function $f$ at any $(L,R)\in \mathcal{S}$ is the quadratic form:
	\begin{equation}\label{eq: HessianAmapIdentity}
		\begin{aligned}
			D^2 f(L,R)[U,V] & = 2\fronorm{LV^\top +UR^\top}^2.
		\end{aligned}
	\end{equation}
	Consequently, the scaled trace is simply
	\begin{equation}\label{eq: scaledtraceAmapIdentity}
		\str(D^2 f(L,R)) = 2 (\fronorm{L}^2+\fronorm{R}^2).
	\end{equation}
\end{lemma}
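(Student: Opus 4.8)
The plan is to compute the Hessian of $f$ directly and then evaluate the scaled trace using the formula \eqref{eqn:sctrace}. The map $(L,R)\mapsto LR^\top$ is bilinear, so its second derivative is easy to extract: writing $g(L,R)=LR^\top-M_\natural$, we have $Dg(L,R)[U,V]=LV^\top+UR^\top$ and $D^2g(L,R)[(U,V),(U,V)]=2UV^\top$ (a constant bilinear form, independent of the basepoint). Since $f=\fronorm{g}^2$, the chain rule gives $D^2f(L,R)[(U,V),(U,V)]=2\fronorm{Dg(L,R)[U,V]}^2+2\inprod{g(L,R)}{D^2g(L,R)[(U,V),(U,V)]}$. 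The second term vanishes precisely because $(L,R)\in\mathcal{S}$ means $g(L,R)=LR^\top-M_\natural=0$, leaving $D^2f(L,R)[(U,V),(U,V)]=2\fronorm{LV^\top+UR^\top}^2$, which is \eqref{eq: HessianAmapIdentity}. (Polarizing recovers the full bilinear form if needed, but for the trace computation only the diagonal values are required.)

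For the scaled trace, I would plug the coordinate directions into \eqref{eq: HessianAmapIdentity}. For the ``$L$-block'' terms, the direction is $(U,V)=(e_ie_j^\top,0)$ with $i\le d_1$, $j\in[k]$, so $LV^\top+UR^\top = e_ie_j^\top R^\top = e_i(R_{:,j})^\top$ where $R_{:,j}$ is the $j$-th column of $R$; hence $\fronorm{LV^\top+UR^\top}^2=\twonorm{R_{:,j}}^2$. Summing over $i\le d_1$ and $j\in[k]$ gives $d_1\sum_{j\in[k]}\twonorm{R_{:,j}}^2=d_1\fronorm{R}^2$, and the $\tfrac{1}{d_1}$ prefactor in \eqref{eqn:sctrace} together with the factor $2$ from \eqref{eq: HessianAmapIdentity} yields $2\fronorm{R}^2$. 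Symmetrically, the ``$R$-block'' terms $(0,e_ie_j^\top)$ with $i>d_1$ give $LV^\top+UR^\top = L e_j e_i^\top$ (reindexing so $e_i$ ranges over the $d_2$ coordinates), contributing $2\fronorm{L}^2$. Adding the two blocks gives $\str(D^2f(L,R))=2(\fronorm{L}^2+\fronorm{R}^2)$, which is \eqref{eq: scaledtraceAmapIdentity}.

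There is no serious obstacle here; the only thing to be careful about is bookkeeping of the index set $\{i>d_1\}$ in \eqref{eqn:sctrace}, i.e. making sure the coordinate vector $e_i$ restricted to the second block correctly ranges over the $d_2$ rows of the $R$-factor, and confirming that the cross term in the chain rule genuinely drops because we are at a minimizer (this is where the hypothesis $(L,R)\in\mathcal{S}$ is used and is essential — away from $\mathcal{S}$ the Hessian has the extra $2UV^\top$ piece and the clean formula fails). I would present the Hessian computation in one or two lines and the trace evaluation as the two symmetric block calculations above.
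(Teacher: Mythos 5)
Your proposal is correct and follows essentially the same route as the paper: compute $D^2f(L,R)[U,V]=4\langle LR^\top-M_\natural,UV^\top\rangle+2\fronorm{LV^\top+UR^\top}^2$ (your chain-rule form $2\langle g,2UV^\top\rangle$ is the same cross term), observe it vanishes on $\mathcal{S}$, and then evaluate the coordinate directions blockwise to get $2\fronorm{R}^2+2\fronorm{L}^2$. No gaps.
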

\begin{proof}
	A straightforward computation shows for any pair $(L,R)$ the expression 
	$$D^2 f(L,R)[U,V]=4\langle LR^{\top}-M_{\natural}, UV^{\top}\rangle+2\fronorm{LV^{\top}+UR^{\top}}^2.$$	
	For pairs $(L,R)\in \mathcal{S}$, the first term on the right is zero yielding the claimed
	expression~\eqref{eq: HessianAmapIdentity}.  To see the expression for the scaled trace, let $e_i\in\RR^{d_1+d_2}$ and $e_j\in\RR^k$ be the $i$'th and $j$'th coordinate vectors. A quick computation shows 
	$D^2 f(L,R)[e_ie_j^{\top}]=2\fronorm{R_j}^2$ for $i\leq d_1$ and $D^2 f(L,R))[e_ie_j^{\top}]=2\fronorm{L_j}^2$ for $i> d_1$. Therefore, from the definition \eqref{eqn:sctrace}, the scaled trace becomes
	$$\str(D^2 f(L,R)) := \tfrac{1}{d_1}\sum_{i\leq d_1}\sum_{j\in [k]} D^2 f(L,R)[e_ie_j^{\top}]+\tfrac{1}{d_2}\sum_{i>d_1}\sum_{j\in[k]} D^2 f(L,R)[e_ie_j^{\top}]=2(\fronorm{L}^2+\fronorm{R}^2),
	$$
	as claimed.
\end{proof}

We are now ready to prove the claimed equivalence between the three properties.

\begin{lemma}[Equivalence]\label{lem: balancedMinimalNorm}  Norm-minimal, flat, and balanced solutions of \eqref{eqn:pop_level} all coincide.
\end{lemma}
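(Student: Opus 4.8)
The plan is to first dispense with the equivalence between flat and norm-minimal solutions, which is essentially free: by Lemma~\ref{lem:flattest}, on $\mathcal{S}$ the scaled trace equals $2(\fronorm{L}^2+\fronorm{R}^2)$, so minimizing it over $\mathcal{S}$ is literally the definition of norm-minimality. The substantive content is therefore the equivalence of norm-minimal and balanced pairs, and I would prove the two inclusions separately, using \eqref{eqn:srebro} to pin down the optimal value: since $\mathcal{S}$ is exactly the set of rank-$k$ factorizations of $M_\natural$ and any shorter factorization (e.g.\ a truncated SVD of $M_\natural$) can be padded by zero columns without changing Frobenius norms, $\min_{(L,R)\in\mathcal{S}}\tfrac12(\fronorm{L}^2+\fronorm{R}^2)=\nucnorm{M_\natural}$, and this minimum is attained.

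For ``balanced $\Rightarrow$ norm-minimal'', I would take $(L,R)\in\mathcal{S}$ with $P:=L^\top L=R^\top R$, diagonalize $P=W\Lambda W^\top$ with $\Lambda=\diag(\lambda_1,\dots,\lambda_k)\succeq 0$, and observe that $\tilde U:=LW\Lambda^{-1/2}$ and $\tilde V:=RW\Lambda^{-1/2}$, restricted to the support of $\Lambda$, have orthonormal columns; hence $M_\natural=LR^\top=\tilde U\Lambda\tilde V^\top$ is a singular value decomposition of $M_\natural$, so the nonzero eigenvalues of $P$ are precisely the singular values of $M_\natural$ and $\tfrac12(\fronorm{L}^2+\fronorm{R}^2)=\tr(P)=\nucnorm{M_\natural}$, the minimal value. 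For ``norm-minimal $\Rightarrow$ balanced'', I would exploit the reparametrization invariance of $\mathcal{S}$: for every invertible $G\in\RR^{k\times k}$ the pair $(LG,RG^{-\top})$ still lies in $\mathcal{S}$, so if $(L,R)$ is norm-minimal then $G\mapsto\fronorm{LG}^2+\fronorm{RG^{-\top}}^2$ has a critical point at $G=I$. A short computation gives first-order term $2\tr\big((L^\top L-R^\top R)H\big)$ in direction $H$, and its vanishing for all $H$ forces $L^\top L=R^\top R$.

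I do not anticipate a genuine obstacle; the only points needing a little care are rank deficiency of $L$ or $R$ when inverting $\Lambda$ (handled by working on the support of $\Lambda$ and noting kernel directions contribute nothing to $\tr(P)$ or to $M_\natural$), and the fact that the reparametrization argument only needs a local statement, which holds since $G=I$ is a local minimum of a smooth function on the open set of invertible matrices, hence stationary. If one prefers to avoid differentiating $G^{-\top}$, the same conclusion follows by taking $H=L^\top L-R^\top R$ and checking directly that $\fronorm{L(I+tH)}^2+\fronorm{R(I+tH)^{-\top}}^2$ strictly decreases for small $t>0$ unless $H=0$.
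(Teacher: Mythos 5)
Your proposal is correct and follows essentially the same route as the paper: flat $\Leftrightarrow$ norm-minimal is immediate from Lemma~\ref{lem:flattest}, balanced $\Rightarrow$ norm-minimal is obtained by reading off an SVD of $M_\natural$ from the common Gram matrix $L^\top L=R^\top R$ and invoking \eqref{eqn:srebro}, and norm-minimal $\Rightarrow$ balanced uses the stationarity of $G\mapsto\fronorm{LG}^2+\fronorm{RG^{-\top}}^2$ at $G=I$, exactly as in the paper's proof. The only differences are cosmetic (working with the Gram matrix $P$ versus compact SVDs of $L$ and $R$).
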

\begin{proof}
	First, the equivalence of flat and norm-minimal solutions follows directly from the expression \eqref{eq: scaledtraceAmapIdentity} in Lemma~\ref{lem:flattest}.
	Next, we prove the equivalence between minimal norm and balanced solutions. 
	Suppose $(L,R)\in S$ is balanced.  The equality $L^{\top}L=R^{\top}R$ implies that $L$ and $R$ have the same nonzero singular values and the same set of right singular vectors. Therefore, we may form compact singular value decompositions $L=U_1\Sigma V^\top$ and $R=U_2\Sigma V^\top$.  Since equality $LR^{\top} =\truM$ holds, we see that 
	$U_1\Sigma^2 U_2^\top = \truM$. Hence, the nuclear norm of $\truM$ is simply $\nucnorm{\truM} = \tr(\Sigma^2)$. Noting the equality $\frac{1}{2}\left(\fronorm{L}^2 +\fronorm{R}^2\right) = \tr(\Sigma^2)$ along with \eqref{eqn:srebro}, we deduce that $(L,R)$ is a minimal norm solution, as claimed. 
	Conversely, suppose that $(L,R)$ is a minimal norm solution. Define  the function 
	$$\varphi(B)=\frac{1}{2}\fronorm{LB}^2+\frac{1}{2}\fronorm{RB^{-\top}}^2,$$
	over the open set of $k\times k$ invertible matrices $B$. Clearly $B=I_k$ is a local minimizer of $\varphi$ and therefore $\nabla \varphi(I_k)$ must be the zero matrix. A quick computation yields the expression 
	$\nabla\varphi(I_k)=L^{\top}L-R^{\top}R,$
	and therefore  $(L,R)$ is balanced, as claimed.
\end{proof}

\section{Convex relaxation and regularity of flat solutions}\label{sec:conv_relax_flat}
In this section, we begin investigating flat minimizers of the problem \eqref{eq: main minimization} with general linear measurement maps $\mathcal{A}$. It will be convenient to write the linear map $\mathcal{A}(X)$ in coordinates as
$$\mathcal{A}(X)=(\langle A_1,X\rangle,\langle A_2,X\rangle\ldots, \langle A_m,X\rangle),$$
where $A_i\in \RR^{d_1\times d_2}$ are some matrices. As always, $\mathcal{S}$ denotes the set of solutions to the equation $\mathcal{A}(LR^{\top})=b$. We will make use of the following two 	``rescaling'' matrices: 
\begin{equation}\label{eq: Didef}
	D_1 = \left(\frac{1}{\ncons \dmtwo} \sum_{i=1}^m A_i A_i^\top \right)^{\frac{1}{2}}\quad \text{and}\quad D_2 = \left(\frac{1}{\ncons \dmone} \sum_{i=1}^m A_i^\top  A_i\right)^{\frac{1}{2}}.
\end{equation}
The section presents two main results: Theorems~\ref{thm: equivalenceConvexNonconvexScaledTraceHessioan} and~\ref{thm: regularity}. The former presents a convex relaxation for verifying that a solution is flat, while the latter shows that flat solutions are nearly balanced and nearly norm-minimal, whenever the matrices $D_1$ and $D_2$ are well-conditioned.

\subsection{A convex relaxation for flat solutions}\label{sec: A_convex_relaxation_for_flat_solutions}

Flat solutions are by definition minimizers of the highly nonconvex problem $\min_{(L,R)\in \mathcal{S}}\str(D^2 f(L,R)).$ The main result of this section is to present an appealing convex relaxation of this problem. We begin with a convenient expression for the scaled trace $\str(D^2 f(L,R))$. Namely, 
recall that Lemma~\ref{lem:flattest} showed the equality $\str(D^2 f(L,R))=2\fronorm{L}^2+2\fronorm{R}^2$ in the simplified setting $\mathcal{A}=\mathcal{I}$. Lemma~\ref{lem: scaledtraceFronormSquareSum} provides an analogous statement for general maps $\mathcal{A}$ up to rescaling the factors by $D_1$ and $D_2$. 

\begin{lemma}[Scaled trace and the Frobenius norm]\label{lem: scaledtraceFronormSquareSum}
	The second-order derivative of the function $f$ at any $(L,R)\in \mathcal{S}$ is the quadratic form:
	\begin{equation}\label{eq:general}
		\begin{aligned}
			D^2 f(L,R)[U,V] & = 2\|\mathcal{A}(LV^\top +UR^\top)\|^2_2.
		\end{aligned}
	\end{equation}
	Moreover, the scaled trace can be written as
	\begin{equation}\label{eqn:scaled_trace_general}
		\str(D^2 f(L,R)) =2m(\fronorm{D_1L}^2 + \fronorm{D_2R}^2),
	\end{equation}
\end{lemma}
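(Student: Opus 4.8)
The proof splits into two parts: deriving the Hessian formula \eqref{eq:general}, and computing the scaled trace \eqref{eqn:scaled_trace_general}. For the first part, I would mimic the computation from Lemma~\ref{lem:flattest} but with $\Amap$ in place of the identity. Writing $g(L,R) = \Amap(LR^\top) - b$, the objective is $f = \|g\|_2^2$, so $D^2 f(L,R)[U,V] = 2\|Dg(L,R)[U,V]\|_2^2 + 2\langle g(L,R), D^2 g(L,R)[U,V]\rangle$. Since $g$ is a composition of the linear map $\Amap$ with the bilinear map $(L,R)\mapsto LR^\top$, we get $Dg(L,R)[U,V] = \Amap(LV^\top + UR^\top)$ and $D^2 g(L,R)[U,V] = 2\Amap(UV^\top)$. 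On the solution set $\mathcal{S}$ we have $g(L,R) = 0$, so the second term vanishes and we are left with $D^2 f(L,R)[U,V] = 2\|\Amap(LV^\top + UR^\top)\|_2^2$, which is \eqref{eq:general}.

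For the scaled trace, I would plug the coordinate directions into \eqref{eq:general} exactly as in the proof of Lemma~\ref{lem:flattest}. For $i \le d_1$ and $j \in [k]$, the direction is $(U,V) = (e_i e_j^\top, 0)$, so $LV^\top + UR^\top = e_i e_j^\top R^\top = e_i (R_j)^\top$ where $R_j$ is the $j$-th column of $R$; hence $D^2 f(L,R)[e_i e_j^\top, 0] = 2\|\Amap(e_i R_j^\top)\|_2^2 = 2\sum_{l=1}^m \langle A_l, e_i R_j^\top\rangle^2 = 2\sum_l (e_i^\top A_l R_j)^2 = 2 R_j^\top \big(\sum_l A_l^\top e_i e_i^\top A_l\big) R_j$. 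Summing over $i \le d_1$ collapses $\sum_i e_i e_i^\top = I_{d_1}$, giving $\sum_{i\le d_1} D^2 f[e_ie_j^\top,0] = 2 R_j^\top \big(\sum_l A_l^\top A_l\big) R_j$. Dividing by $d_1$ and summing over $j$: $\frac{1}{d_1}\sum_{i\le d_1, j} D^2 f = \frac{2}{d_1}\sum_j R_j^\top\big(\sum_l A_l^\top A_l\big) R_j = \frac{2}{d_1}\tr\!\big(R^\top(\sum_l A_l^\top A_l) R\big) = \frac{2}{d_1}\,\langle \sum_l A_l^\top A_l,\, RR^\top\rangle$. Recognizing $\sum_l A_l^\top A_l = m d_1 D_2^2$ from \eqref{eq: Didef}, this equals $2m\,\langle D_2^2, RR^\top\rangle = 2m\,\fronorm{D_2 R}^2$. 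The symmetric computation for $i > d_1$ gives $2m\,\fronorm{D_1 L}^2$, using $\sum_l A_l A_l^\top = m d_2 D_1^2$. Adding the two pieces yields \eqref{eqn:scaled_trace_general}.

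There is no serious obstacle here; the lemma is a bookkeeping exercise. The only point requiring mild care is getting the normalizing constants in the definitions of $D_1$ and $D_2$ to match the factors of $d_1, d_2$ produced by the coordinate sums — specifically that the $\tfrac{1}{d_1}$ in the definition \eqref{eqn:sctrace} of the scaled trace is exactly what converts $\sum_l A_l^\top A_l$ into $m D_2^2$ after the $\tfrac{1}{m d_1}$ in \eqref{eq: Didef} is accounted for (note the $D_2$ term pairs with the $L$-block sum over $i \le d_1$, and symmetrically for $D_1$). I would double-check the index convention in \eqref{eqn:sctrace} to confirm which rescaling matrix attaches to which factor, and otherwise the argument is a direct chain of identities.
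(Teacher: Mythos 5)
Your proposal is correct and follows essentially the same route as the paper: expand the second derivative of the composite $\|\Amap(LR^\top)-b\|_2^2$, note the cross term vanishes on $\mathcal{S}$, then evaluate the quadratic form on coordinate directions and match the resulting sums $\sum_l A_l^\top A_l$ and $\sum_l A_l A_l^\top$ against the definitions of $D_2$ and $D_1$. The only (cosmetic) difference is the order in which you collapse the index sums, and your pairing of the $i\le d_1$ block with $\fronorm{D_2R}^2$ and the $i>d_1$ block with $\fronorm{D_1L}^2$ agrees with the paper's computation.
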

\begin{proof}
	An elementary computation yields for any $(L,R)$ the expression 
	$$D^2 f(L,R)[U,V]  = 4\langle \mathcal{A}(LR^{\top})-b, \mathcal{A}(LV^\top +UR^\top)\rangle +2\|\mathcal{A}(LV^\top +UR^\top)\|^2_2.$$
	Noting that for any $(L,R)\in \mathcal{S}$ the first term on the right is zero yields the claimed expression \eqref{eq:general}. Next, we verify \eqref{eqn:scaled_trace_general} by a direct calculation. To this end, the definition of the scaled trace \eqref{eqn:sctrace} yields the expression 
	
	\begin{equation}
		\str(\Hessian(L,R)) 
		=  \frac{2}{ \dmone} \sum_{i=1}^{\dmone}\sum_{j=1}^k \twonorm{\Amap(e_i e_j^\top R^\top )}^2+\frac{2}{ \dmtwo} \sum_{i=1}^{\dmtwo}\sum_{j=1}^k \twonorm{\Amap(L e_j e_i^\top )}^2 
		.
	\end{equation}
	Let us analyze the second term on the right. Letting $A_{l,i}$ denote the $i$'th column of $A_l$, we compute
	\begin{equation}\label{eqn:need_later}
		\begin{aligned}
			\sum_{i=1}^{\dmtwo}\sum_{j=1}^k  \twonorm{\Amap(L e_j e_i^\top )}^2  & = 
			\sum_{i=1}^{\dmtwo}\sum_{j=1}^k \sum_{l=1}^m 
			\inprod{A_l}{L e_j e_i^\top}^2\\
			&=  \sum_{i=1}^{\dmtwo}\sum_{j=1}^k \sum_{l=1}^m  
			\langle A_{l,i}, L_{j}\rangle^2\\
			&=\sum_{i=1}^{\dmtwo}\sum_{j=1}^k \sum_{l=1}^m  
			\langle A_{l,i} A_{l,i}^{\top}, L_{j}L_{j}^{\top}\rangle\\
			&= \sum_{l=1}^m 
			\langle \sum_{i=1}^{\dmtwo}  A_{l,i} A_{l,i}^{\top}, \sum_{j=1}^k L_{j}L_{j}^{\top}\rangle=\sum_{l=1}^m \langle A_lA_l^{\top}, LL^{\top}\rangle=m d_2\fronorm{D_1L}^2.
		\end{aligned}
	\end{equation}
	A similar argument shows $\fronorm{D_2R}^2 = \frac{1}{\ncons \dmone} \sum_{i=1}^{\dmone}\sum_{j=1}^k \twonorm{\Amap(e_i e_j^\top R^\top )}^2$, completing the proof.
\end{proof}

In particular, Lemma~\ref{lem: scaledtraceFronormSquareSum} implies that flat solutions are exactly the minimizers of the problem 
\begin{equation}\label{eq: scaled trace Sum of Square}
	\min_{L,R} ~
	\frac{1}{2}\left(\fronorm{D_1L}^2 + \fronorm{D_2R}^2\right)\qquad \textrm{subject to}\qquad \Amap(LR^\top)=b.
\end{equation}
In turn, it follows directly from \eqref{eqn:srebro} that so long as $D_1$, $D_2$ are invertible, the problem \eqref{eq: scaled trace Sum of Square} is equivalent to minimizing the nuclear norm over rank constrained matrices:
\begin{equation}\label{eq: PerturbnucnormX}
	\min_{X\in \RR^{d_1\times d_2}:~\rank(X)\leq k} \nucnorm{X}\qquad\textrm{subject to}\qquad \Amap(D_1^{-1}XD_2^{-1})=b. 
\end{equation}
Therefore, a natural convex relaxation for finding the flattest solution drops the rank constraint: 
\begin{equation}\label{eq: PerturbnucnormX_con_relax}
	\min_{X\in \RR^{d_1\times d_2}} \nucnorm{X}\qquad\textrm{subject to}\qquad \Amap(D_1^{-1}XD_2^{-1})=b. 
\end{equation}
The following theorem summarizes these observations.

\begin{thm}[Convex relaxation]\label{thm: equivalenceConvexNonconvexScaledTraceHessioan}
	Suppose the matrices $D_1$ and $D_2$ are invertible. Then the problems \eqref{eq: scaled trace Sum of Square} are \eqref{eq: PerturbnucnormX} are equivalent in the following sense. Let $l = \min(k,\dm_{\min})$.
	\begin{enumerate}
		\item the optimal values of \eqref{eq: scaled trace Sum of Square} and \eqref{eq: PerturbnucnormX} are equal,
		\item if $L,R$ solves \eqref{eq: scaled trace Sum of Square}, then $X=D_1LR^\top D_2$ is a minimizer of \eqref{eq: PerturbnucnormX}. 
		\item if a solution  $X$ of  \eqref{eq: PerturbnucnormX} has a singular value decomposition $X= U\Sigma V^\top$ for some diagonal matrix $\Sigma \in \real^{l\times l}$ with nonnegative entries, then the matrices $L = D_1^{-1}U\sqrt{\Sigma}$ and $R= D_2^{-1} V\sqrt{\Sigma}$ are minimizers of \eqref{eq: scaled trace Sum of Square} when $l\geq k$, and the matrices $L = [D_1^{-1}U\sqrt{\Sigma},0_{\dm_1 ,(k-l)}]$ and $R= [D_2^{-1} V\sqrt{\Sigma}, 0_{\dm_2 ,(k-l)}]$ are minimizers of \eqref{eq: scaled trace Sum of Square} when $l<k$.
	\end{enumerate}
	Moreover, if $X=D_1M_{\natural}D_2$ is the unique minimizer of  the problem \eqref{eq: PerturbnucnormX_con_relax}, then any flat solution $(L,R)$ satisfies $LR^\top = M_{\natural}$. 
\end{thm}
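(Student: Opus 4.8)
The plan is to prove the theorem in the order the statement presents it: first establish the equivalence between the sum-of-squares problem \eqref{eq: scaled trace Sum of Square} and the rank-constrained nuclear norm problem \eqref{eq: PerturbnucnormX} (items 1--3), and then deduce the final ``Moreover'' consequence about flat solutions. The key bridge throughout is the change of variables $X = D_1 L R^\top D_2$, together with the Srebro characterization \eqref{eqn:srebro} of the nuclear norm. Since $D_1, D_2$ are invertible, this substitution is a bijection between feasible pairs: $\Amap(LR^\top) = b$ holds iff, setting $Y = D_1 L R^\top D_2$, we have $\Amap(D_1^{-1} Y D_2^{-1}) = b$, and $\rank(Y) = \rank(LR^\top) \le k$ automatically because $LR^\top$ has rank at most $k$. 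The only subtlety is that the factorization-side feasible set is parameterized by the pair $(L,R)$, whereas the $X$-side is parameterized directly by the matrix; reconciling the two requires the freedom to refactor.

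\textbf{Items 1--3.} For item 1, fix any feasible $(L,R)$ for \eqref{eq: scaled trace Sum of Square} and set $L' = D_1 L$, $R' = D_2 R$, so that $X := L'(R')^\top = D_1 L R^\top D_2$ is feasible for \eqref{eq: PerturbnucnormX} (rank $\le k$). By \eqref{eqn:srebro}, $\nucnorm{X} \le \tfrac12(\fronorm{L'}^2 + \fronorm{R'}^2) = \tfrac12(\fronorm{D_1 L}^2 + \fronorm{D_2 R}^2)$, so the optimal value of \eqref{eq: PerturbnucnormX} is at most that of \eqref{eq: scaled trace Sum of Square}. Conversely, given any feasible $X$ for \eqref{eq: PerturbnucnormX} with $\rank(X) = l' \le l$, take a compact SVD $X = U\Sigma V^\top$ with $\Sigma \in \real^{l' \times l'}$; pad to $\real^{l\times l}$ and then, if $l < k$, pad the factors with zero columns as in item 3 to produce $L = D_1^{-1} U\sqrt{\Sigma}$ (with zero-column padding if needed), $R = D_2^{-1} V\sqrt{\Sigma}$ (likewise), which has $LR^\top = D_1^{-1} X D_2^{-1}$, hence is feasible for \eqref{eq: scaled trace Sum of Square}, and satisfies $\tfrac12(\fronorm{D_1 L}^2 + \fronorm{D_2 R}^2) = \tfrac12(\fronorm{U\sqrt{\Sigma}}^2 + \fronorm{V\sqrt{\Sigma}}^2) = \tr(\Sigma) = \nucnorm{X}$, using $U^\top U = V^\top V = I$. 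This gives the reverse inequality, proving item 1; item 3 drops out of this same construction applied to an optimal $X$ (the value computation shows the produced $(L,R)$ attains the common optimal value). For item 2, given optimal $(L,R)$ for \eqref{eq: scaled trace Sum of Square}, set $X = D_1 L R^\top D_2$; it is feasible for \eqref{eq: PerturbnucnormX} and $\nucnorm{X} \le \tfrac12(\fronorm{D_1 L}^2 + \fronorm{D_2 R}^2) = \mathrm{opt}\eqref{eq: scaled trace Sum of Square} = \mathrm{opt}\eqref{eq: PerturbnucnormX}$ by item 1, so $X$ is optimal.

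\textbf{The ``Moreover'' clause.} Suppose $X^\star := D_1 M_\natural D_2$ is the unique minimizer of the \emph{unconstrained} relaxation \eqref{eq: PerturbnucnormX_con_relax}. First note $X^\star$ is feasible for the rank-constrained problem \eqref{eq: PerturbnucnormX}: indeed $\rank(X^\star) = \rank(M_\natural) = \trur \le k$, and $\Amap(D_1^{-1} X^\star D_2^{-1}) = \Amap(M_\natural) = b$. Since \eqref{eq: PerturbnucnormX} is a restriction of \eqref{eq: PerturbnucnormX_con_relax} (same objective, extra rank constraint) and $X^\star$ is feasible for the restriction and optimal for the relaxation, $X^\star$ is also an optimal solution of \eqref{eq: PerturbnucnormX}, with the same optimal value; moreover it is the \emph{unique} minimizer of \eqref{eq: PerturbnucnormX} because any minimizer of the restriction is a minimizer of the relaxation, and the latter is unique. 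Now let $(L,R)$ be any flat solution. By Lemma~\ref{lem: scaledtraceFronormSquareSum}, $(L,R)$ minimizes $\tfrac12(\fronorm{D_1 L}^2 + \fronorm{D_2 R}^2)$ over the feasible set, i.e.\ solves \eqref{eq: scaled trace Sum of Square}; by item 2, $X = D_1 L R^\top D_2$ is a minimizer of \eqref{eq: PerturbnucnormX}, hence $X = X^\star = D_1 M_\natural D_2$ by uniqueness. Inverting $D_1, D_2$ gives $LR^\top = M_\natural$, as claimed.

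\textbf{Main obstacle.} The only genuinely delicate point is the bookkeeping in item 3 around the case $l < k$ versus $l \ge k$ and the padding of SVD factors to the right dimensions so that the resulting $L \in \real^{d_1\times k}$, $R\in \real^{d_2 \times k}$ are valid factors of the correct size while the Frobenius-norm identity $\tfrac12(\fronorm{U\sqrt\Sigma}^2 + \fronorm{V\sqrt\Sigma}^2) = \tr(\Sigma) = \nucnorm{X}$ is preserved (zero columns contribute nothing, so the identity is unaffected). I expect nothing in this proof to be hard — it is essentially a careful application of \eqref{eqn:srebro} plus the bijectivity of the $D_1(\cdot)D_2$ substitution — but the dimension-matching in the overparameterized case $k > \dm_{\min}$ is where a careless argument could slip.
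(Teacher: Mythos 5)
Your proof is correct and follows exactly the route the paper takes (the paper's own proof is a two-line sketch: the substitution $L'=D_1L$, $R'=D_2R$ combined with \eqref{eqn:srebro} for items 1--3, and the relaxation-plus-uniqueness observation for the ``Moreover'' clause). Your write-up simply fills in the details the paper omits, including the dimension bookkeeping in item 3, and all of it checks out.
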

\begin{proof}
	The three claims follow directly from making a variable substitution $L'=D_1 L$ and $R'=D_2R$ and using  \eqref{eqn:srebro}.  The ``moreover'' part follows from \eqref{eq: PerturbnucnormX_con_relax} being a convex relaxation of \eqref{eq: PerturbnucnormX}.
\end{proof}

Section~\ref{sec:RIP} will verify that the convex relaxation \eqref{eq: PerturbnucnormX_con_relax} indeed recovers $M_{\natural}$ under restricted isometry properties on the measurement map $\mathcal{A}$, and therefore flat solutions exactly recover $M_{\natural}$.

\subsection{Regularity of flat solutions}
In this section, we show that the condition numbers of the rescaling matrices $D_1$ and $D_2$ determine balancedness and norm minimality of  flat solutions. The main result is the following theorem.

\begin{thm}[Regularity of flat solutions]\label{thm: regularity}
	Suppose that there exist constants $\alpha_1,\alpha_2> 0$ satisfying $\alpha_1 I \preceq D_i \preceq \alpha_2 I$ for each $i\in \{1,2\}$.  Define the constant $\kappa := \frac{\alpha_2}{\alpha_1}$.
	Then any flat solution $(L_f,R_f)$ of \eqref{eq: main minimization}
	satisfies the following properties.
	\begin{enumerate}
		\item 	{\bf Norm-minimal}: 
		the pair $(L_f,R_f)$ is approximately norm-minimal: 
		\begin{equation}
			\begin{aligned}\label{eq: inequalityLmRmLfRf}
				\fronorm{L_f }^2 + \fronorm{R_f}^2  
				\leq 
				\kappa^2\cdot\left(\min_{\Amap(LR^\top) = b} \fronorm{L}^2 + \fronorm{R}^2\right).
			\end{aligned}
		\end{equation}
		\item {\bf Balanced}: The pair $(L_f,R_f)$ is approximately balanced: 
		\begin{equation}\label{eq: balancednessdeterministic}
			\nucnorm{L^\top_f L_f - R_f^\top R_f} \leq
			2\left(\kappa^2- 1\right) \nucnorm{\truM}.
		\end{equation} 
	\end{enumerate}
\end{thm}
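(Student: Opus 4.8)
The plan is to reason entirely through the reformulation of flat solutions given by Lemma~\ref{lem: scaledtraceFronormSquareSum}: $(L_f,R_f)$ is a minimizer of $\tfrac12(\fronorm{D_1L}^2+\fronorm{D_2R}^2)$ subject to $\Amap(LR^\top)=b$. Throughout I would use the two-sided bound $\alpha_1^2 I\preceq D_i^2\preceq\alpha_2^2 I$ implied by the hypothesis.

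For the norm-minimality claim, let $(L_\star,R_\star)$ attain $\min_{\Amap(LR^\top)=b}\fronorm{L}^2+\fronorm{R}^2$. Since $(L_f,R_f)$ and $(L_\star,R_\star)$ are feasible for the same constraint and $(L_f,R_f)$ minimizes $\fronorm{D_1L}^2+\fronorm{D_2R}^2$ over that set, I would chain $\alpha_1^2(\fronorm{L_f}^2+\fronorm{R_f}^2)\le \fronorm{D_1L_f}^2+\fronorm{D_2R_f}^2\le \fronorm{D_1L_\star}^2+\fronorm{D_2R_\star}^2\le \alpha_2^2(\fronorm{L_\star}^2+\fronorm{R_\star}^2)$, where the outer inequalities use $\tr(L^\top D_i^2 L)\ge \alpha_1^2\fronorm{L}^2$ and $\tr(L^\top D_i^2 L)\le \alpha_2^2\fronorm{L}^2$. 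Dividing by $\alpha_1^2$ gives exactly \eqref{eq: inequalityLmRmLfRf}.

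For balancedness, I would first extract a stationarity condition. The reparametrization $B\mapsto(L_fB,R_fB^{-\top})$ over invertible $k\times k$ matrices leaves the product $L_fR_f^\top$, and hence feasibility, unchanged, so $B=I_k$ minimizes $\varphi(B):=\tfrac12\fronorm{D_1L_fB}^2+\tfrac12\fronorm{D_2R_fB^{-\top}}^2$; computing $\nabla\varphi(I_k)$ exactly as in the proof of Lemma~\ref{lem: balancedMinimalNorm} yields the ``rescaled balancedness'' identity $L_f^\top D_1^2L_f=R_f^\top D_2^2R_f=:P\succeq 0$. Conjugating the bounds on $D_i^2$ then gives $\tfrac{1}{\alpha_2^2}P\preceq L_f^\top L_f\preceq\tfrac{1}{\alpha_1^2}P$ and likewise for $R_f^\top R_f$, so that $-cP\preceq L_f^\top L_f-R_f^\top R_f\preceq cP$ with $c:=\alpha_1^{-2}-\alpha_2^{-2}$. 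A short spectral argument (split a symmetric matrix $S$ sandwiched between $-cP$ and $cP$ into positive and negative parts, and use $P\succeq 0$ together with orthonormality of an eigenbasis of $S$) shows $\nucnorm{S}\le 2c\,\tr(P)$. Finally I would bound $\tr(P)=\tfrac12(\fronorm{D_1L_f}^2+\fronorm{D_2R_f}^2)$ by noting that $X=D_1\truM D_2$ has rank at most $\trur\le k$ and satisfies $\Amap(D_1^{-1}XD_2^{-1})=\Amap(\truM)=b$, hence is feasible for \eqref{eq: PerturbnucnormX}; by Theorem~\ref{thm: equivalenceConvexNonconvexScaledTraceHessioan}(1) this value is at most $\nucnorm{D_1\truM D_2}\le\opnorm{D_1}\opnorm{D_2}\nucnorm{\truM}\le\alpha_2^2\nucnorm{\truM}$. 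Combining, $\nucnorm{L_f^\top L_f-R_f^\top R_f}\le 2c\alpha_2^2\nucnorm{\truM}=2(\kappa^2-1)\nucnorm{\truM}$, which is \eqref{eq: balancednessdeterministic}.

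The main obstacle is the balancedness part, specifically turning the identity $L_f^\top D_1^2L_f=R_f^\top D_2^2R_f$ into a nuclear-norm estimate: one needs both the PSD-sandwich-to-nuclear-norm inequality and the observation that the relevant trace is controlled by $\nucnorm{\truM}$ via feasibility of $D_1\truM D_2$ in \eqref{eq: PerturbnucnormX}. The norm-minimality claim and the derivation of the stationarity condition are routine.
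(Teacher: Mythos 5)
Your argument is correct and reaches the stated constants, but the balancedness half follows a genuinely different path from the paper's. For norm-minimality your chain of inequalities is exactly the paper's proof. For balancedness, both arguments begin from the same stationarity identity $L_f^\top D_1^2L_f=R_f^\top D_2^2R_f$ (the paper obtains it by invoking Lemma~\ref{lem: balancedMinimalNorm} inside Lemma~\ref{lem: balancedness}, you re-derive it via the $B\mapsto(L_fB,R_fB^{-\top})$ reparametrization--- same computation). From there the routes diverge. The paper's Lemma~\ref{lem: balancedness} writes $L_f^\top L_f-R_f^\top R_f$ as a telescoping sum of two terms of the form $L_f^\top(I-\alpha_2^{-2}D_1^2)L_f$ and bounds each by $\nucnorm{FG}\le\fronorm{F}\fronorm{G}$ and $\fronorm{FG}\le\opnorm{F}\fronorm{G}$, obtaining $(1-\kappa^{-2})(\fronorm{L_f}^2+\fronorm{R_f}^2)$, and then controls $\fronorm{L_f}^2+\fronorm{R_f}^2$ by $2\kappa^2\nucnorm{\truM}$ using part~1 together with \eqref{eqn:srebro}. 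You instead sandwich $L_f^\top L_f-R_f^\top R_f$ between $\pm(\alpha_1^{-2}-\alpha_2^{-2})P$ with $P:=L_f^\top D_1^2L_f\succeq 0$, convert the sandwich to $\nucnorm{\cdot}\le 2(\alpha_1^{-2}-\alpha_2^{-2})\tr(P)$ by a spectral-split argument (which is valid: summing $v_i^\top Sv_i\le c\,v_i^\top Pv_i$ over the eigenvectors with positive eigenvalues and using $P\succeq 0$ bounds $\tr(S_+)$ by $c\tr(P)$, and symmetrically for $S_-$), and then bound $\tr(P)=\tfrac12(\fronorm{D_1L_f}^2+\fronorm{D_2R_f}^2)$ directly by the optimal value of \eqref{eq: PerturbnucnormX} via feasibility of $D_1\truM D_2$, rather than going through part~1. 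Your version is arguably tighter in spirit---it bounds the \emph{rescaled} Gram trace $\tr(P)$ by $\nucnorm{D_1\truM D_2}$ instead of first passing to the unrescaled norms---though both land on the identical constant $2(\kappa^2-1)$. The paper's route has the advantage of isolating a reusable linear-algebra lemma (Lemma~\ref{lem: balancedness}) that makes no reference to the measurement map; yours keeps everything inside the flat-solution characterization and avoids the two-step decomposition.
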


The proof of Theorem~\ref{thm: regularity} relies on the following simple linear algebraic lemma. 

\begin{lem}\label{lem: balancedness}
	Consider two symmetric matrices $Q_1\in \RR^{d_1\times d_1}$ and $Q_2\in \RR^{d_2\times d_2}$. Suppose that there exist constants $\alpha_1, \alpha_2>0$ satisfying
	$\alpha_1 I \preceq Q_i \preceq \alpha_2 I$ for each $i\in \{1,2\}$. Define the constant $\kappa = \frac{\alpha_2}{\alpha_1}$.
	Then given any matrix $X\in \RR^{d_1\times d_2}$, any minimizer $(L,R)$ of the problem
	\begin{equation}
		\min_{\tilde{L},\tilde{R}: \;Q_1\tilde{L}\tilde{R}^{\top} Q_2 = X} \frac{1}{2}\left( \fronorm{Q_1\tilde{L}}^2 + \fronorm{Q_2\tilde{R}}^2\right), 
	\end{equation}
	satisfies the inequality: 
	\begin{equation}\label{eq: LRbalanceLsquareplucRsquare}
		\nucnorm{L^\top L - R^\top R} \leq \left(1- \kappa^{-2}\right)\left(\fronorm{L}^2+\fronorm{R}^2\right) .
	\end{equation}
\end{lem}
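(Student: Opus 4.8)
The plan is to reuse the variational trick from the proof of Lemma~\ref{lem: balancedMinimalNorm}: exploit the reparametrization invariance of the constraint to extract a first-order stationarity condition, then convert the resulting operator identity into the nuclear-norm bound. Observe that for any invertible $B\in\RR^{k\times k}$ the pair $(LB,\,RB^{-\top})$ is again feasible, since $Q_1(LB)(RB^{-\top})^{\top}Q_2=Q_1LR^{\top}Q_2=X$. Hence $B=I_k$ minimizes the smooth map $\varphi(B):=\tfrac12\fronorm{Q_1LB}^2+\tfrac12\fronorm{Q_2RB^{-\top}}^2$ over the open set of invertible matrices, so $\nabla\varphi(I_k)=0$. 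The gradient computation is exactly the one in Lemma~\ref{lem: balancedMinimalNorm} with $Q_1,Q_2$ inserted, and gives $\nabla\varphi(I_k)=L^{\top}Q_1^2L-R^{\top}Q_2^2R$; thus any minimizer satisfies the ``weighted balancedness'' identity $L^{\top}Q_1^2L=R^{\top}Q_2^2R=:G$, a symmetric positive semidefinite matrix.

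Next I would sandwich $L^{\top}L$ and $R^{\top}R$ above a common scaling of $G$. From $\alpha_1^2 I\preceq Q_i^2\preceq\alpha_2^2 I$ we get $G=L^{\top}Q_1^2L\preceq\alpha_2^2\,L^{\top}L$ and, symmetrically, $G\preceq\alpha_2^2\,R^{\top}R$; equivalently $L^{\top}L-\tfrac{1}{\alpha_2^2}G\succeq 0$ and $R^{\top}R-\tfrac{1}{\alpha_2^2}G\succeq 0$. Writing $L^{\top}L-R^{\top}R$ as the difference of these two positive semidefinite matrices and using $\nucnorm{A-B}\le\nucnorm{A}+\nucnorm{B}=\tr(A)+\tr(B)$ for $A,B\succeq 0$,
\[
\nucnorm{L^{\top}L-R^{\top}R}\;\le\;\tr\Big(L^{\top}L-\tfrac{1}{\alpha_2^2}G\Big)+\tr\Big(R^{\top}R-\tfrac{1}{\alpha_2^2}G\Big)=\fronorm{L}^2+\fronorm{R}^2-\tfrac{2}{\alpha_2^2}\tr(G).
\]
To close, I would lower-bound $\tr(G)$: since $\tr(G)=\tr(L^{\top}Q_1^2L)=\fronorm{Q_1L}^2\ge\alpha_1^2\fronorm{L}^2$ and, likewise, $\tr(G)=\fronorm{Q_2R}^2\ge\alpha_1^2\fronorm{R}^2$, averaging gives $2\tr(G)\ge\alpha_1^2(\fronorm{L}^2+\fronorm{R}^2)$. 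Plugging this into the display and recalling $\kappa=\alpha_2/\alpha_1$ yields $\nucnorm{L^{\top}L-R^{\top}R}\le(1-\kappa^{-2})(\fronorm{L}^2+\fronorm{R}^2)$, which is \eqref{eq: LRbalanceLsquareplucRsquare}.

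The matrix calculus behind $\nabla\varphi(I_k)$ (differentiating through $B^{-\top}$) and the Loewner inequalities are routine. The step I expect to carry the argument is the re-centering: a crude use of the two-sided sandwich $\tfrac{1}{\alpha_2^2}G\preceq L^{\top}L,R^{\top}R\preceq\tfrac{1}{\alpha_1^2}G$ controls $\nucnorm{L^{\top}L-R^{\top}R}$ only with a weaker constant, since $L^{\top}L$ and $R^{\top}R$ need not commute; choosing the \emph{common lower bound} $\tfrac{1}{\alpha_2^2}G$ as the center makes $L^{\top}L-R^{\top}R$ a difference of positive semidefinite matrices, so its nuclear norm collapses to traces and the constant $1-\kappa^{-2}$ drops out in one line (note that re-centering instead at the common \emph{upper} bound $\tfrac{1}{\alpha_1^2}G$ does not give the stated constant). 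I would also double-check that nothing is assumed about $k$ versus $\rank(X)$ — only invertibility of $B$ and the two-sided bounds on $Q_1,Q_2$ are used — and that $G$ being merely positive semidefinite, not definite, causes no difficulty, which it does not.
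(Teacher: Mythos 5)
Your proposal is correct and follows essentially the same route as the paper's: both establish the weighted balancedness identity $L^\top Q_1^2 L = R^\top Q_2^2 R=:G$ (the paper by applying Lemma~\ref{lem: balancedMinimalNorm} to the rescaled pair $(Q_1L,Q_2R)$, you by redoing the variational argument directly), and both then decompose $L^\top L - R^\top R$ as a difference of the two terms centered at $\tfrac{1}{\alpha_2^2}G$. The only deviation is in the final estimate: the paper bounds each term by $\opnorm{I-\tfrac{1}{\alpha_2^2}Q_1^2}\fronorm{L}^2\leq(1-\kappa^{-2})\fronorm{L}^2$ via norm submultiplicativity, whereas you note each term is positive semidefinite so its nuclear norm collapses to a trace and then lower-bound $\tr(G)$ — both yield the identical constant.
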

\begin{proof}
	Lemma \ref{lem: balancedMinimalNorm} implies that the pair $(Q_1L,Q_2R)$ is balanced, meaning  
	$L^\top Q_1^2L  = R^\top Q_2^2 R$.
	Hence, we may decompose $L^\top L- R^\top R$ in the following way: 
	\begin{equation}
		L^\top L- R^\top R = 	\left(L^\top L- \frac{L^\top Q_1^2L}{\alpha_2^2} \right) + \left(\frac{R^\top Q_2^2 R}{\alpha_2^2} - R^\top R\right).
	\end{equation}
	We bound the first term on the right as follows,
	\begin{equation}\label{eq: boundingLdifference}
		\begin{aligned}
			\nucnorm{L^\top L- \frac{L^\top Q_1^2L}{\alpha_2^2}} = \nucnorm{L^\top \left(I-\frac{1}{\alpha_2^2}Q_1^2\right)L} 
			&\overset{(a)}{\leq}\fronorm{L^\top \left(I-\frac{1}{\alpha_2^2}Q_1^2\right)}\fronorm{L}\\
			&\overset{(b)}{\leq} \opnorm{I-\frac{1}{\alpha_2^2}Q_1^2} \fronorm{L}^2 \\ 
			&\leq\left(1- \kappa^{-2}\right)\fronorm{L}^2.
		\end{aligned}
	\end{equation}
	Here, $(a)$ and $(b)$ follow, respectively, from the basic inequalities:  
	$\nucnorm{FG}\leq \fronorm{F}\fronorm{G}$ and $\|FG\|_{F}\leq \opnorm{F}\fronorm{G}$, which hold for all matrices $F$ and $G$ with compatible dimensions.  A similar 
	argument yields the inequality 
	\begin{equation}\label{eq: boundingRdifference}
		\begin{aligned}
			\nucnorm{R^\top R- \frac{R^\top Q_2^2R}{\alpha_2^2}}
			\leq \left(1- \kappa^{-2}\right)\fronorm{R}^2.
		\end{aligned}
	\end{equation}
	The claimed estimate \eqref{eq: LRbalanceLsquareplucRsquare} follows immediately.
\end{proof}

We are now ready to prove Theorem~\ref{thm: regularity}.

\begin{proof}[Proof of Theorem~\ref{thm: regularity}]
	We first prove inequality \eqref{eq: inequalityLmRmLfRf}. To this end, for any $(L,R)\in \mathcal{S}$, we successively estimate:
	\begin{equation}
		\begin{aligned}
			{\alpha_1^2}\left( \fronorm{L_f }^2 + \fronorm{R_f}^2 \right) 
			& \leq
	\fronorm{D_1 L_f }^2 + \fronorm{D_2R_f}^ 2 \\
			& \leq	
\fronorm{D_1 L }^2 + \fronorm{D_2R}^2 \\
			& \leq 	
		{\alpha_2^2}\left( \fronorm{L}^2 + \fronorm{R}^2 \right),
		\end{aligned}
	\end{equation}
	where the second inequality follows from the characterization \eqref{eq: scaled trace Sum of Square} of flat solutions. Taking the infimum over pairs $(L,R)\in\mathcal{S}$ completes the proof of \eqref{eq: inequalityLmRmLfRf}.
	
	We next verify \eqref{eq: balancednessdeterministic}. 
	To this end, define the matrix $X:=D_1L_fR_f^\top D_2$. Then clearly $(L_f,R_f)$ is a minimizer of the problem 
	\begin{equation}
		\min_{\tilde{L},\tilde{R}:~ \;D_1\tilde{L}\tilde{R}^{\top} D_2 = X} \frac{1}{2}\left( \fronorm{D_1\tilde{L}}^2 + \fronorm{D_2\tilde{R}}^2\right). 
	\end{equation}
	Lemma \ref{lem: balancedness} therefore guarantees the estimate
	$$
	\nucnorm{L^\top_f L_f - R_f^\top R_f} \leq \left(1- \kappa^{-2}\right)\left(\fronorm{L_f}^2+\fronorm{R_f}^2\right).
	$$
	The already established estimate \eqref{eq: inequalityLmRmLfRf} ensures
	$$\fronorm{L_f}^2+\fronorm{R_f}^2\leq \kappa^2 \left(\fronorm{L}^2 + \fronorm{R}^2\right)\qquad\forall (L,R)\in \mathcal{S}.$$
	In particular, 
	minimizing the right hand-side over $L,R$ satisfying $M_{\natural}=LR^{\top}$ yields an upper bound of $2\kappa^2\nucnorm{M_{\natural}}$.
	The proof is complete.
\end{proof}

\section{Flat minima under RIP conditions: matrix and bilinear sensing}
\label{sec:RIP}

The previous section motivates a two-part strategy for showing that flat minima exactly recover the ground truth (Theorem~\ref{thm: equivalenceConvexNonconvexScaledTraceHessioan}) and are automatically nearly balanced and nearly norm-minimal (Theorem~\ref{thm: regularity}). The first step is to argue that the convex relaxation \eqref{eq: PerturbnucnormX_con_relax} admits $D_1M_{\natural}D_2$ as its unique minimizer. The second step is to argue that the condition numbers of the matrices $D_1$ and $D_2$ are close to one. In this section, we follow this recipe for problems satisfying $\ell_2/\ell_2$ and $\ell_1/\ell_2$ restricted isometry properties (defined below). The main two examples will be the following random ensembles.

\begin{definition}[Matrix and bilinear sensing]\label{defn:mat_sense_bilin}
	{\rm
		We introduce the following definitions.
		\begin{enumerate}
			\item  We  say that $\mathcal{A}$ is a {\em Gaussian  ensemble} if the entries of $A_i$ are i.i.d standard normal random variables $N(0,1)$.
			\item We  say that $\mathcal{A}$ is a {\em Gaussian bilinear  ensemble} if the matrices $A_i$ take the form $A_i = a_ib_i^\top$ where the entries of $a_i$ and $b_i$ are i.i.d. standard normal random variables $N(0,1)$
	\end{enumerate}}
\end{definition}

The main results of the section is the following theorem, stated here informally.

\begin{thm}[Matrix and bilinear sensing (Informal)]\label{thm: exact recovery}
	Suppose we are in one of the settings:
	\begin{enumerate}
		\item  $\mathcal{A}$ is a Gaussian  ensemble  and $m\gtrsim \trur d_{\max}$, 
		\item $\mathcal{A}$ is a Gaussian bilinear  ensemble, $m\gtrsim \trur d_{\max}$, and $d_{\min}\gtrsim \log \ncons$.
	\end{enumerate}
	Then with high probability, any flat solution $(L_f,R_f)$ of \eqref{eq: main minimization} satisfies 
	$
	L_fR_f^\top = \truM,
	$ and is nearly norm-minimal and nearly balanced:
	\begin{align*}
		\fronorm{L_f }^2 + \fronorm{R_f}^2  
		&
		\leq (1+\delta)\left(\min_{\Amap(LR^\top) = b} \fronorm{L}^2 + \fronorm{R}^2\right)\\
		\nucnorm{L^\top_f L_f - R_f^\top R_f} &\lesssim
		\delta\nucnorm{\truM}.
	\end{align*}
\end{thm}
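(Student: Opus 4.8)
The plan is to apply the two-part recipe already laid out: reduce to the convex relaxation \eqref{eq: PerturbnucnormX_con_relax} via Theorem~\ref{thm: equivalenceConvexNonconvexScaledTraceHessioan}, and control the conditioning of $D_1,D_2$ via Theorem~\ref{thm: regularity}. Concretely, the first step is a concentration argument: for the Gaussian ensemble, $\frac{1}{md_2}\sum_i A_iA_i^\top$ is an average of $m$ independent $d_1\times d_1$ Wishart-type blocks each with mean $I_{d_1}$, so standard matrix concentration (e.g. a covariance-estimation / matrix Bernstein bound) gives $\opnorm{D_1^2 - I}\lesssim \sqrt{d_1/(md_2)}+d_1/(md_2)$ with high probability, and similarly for $D_2$; under $m\gtrsim \trur d_{\max}$ this is $o(1)$, hence $\kappa = \alpha_2/\alpha_1 = 1+\delta$ with $\delta$ as small as desired. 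For the bilinear ensemble $A_i=a_ib_i^\top$ one has $A_iA_i^\top = \|b_i\|_2^2\, a_ia_i^\top$; the factor $\|b_i\|_2^2/d_2$ concentrates around $1$ (needing $d_{\min}\gtrsim\log m$ for a union bound over the $m$ terms), after which the same Wishart concentration on $\frac1m\sum_i a_ia_i^\top$ applies. This is where the hypothesis $d_{\min}\gtrsim\log m$ enters and is the only place the two ensembles are treated differently.

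The second step is exact recovery for the convex program. Here I would invoke the standard RIP-based analysis of nuclear norm minimization \cite{recht2010guaranteed,candes2011tight,cai2015rop}: a Gaussian ensemble with $m\gtrsim \trur d_{\max}$ satisfies the $\ell_2/\ell_2$ RIP of rank $\sim \trur$ with a small constant, and the Gaussian bilinear ensemble satisfies an $\ell_1/\ell_2$ RIP in the same regime (this is the content of the later Section~\ref{sec:RIP} results, which I may assume). The subtlety is that \eqref{eq: PerturbnucnormX_con_relax} minimizes $\nucnorm{X}$ subject to $\Amap(D_1^{-1}XD_2^{-1})=b$, i.e. the effective measurement operator is $\Amap\circ(D_1^{-1}\cdot D_2^{-1})$ and the ground truth is the rescaled matrix $\widehat M := D_1\truM D_2$. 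So I would show that $X\mapsto \Amap(D_1^{-1}XD_2^{-1})$ still satisfies RIP — which follows because composing with $D_i^{-1}$, both within $\kappa$ of the identity, perturbs the RIP constant by $O(\kappa^2-1)=O(\delta)$ — and that $\widehat M$ has rank $\trur$ (it does, since $D_i$ are invertible). Then the classical RIP recovery theorem yields that $\widehat M = D_1\truM D_2$ is the unique minimizer of \eqref{eq: PerturbnucnormX_con_relax}, and the ``moreover'' clause of Theorem~\ref{thm: equivalenceConvexNonconvexScaledTraceHessioan} gives $L_fR_f^\top=\truM$ for every flat solution.

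Finally, the regularity claims are immediate plug-ins: with $\alpha_1^2 = 1-\delta'$, $\alpha_2^2=1+\delta'$ from Step 1 (so $\kappa^2 = \frac{1+\delta'}{1-\delta'} = 1+\delta$), Theorem~\ref{thm: regularity} directly gives $\fronorm{L_f}^2+\fronorm{R_f}^2\le(1+\delta)\min_{\Amap(LR^\top)=b}(\fronorm{L}^2+\fronorm{R}^2)$ and $\nucnorm{L_f^\top L_f - R_f^\top R_f}\le 2(\kappa^2-1)\nucnorm{\truM}\lesssim\delta\nucnorm{\truM}$.

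\textbf{Main obstacle.} The genuinely delicate part is Step~1 in the bilinear case: showing $\opnorm{D_1^2-I}$ and $\opnorm{D_2^2-I}$ are small when $A_i=a_ib_i^\top$. The matrices $\|b_i\|_2^2\,a_ia_i^\top$ are heavier-tailed than Wishart blocks, so the concentration requires first truncating/conditioning on the event $\{\|b_i\|_2^2\asymp d_2 \text{ for all }i\}$ — which costs the extra $d_{\min}\gtrsim\log m$ — and then applying a matrix Bernstein bound on the conditioned sum. Verifying the RIP of the rescaled operator (rather than just of $\Amap$) is a minor but necessary bookkeeping step; since $D_1^{-1},D_2^{-1}$ are uniformly well-conditioned it only degrades constants, but it must be stated carefully because the rank constraint $k$ can be much larger than $\trur$ and one must make sure the relaxation \eqref{eq: PerturbnucnormX_con_relax} — which has no rank constraint — is the object to which RIP recovery is applied.
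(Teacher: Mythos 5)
Your proposal is correct and follows the same overall architecture as the paper: establish RIP for $\mathcal{A}$, bound the condition number $\kappa$ of $D_1,D_2$, transfer RIP to the rescaled map $\mathcal{A}(D_1^{-1}\cdot D_2^{-1})$ (Lemma~\ref{lem:reparam}/Lemma~\ref{lem: exactrecoverConvex}), invoke the classical nuclear-norm recovery theorems to get uniqueness of $D_1\truM D_2$ for the relaxation \eqref{eq: PerturbnucnormX_con_relax}, and finish with the ``moreover'' clause of Theorem~\ref{thm: equivalenceConvexNonconvexScaledTraceHessioan} and Theorem~\ref{thm: regularity}. Your bilinear-ensemble treatment of $D_i$ (union bound on $\|b_i\|_2^2/d_2$, costing $d_{\min}\gtrsim\log m$, followed by covariance concentration on $\frac1m\sum a_ia_i^\top$) is exactly the paper's Lemma~\ref{lem_condbilin}.

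The one place you diverge is the Gaussian (non-bilinear) ensemble: you propose a fresh Wishart/matrix-Bernstein concentration argument for $\opnorm{D_i^2-I}$, whereas the paper observes (Lemma~\ref{lem:cond_num}) that the conditioning of $D_1,D_2$ is a \emph{deterministic} consequence of the already-established $\ell_2/\ell_2$ RIP, via the identity $\sum_{i,j}\twonorm{\Amap(Le_je_i^\top)}^2=md_2\fronorm{D_1L}^2$ applied to rank-one $L=[v,0]$. Both work; the paper's route needs no additional probabilistic event for matrix sensing and explains why the extra hypothesis $d_{\min}\gtrsim\log m$ appears only in the bilinear case (where $\ell_1/\ell_2$ RIP gives no such deterministic control), while your direct concentration gives a quantitatively sharper $\delta\sim\sqrt{d_1/(md_2)}$. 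One detail you elide in the bilinear case: the $\ell_1/\ell_2$ RIP constants $\delta_1,\delta_2$ are fixed numerical constants not close to $1$, so one must take the RIP rank $l\trur$ with $l$ large enough that $\delta_2\kappa^2/\delta_1<\sqrt{l}$ before \cite{cai2015rop} applies; this is a constant-chasing step, not a gap in the argument.
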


We begin by formally defining the restricted isometry property of a measurement map $\mathcal{A}(\cdot)$.

\begin{definition}[Restricted isometry property]\label{def: RIP}
	{\rm	A linear map $\Amap\colon\real^{\dmone \times \dmtwo}\rightarrow \real^{\ncons}$ satisfies an {\em $\ell_p/\ell_2$ restricted isometry property (RIP) with parameters $(r,\delta_1,\delta_2)$} if the estimate 
		\begin{equation}\label{eq: ripcondition}
			\delta_1\fronorm{X}\leq \frac{\|\Amap(X)\|_p}{m^{1/p}}\leq \delta_2\fronorm{X}, 
		\end{equation}
		holds for all matrices $X\in \real^{\dmone \times \dmtwo}$ with rank at most $r$.}
\end{definition}
In this paper we will be primarily interested in $\ell_{2}/\ell_{2}$ and $\ell_{1}/\ell_{2}$ restricted isometry properties. In particular, the two random measurement models in Theorem~\ref{thm: exact recovery} satisfy these two properties. The following two lemmas are from \cite[Theorem 2.3]{candes2011tight}, \cite[Theorem 4.2]{recht2010guaranteed}, and \cite[Theorem 2.2]{cai2015rop}.

\begin{lemma}[$\ell_2/\ell_2$ RIP in matrix sensing]
	\label{lem:mat_sense_22}
	Let  $\mathcal{A}$ be a Gaussian  ensemble. Then  
	for any $\delta\in (0,1)$, there exist constants $c,C>0$ depending only on $\delta$ such that as long as $m\geq cr(d_1+d_2)$, with probability at least $1-\exp(-C m)$, the linear map $\mathcal{A}$ satisfies $\ell_2/\ell_2$ RIP with parameters $(2r,1-\delta,1+\delta)$.
\end{lemma}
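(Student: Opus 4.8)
Since this is the classical low-rank RIP bound for Gaussian measurement ensembles, the plan is to reproduce the standard covering-number argument behind \cite[Theorem 4.2]{recht2010guaranteed} and \cite[Theorem 2.3]{candes2011tight}. First I would record the pointwise distributional fact: for a \emph{fixed} matrix $X\in\real^{d_1\times d_2}$, the coordinates $\langle A_i,X\rangle$ are i.i.d.\ $N(0,\fronorm{X}^2)$ because each $A_i$ has i.i.d.\ standard Gaussian entries; hence $\twonorm{\Amap(X)}^2$ equals $\fronorm{X}^2$ times a $\chi^2_m$ variable, and the Laurent--Massart tail bound gives
\[
\Prob\!\left[\left|\tfrac{1}{m}\twonorm{\Amap(X)}^2-\fronorm{X}^2\right|>\delta'\fronorm{X}^2\right]\le 2\exp(-c\,m\,\delta'^{\,2})
\]
for every $\delta'\in(0,1)$ and an absolute constant $c>0$. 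This is the seed estimate that must be upgraded to a uniform bound over all rank-$\le 2r$ matrices.

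The second step is a covering argument for the set $\mathcal{K}$ of rank-$\le 2r$ matrices of unit Frobenius norm. Parametrizing such a matrix through compact-SVD factors $U\in\real^{d_1\times 2r}$, $V\in\real^{d_2\times 2r}$ with orthonormal columns and a diagonal $\Sigma$ of unit Frobenius norm, and taking $\epsilon$-nets of the two Stiefel manifolds and of the unit sphere in $\real^{2r}$, one produces an $\epsilon$-net $\mathcal{N}$ of $\mathcal{K}$ in Frobenius norm with $\lvert\mathcal{N}\rvert\le(C/\epsilon)^{2r(d_1+d_2+1)}$; equivalently, $\mathcal{K}$ is a union, over a net of pairs of $2r$-dimensional subspaces, of the $O(r^2)$-dimensional coordinate subspaces obtained by fixing the column and row spaces. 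Either way the effective dimension is $\Theta(r(d_1+d_2))$. Union-bounding the seed estimate over $\mathcal{N}$ with $\delta'=\delta/2$ and choosing $m\ge c\,r(d_1+d_2)$ with $c=c(\delta)$ large enough, the exponent $2r(d_1+d_2)\log(C/\epsilon)-c'm\delta^2$ is negative and linear in $m$, so with probability at least $1-\exp(-Cm)$ every net point $X_0$ satisfies $\bigl|\tfrac1m\twonorm{\Amap(X_0)}^2-1\bigr|\le\delta/2$.

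It then remains to propagate this from $\mathcal{N}$ to all of $\mathcal{K}$. For $X\in\mathcal{K}$ and its closest net point $X_0$, I would write $\twonorm{\Amap(X)}\le\twonorm{\Amap(X_0)}+\twonorm{\Amap(X-X_0)}$ and control the residual using a crude a priori operator-norm bound $\sup\{\tfrac1{\sqrt m}\twonorm{\Amap(W)}:\rank W\le 4r,\ \fronorm{W}\le 1\}\le 3$, itself a consequence of the same net argument run at rank level $4r$ (which costs only another constant factor in $m$); combined with $\fronorm{X-X_0}\le\epsilon$ this yields an upper bound $\sqrt{1+\delta/2}+3\epsilon$, and a matching lower bound is obtained identically. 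Choosing $\epsilon$ a small enough constant multiple of $\delta$ gives the two-sided estimate \eqref{eq: ripcondition} with parameters $(2r,1-\delta,1+\delta)$.

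I expect the propagation step to be the only genuine subtlety: the difference $X-X_0$ of two rank-$\le 2r$ matrices need not have rank $\le 2r$, so one cannot close the bootstrap purely inside $\mathcal{K}$, and must either inflate the rank by a constant factor as above or work within the fixed low-dimensional subspaces, on which differences stay in the subspace. The remaining bookkeeping is tracking the (polynomial, $c\asymp\delta^{-2}$) dependence of $c$ on $\delta$. As all of this is classical, the most economical route in the paper is simply to invoke \cite[Theorem 4.2]{recht2010guaranteed}, \cite[Theorem 2.3]{candes2011tight}.
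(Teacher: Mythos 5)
Your argument is correct, and it reconstructs the standard covering-number proof underlying the classical results; the paper itself does not reprove this lemma but simply cites \cite[Theorem 2.3]{candes2011tight} and \cite[Theorem 4.2]{recht2010guaranteed}, exactly as you suggest in your closing remark. Your handling of the one genuine subtlety (the rank of $X-X_0$ exceeding $2r$ in the propagation step) is the right fix, so there is nothing to add.
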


\begin{lemma}[$\ell_1/\ell_2$ RIP in bilinear sensing]\label{lem:RIP for bilinear sensing}
	Let  $\mathcal{A}$ be a Gaussian bilinear ensemble. For any positive integer $k\geq 2$, there exist constants $c,C>0$ depending only on $k$ and numerical constants $\delta_1,\delta_2>0$ and  such that in the regime $m\geq cr(d_1+d_2)$, with probability at least $1-\exp(-C m)$, the measurement map $\mathcal{A}$ satisfies $\ell_1/\ell_2$ RIP with parameters $(kr, \delta_1, \delta_2)$.
\end{lemma}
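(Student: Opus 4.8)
The plan is to establish the $\ell_1/\ell_2$ RIP at rank $s:=kr$ by the standard covering-plus-concentration argument; this is exactly the content of \cite[Theorem 2.2]{cai2015rop}, which we outline. Write the measurements as $\langle A_i,X\rangle=a_i^\top Xb_i$ and introduce the random seminorm $\rho(X):=\frac1m\sum_{i=1}^m|a_i^\top Xb_i|$ on $\RR^{\dmone\times\dmtwo}$ (a seminorm, being a sum of absolute values of linear functionals, hence homogeneous and subadditive). The goal is $\delta_1\fronorm{X}\le\rho(X)\le\delta_2\fronorm{X}$ for all $X$ with $\rank(X)\le s$, with numerical constants $\delta_1,\delta_2>0$ and failure probability $\le\exp(-Cm)$ once $m\ge c\,s(\dmone+\dmtwo)$.

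First I would pin down the pointwise mean. Using an SVD $X=\sum_j\sigma_j u_jv_j^\top$, the scalar $a_i^\top Xb_i$ equals $g_i^\top\Sigma h_i$, where $g_i=(u_j^\top a_i)_j$ and $h_i=(v_j^\top b_i)_j$ are independent standard Gaussian vectors (since $\{u_j\}$, $\{v_j\}$ are orthonormal) and $\Sigma=\diag(\sigma_j)$. Conditioning on $h_i$ gives $\EE|a_i^\top Xb_i|=\sqrt{2/\pi}\,\EE\twonorm{\Sigma h_i}$. Since $\EE\twonorm{\Sigma h_i}^2=\fronorm{X}^2$ and $\EE\twonorm{\Sigma h_i}^4=\fronorm{X}^4+2\sum_j\sigma_j^4\le 3\fronorm{X}^4$, applying Hölder's inequality to $Y:=\twonorm{\Sigma h_i}^2$ (with exponents $3/2$ and $3$) yields $(\EE Y)^3\le(\EE Y^{1/2})^2(\EE Y^2)$, hence $\tfrac{1}{\sqrt3}\fronorm{X}\le\EE\twonorm{\Sigma h_i}\le\fronorm{X}$. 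Thus $\mu_1\fronorm{X}\le\EE|a_i^\top Xb_i|\le\mu_2\fronorm{X}$ for universal $0<\mu_1\le\mu_2$ with $\mu_1/\mu_2=1/\sqrt3>1/3$, a margin used at the end.

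Next I would obtain concentration of $\rho(X)$ for fixed $X$. The bilinear chaos $g_i^\top\Sigma h_i$ is sub-exponential with $\|g_i^\top\Sigma h_i\|_{\psi_1}\lesssim\fronorm{\Sigma}=\fronorm{X}$ (Hanson--Wright), so the $Z_i:=|a_i^\top Xb_i|$ are i.i.d.\ sub-exponential with mean in $[\mu_1\fronorm{X},\mu_2\fronorm{X}]$, and Bernstein's inequality gives $\Prob\!\big(|\rho(X)-\EE Z_1|>t\fronorm{X}\big)\le 2\exp(-cmt^2)$ for $t\in(0,1)$. I would then fix a small $\varepsilon$ and a small $t$, take an $\varepsilon$-net $\mathcal N$ in Frobenius norm of $\{X:\rank(X)\le s,\ \fronorm{X}\le1\}$ with $\log|\mathcal N|\lesssim s(\dmone+\dmtwo)\log(1/\varepsilon)$ (cover the two factor matrices and the singular values separately), apply the Bernstein bound at each net point, and union bound: provided $m\ge c\,s(\dmone+\dmtwo)$ with $c$ large enough (absorbing the dependence on $k$, $\varepsilon$, $t$), with probability at least $1-\exp(-Cm)$ every $Y\in\mathcal N$ satisfies $\mu_1-t\le\rho(Y)\le\mu_2+t$.

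The only genuinely nonroutine step — and the main obstacle — is transferring the net bound to \emph{all} rank-$\le s$ matrices, since the difference of two such matrices has rank up to $2s$ and is invisible to the net. I would handle this by homogeneity plus splitting: given $X$ with $\rank(X)\le s$ and $\fronorm{X}=1$, pick $Y\in\mathcal N$ with $\fronorm{X-Y}\le\varepsilon$, and split the SVD of $X-Y$ (rank $\le 2s$) into two Frobenius-orthogonal blocks $P_1,P_2$ of rank $\le s$, so $\fronorm{P_1}^2+\fronorm{P_2}^2=\fronorm{X-Y}^2\le\varepsilon^2$. Writing $S:=\sup\{\rho(Z):\rank(Z)\le s,\ \fronorm{Z}\le1\}$, subadditivity and homogeneity give $|\rho(X)-\rho(Y)|\le\rho(P_1)+\rho(P_2)\le(\fronorm{P_1}+\fronorm{P_2})S\le 2\varepsilon S$. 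Taking the supremum over such $X$ yields $S\le(\mu_2+t)+2\varepsilon S$, hence $S\le\frac{\mu_2+t}{1-2\varepsilon}=:\delta_2$, and feeding this back gives $\rho(X)\ge\mu_1-t-2\varepsilon S\ge\mu_1-t-\frac{2\varepsilon(\mu_2+t)}{1-2\varepsilon}=:\delta_1$. Because $\mu_1/\mu_2>1/3$, one can choose $\varepsilon,t$ small enough that $\delta_1>0$ and $\delta_2<\infty$; these are the advertised numerical constants, and a final rescaling by $\fronorm{X}$ handles general $X$ of rank $\le s=kr$.
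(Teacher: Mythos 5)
Your proposal is correct: the paper does not prove this lemma but simply imports it from \cite[Theorem 2.2]{cai2015rop}, and your argument is essentially a self-contained reconstruction of that cited proof (moment comparison for the Gaussian chaos via H\"older, Bernstein concentration, an $\varepsilon$-net over rank-$\le kr$ matrices, and the rank-splitting trick to pass from the net to the whole set). In particular you correctly handle the one delicate point—the rank doubling in $X-Y$—by splitting its SVD into two Frobenius-orthogonal rank-$\le kr$ blocks and closing the self-bounding inequality for the supremum.
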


Our goal is to show that under RIP conditions, with reasonable parameters, flat solutions exactly recover the ground truth $M_{\natural}$.
We will need the following lemma, whose proof is immediate from definitions.

\begin{lemma}\label{lem:reparam}
	Let $\mathcal{A}(\cdot)$ be a linear map satisfying an $\ell_p/\ell_2$ RIP with parameters $(r,\delta_1,\delta_2)$. Let $Q_1,Q_2$ be two positive definite matrices satisfying $ \alpha _1 I\preceq Q_i\preceq  \alpha _2I$ for all $i\in\{1,2\}$ for some $\alpha_1,\alpha_2>0$. Then the linear map  $\mathcal{B}(\cdot) :\,= \mathcal{A}(Q_1^{-1}\cdot Q_2^{-1})$ satisfies an $\ell_p/\ell_2$ RIP with parameters $(r,\alpha_2^{-2}\delta_1,\alpha_1^{-2}\delta_2)$.
\end{lemma}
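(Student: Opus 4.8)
The plan is to verify the defining inequality of the $\ell_p/\ell_2$ RIP for $\mathcal{B}$ directly from that of $\mathcal{A}$, exploiting that the reparametrization $X \mapsto Q_1^{-1} X Q_2^{-1}$ preserves rank exactly. First I would fix an arbitrary matrix $X \in \real^{d_1\times d_2}$ of rank at most $r$ and set $Y := Q_1^{-1} X Q_2^{-1}$. Since $Q_1$ and $Q_2$ are invertible, $\rank(Y) = \rank(X) \le r$, so the RIP hypothesis on $\mathcal{A}$ applies to $Y$, giving
\begin{equation*}
	\delta_1 \fronorm{Y} \le \frac{\|\mathcal{A}(Y)\|_p}{m^{1/p}} \le \delta_2 \fronorm{Y}.
\end{equation*}
Because $\mathcal{B}(X) = \mathcal{A}(Q_1^{-1} X Q_2^{-1}) = \mathcal{A}(Y)$, the middle quantity is exactly $\|\mathcal{B}(X)\|_p / m^{1/p}$, so the task reduces to comparing $\fronorm{Y}$ with $\fronorm{X}$.

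The second step is the elementary operator-norm bound $\fronorm{Q_1^{-1} X Q_2^{-1}} \le \opnorm{Q_1^{-1}}\,\opnorm{Q_2^{-1}}\,\fronorm{X} \le \alpha_1^{-2}\fronorm{X}$, using that $\alpha_1 I \preceq Q_i$ implies $\opnorm{Q_i^{-1}} \le \alpha_1^{-1}$; and symmetrically the lower bound $\fronorm{X} = \fronorm{Q_1 Y Q_2} \le \opnorm{Q_1}\opnorm{Q_2}\fronorm{Y} \le \alpha_2^2 \fronorm{Y}$, i.e. $\fronorm{Y} \ge \alpha_2^{-2}\fronorm{X}$, using $Q_i \preceq \alpha_2 I$. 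Substituting these two bounds into the displayed RIP inequality for $\mathcal{A}(Y)$ yields
\begin{equation*}
	\alpha_2^{-2}\delta_1 \fronorm{X} \le \delta_1 \fronorm{Y} \le \frac{\|\mathcal{B}(X)\|_p}{m^{1/p}} \le \delta_2 \fronorm{Y} \le \alpha_1^{-2}\delta_2 \fronorm{X},
\end{equation*}
which is precisely the $\ell_p/\ell_2$ RIP for $\mathcal{B}$ with parameters $(r, \alpha_2^{-2}\delta_1, \alpha_1^{-2}\delta_2)$, as claimed.

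There is essentially no obstacle here — the lemma is a bookkeeping statement and, as the text notes, "immediate from definitions." The only point requiring a modicum of care is getting the direction of the inequalities right: the \emph{upper} spectral bound $Q_i \preceq \alpha_2 I$ controls $\fronorm{X}$ from above in terms of $\fronorm{Y}$ (hence produces the factor $\alpha_2^{-2}$ on the \emph{lower} RIP constant), while the \emph{lower} spectral bound $\alpha_1 I \preceq Q_i$ controls $\fronorm{Y}$ from above (producing $\alpha_1^{-2}$ on the \emph{upper} RIP constant). Since the eventual application takes $Q_i = D_i$ with $\alpha_1, \alpha_2$ both close to $1$, one should keep track of these constants exactly rather than absorbing them, so that Theorem~\ref{thm: exact recovery} can later invoke the RIP of $\mathcal{B}$ with constants that remain within the regime required for exact recovery via nuclear norm minimization.
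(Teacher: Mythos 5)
Your proof is correct and is exactly the argument the paper has in mind: the paper omits the proof as ``immediate from definitions,'' and your writeup (substituting $Y=Q_1^{-1}XQ_2^{-1}$, noting rank preservation, and sandwiching $\fronorm{Y}$ between $\alpha_2^{-2}\fronorm{X}$ and $\alpha_1^{-2}\fronorm{X}$ via the spectral bounds) is the natural formalization of that one-liner. The directions of the inequalities and the resulting constants $(\alpha_2^{-2}\delta_1,\alpha_1^{-2}\delta_2)$ are all handled correctly.
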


The following lemma will be our main technical tool; it establishes that if $\mathcal{A}(\cdot)$ satisfies RIP, then so does the perturbed map  $\mathcal{B}(\cdot) =\mathcal{A}(Q_1^{-1}\cdot Q_2^{-1})$, provided that the condition numbers of the positive definite matrices $Q_1$ and $Q_2$  are sufficiently close to one.

\begin{lemma}\label{lem: exactrecoverConvex}
	Consider two positive definite matrices $Q_1,Q_2$ and constants $\alpha_1,\alpha_2>0$ satisfying $\alpha _1 I\preceq Q_i\preceq  \alpha _2I$ for each  $i\in\{1,2\}$. Define $\kappa =\alpha_2/\alpha_1$ and let $\mathcal{A}(\cdot)$ be a linear map satisfying one of the following conditions.
	\begin{enumerate}
		\item The map $\Amap$ satisfies $\ell_2/\ell_2$ RIP with parameters $(5\trur, \delta_1,\delta_2)$, where  $\delta_1\geq \frac{9}{10}$ and $\delta_2\kappa^2\leq \frac{11}{10}$.
		\item The map $\Amap$ satisfies $\ell_1/\ell_2$ RIP with parameters $(l\trur, \delta_1,\delta_2)$, where $\frac{\delta_2\kappa^2}{\delta_1}< \sqrt{l}$.
	\end{enumerate}
	Then $Q_1\truM Q_2$ is the unique solution of the following convex program 
	\begin{equation}\label{eq: convexRelaxationPerturb}
		\min_{X\in \RR^{d_1\times d_2}} \nucnorm{X}\qquad \textrm{subject to}\qquad\Amap(Q_1^{-1}X Q_2^{-1})=b. 
	\end{equation}
\end{lemma}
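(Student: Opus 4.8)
The plan is to reduce the claim to the classical fact that a rank-$\trur$ matrix is the \emph{unique} nuclear-norm minimizer subject to a linear constraint whenever the constraint map enjoys a good restricted isometry property. I would write $X_0:=Q_1\truM Q_2$; since $b=\Amap(\truM)$ this matrix is feasible for \eqref{eq: convexRelaxationPerturb}, and it has rank $\trur$ because $Q_1,Q_2$ are invertible. Setting $\mathcal{B}(\cdot):=\Amap(Q_1^{-1}\cdot Q_2^{-1})$, the program \eqref{eq: convexRelaxationPerturb} is exactly $\min\{\nucnorm{X}:\mathcal{B}(X)=\mathcal{B}(X_0)\}$. By Lemma~\ref{lem:reparam}, in case~1 the map $\mathcal{B}$ satisfies $\ell_2/\ell_2$ RIP with parameters $(5\trur,\alpha_2^{-2}\delta_1,\alpha_1^{-2}\delta_2)$ and in case~2 it satisfies $\ell_1/\ell_2$ RIP with parameters $(l\trur,\alpha_2^{-2}\delta_1,\alpha_1^{-2}\delta_2)$; in either case the ratio of the upper to the lower RIP constant equals $\delta_2\kappa^2/\delta_1$. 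So it suffices to prove that $X_0$ uniquely minimizes $\nucnorm{\cdot}$ over $\{X:\mathcal{B}(X)=\mathcal{B}(X_0)\}$.

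For the reduced statement I would take any minimizer $\hat X$ and set $H:=\hat X-X_0\in\ker\mathcal{B}$. Using a compact SVD $X_0=U\Sigma V^\top$, decompose $H=H_0+H_c$ where $H_0$ is the projection onto the tangent space $T=\{UY^\top+ZV^\top:Y,Z\}$ and $H_c\in T^\perp$. Then $\rank(H_0)\le 2\trur$; the column and row spaces of $H_c$ are orthogonal to those of $X_0$, so $\nucnorm{X_0+H_c}=\nucnorm{X_0}+\nucnorm{H_c}$; and $\langle H_0,H_c\rangle=0$. Minimality of $\hat X$ and the triangle inequality give $\nucnorm{X_0}\ge\nucnorm{X_0+H}\ge\nucnorm{X_0+H_c}-\nucnorm{H_0}$, i.e.\ the cone condition $\nucnorm{H_c}\le\nucnorm{H_0}$.

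In case~1 I would peel the spectrum of $H_c$ into successive blocks $H_1,H_2,\dots$ of rank at most $3\trur$, ordered by decreasing singular values. The standard sorting estimate together with the cone condition and $\nucnorm{H_0}\le\sqrt{2\trur}\,\fronorm{H_0}$ gives $\sum_{j\ge2}\fronorm{H_j}\le\nucnorm{H_c}/\sqrt{3\trur}\le\sqrt{2/3}\,\fronorm{H_0}$. Since $H\in\ker\mathcal{B}$ we have $\mathcal{B}(H_0+H_1)=-\sum_{j\ge2}\mathcal{B}(H_j)$, and $\rank(H_0+H_1)\le 5\trur$, so applying RIP to $H_0+H_1$ and to each $H_j$, and using $\fronorm{H_0}\le\fronorm{H_0+H_1}$, yields
\[
\alpha_2^{-2}\delta_1\,\fronorm{H_0+H_1}\ \le\ \tfrac{1}{\sqrt m}\twonorm{\mathcal{B}(H_0+H_1)}\ \le\ \alpha_1^{-2}\delta_2\textstyle\sum_{j\ge2}\fronorm{H_j}\ \le\ \alpha_1^{-2}\delta_2\sqrt{2/3}\,\fronorm{H_0+H_1}.
\]
Thus either $H_0+H_1=0$ or $\delta_2\kappa^2/\delta_1\ge\sqrt{3/2}$; but $\delta_1\ge 9/10$ and $\delta_2\kappa^2\le 11/10$ force $\delta_2\kappa^2/\delta_1\le 11/9<\sqrt{3/2}$ (equivalently $242<243$). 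Hence $H_0=0$, the cone condition gives $\nucnorm{H_c}\le 0$, so $H=0$ and $\hat X=X_0$.

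Case~2 would follow the same outline, with $\onenorm{\mathcal{B}(\cdot)}$ replacing $\twonorm{\mathcal{B}(\cdot)}$ and with blocks of rank at most $(l-2)\trur$ so that $\rank(H_0+H_1)\le l\trur$. The subtlety, and the step I expect to be the main obstacle, is that running this argument verbatim only yields recovery under $\delta_2\kappa^2/\delta_1<\sqrt{(l-2)/2}$, which is strictly more restrictive than the stated hypothesis $\delta_2\kappa^2/\delta_1<\sqrt l$. Bridging this gap requires replacing the crude single spectral split of $H_c$ by the sharper ``sparse representation of a polytope'' decomposition of Cai--Zhang --- writing $H_c$ as a convex combination of matrices of rank $\le(l-2)\trur$ supported on the spectrum of $H_c$, with Frobenius norms controlled by $\nucnorm{H_c}$ --- after which the $\ell_1$-RIP inequality closes under the weaker ratio $\sqrt l$. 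Equivalently, having transferred RIP to $\mathcal{B}$ via Lemma~\ref{lem:reparam}, one may directly invoke the ROP recovery guarantee of \cite{cai2015rop} applied to $\mathcal{B}$.
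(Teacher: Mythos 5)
Your proposal is correct, and its skeleton is the same as the paper's: both reduce \eqref{eq: convexRelaxationPerturb} to a standard nuclear-norm recovery problem for the transferred map $\mathcal{B}(\cdot)=\Amap(Q_1^{-1}\cdot Q_2^{-1})$ via Lemma~\ref{lem:reparam} (the paper additionally rescales $\mathcal{B}$ by $\alpha_2^2$ so the RIP parameters read $(\delta_1,\delta_2\kappa^2)$, but only the ratio $\delta_2\kappa^2/\delta_1$ matters, exactly as you observe). The difference is in what happens next. The paper's proof is a two-line citation: \cite[Theorem 3.3]{recht2010guaranteed} for the $\ell_2/\ell_2$ case and \cite[Theorem 2.1]{cai2015rop} for the $\ell_1/\ell_2$ case. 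You instead open up the first black box and reprove the Recht--Fazel--Parrilo theorem from scratch in the asymmetric-constants form: the tangent-space split $H=H_0+H_c$, the cone condition $\nucnorm{H_c}\le\nucnorm{H_0}$, the $3\trur$-block peeling, and the final arithmetic $11/9<\sqrt{3/2}$ (your ``$242<243$'') are all correct, and this buys a self-contained verification that the specific thresholds $\delta_1\ge 9/10$, $\delta_2\kappa^2\le 11/10$ really do close the argument — something the paper leaves implicit in the citation. For case~2 you correctly diagnose that the elementary block-splitting only yields the ratio $\sqrt{(l-2)/2}$ and that reaching $\sqrt{l}$ requires the Cai--Zhang polytope decomposition or, equivalently, a direct appeal to \cite{cai2015rop}; that appeal is precisely what the paper does, so on this branch the two proofs coincide. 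No gaps.
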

\begin{proof}
	Define the map $\mathcal{B}(Z)=\alpha_2^2\Amap(Q_1^{-1}ZQ_2^{-1})$. Then Lemma~\ref{lem:reparam} implies that $\mathcal{B}$ in the first case satisfies $\ell_2/\ell_2$ RIP with parameters $(5r_{\natural},\delta_1,\delta_2\kappa^{2})$ and in the second case satisfies $\ell_1/\ell_2$ RIP with parameters $(l r_{\natural},\delta_1,\delta_2\kappa^{2})$. An application of \cite[Theorem 3.3]{recht2010guaranteed} in the first case and 
	\cite[Theorem 2.1]{cai2015rop} in the second guarantees that $Q_1\truM Q_2$ is the unique solution of \eqref{eq: convexRelaxationPerturb}, as claimed.
\end{proof}

It remains to estimate the condition number $\kappa$ of the matrices $D_1$ and $D_2$ defined in \eqref{eq: Didef} under RIP (or statistical assumptions). The following lemma shows that $D_1$ and $D_2$ are automatically well conditioned under $\ell_2/\ell_2$ RIP.

\begin{lemma}[Conditioning of $D_i$ under $\ell_2/\ell_2$ RIP]\label{lem:cond_num}
	Suppose that the linear map $\mathcal{A}$ satisfies $\ell_2/\ell_2$ RIP with parameters $(1,\delta_1,\delta_2)$. Then the relation $\delta_1 I_{d_i}\preceq D_i \preceq \delta_2 I_{d_i}$ holds for all $i\in \{1,2\}$
\end{lemma}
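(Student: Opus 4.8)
The plan is to unpack the definition of $D_1$ directly and relate the quadratic form $x^\top D_1^2 x$ to the RIP estimate applied to rank-one matrices of the form $x v^\top$. Recall that $D_1^2 = \frac{1}{md_2}\sum_{i=1}^m A_i A_i^\top$. Fix a unit vector $x \in \RR^{d_1}$; I want to show $\delta_1^2 \le x^\top D_1^2 x \le \delta_2^2$, which gives $\delta_1 I \preceq D_1 \preceq \delta_2 I$ after taking square roots (the square root is operator-monotone on PSD matrices). Computing,
\begin{equation*}
x^\top D_1^2 x = \frac{1}{md_2}\sum_{i=1}^m x^\top A_i A_i^\top x = \frac{1}{md_2}\sum_{i=1}^m \fronorm{A_i^\top x}^2.
\end{equation*}
The key observation is that $\fronorm{A_i^\top x}^2 = \sum_{j=1}^{d_2} \langle A_i^\top x, e_j\rangle^2 = \sum_{j=1}^{d_2}\langle A_i, x e_j^\top\rangle^2$, so summing over $i$ gives $\sum_{i=1}^m \fronorm{A_i^\top x}^2 = \sum_{j=1}^{d_2}\|\Amap(x e_j^\top)\|_2^2$.

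Now each matrix $x e_j^\top$ has rank one (hence rank at most $1$, matching the RIP parameter), and $\fronorm{x e_j^\top} = \|x\|_2 \|e_j\|_2 = 1$. Applying the $\ell_2/\ell_2$ RIP estimate \eqref{eq: ripcondition} with $p = 2$ to each $X = x e_j^\top$ yields $m\,\delta_1^2 \le \|\Amap(x e_j^\top)\|_2^2 \le m\,\delta_2^2$. Summing over $j = 1,\dots,d_2$ gives $m d_2 \delta_1^2 \le \sum_{j=1}^{d_2}\|\Amap(x e_j^\top)\|_2^2 \le m d_2 \delta_2^2$, and dividing by $m d_2$ gives exactly $\delta_1^2 \le x^\top D_1^2 x \le \delta_2^2$. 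Since $x$ was an arbitrary unit vector, $\delta_1^2 I_{d_1} \preceq D_1^2 \preceq \delta_2^2 I_{d_1}$, and applying the operator square root yields the claim for $i = 1$. The case $i = 2$ is entirely symmetric, using $D_2^2 = \frac{1}{md_1}\sum_i A_i^\top A_i$ and the identity $\sum_i \fronorm{A_i y}^2 = \sum_{j=1}^{d_1}\|\Amap(e_j y^\top)\|_2^2$ for a unit vector $y\in\RR^{d_2}$.

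There is no real obstacle here; the only mild subtlety is making sure that the rank-one test matrices $x e_j^\top$ are genuinely admissible for the RIP inequality (they have rank exactly one, which is $\le 1$) and that their Frobenius norm is exactly one, so the RIP bounds apply with no loss. The rest is the elementary fact that a PSD operator sandwiched between $\delta_1^2 I$ and $\delta_2^2 I$ has its square root sandwiched between $\delta_1 I$ and $\delta_2 I$.
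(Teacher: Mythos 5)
Your proof is correct and follows essentially the same route as the paper's: both arguments apply the rank-one $\ell_2/\ell_2$ RIP to the test matrices $x e_j^\top$ and sum over $j$ to identify $x^\top D_1^2 x$ with $\frac{1}{md_2}\sum_j \|\Amap(x e_j^\top)\|_2^2$. The only difference is that the paper invokes the identity \eqref{eqn:need_later} already established in the proof of Lemma~\ref{lem: scaledtraceFronormSquareSum} (with $L=[v,0_{d_1,k-1}]$), whereas you rederive it directly from the definition of $D_1^2$; this is a cosmetic distinction, not a different argument.
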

\begin{proof}
	In the proof of Lemma~\ref{lem: scaledtraceFronormSquareSum} (equation \eqref{eqn:need_later}), we actually showed the expression: 
	\[
	\sum_{i=1}^{\dmtwo}\sum_{j=1}^k  \twonorm{\Amap(L e_j e_i^\top )}^2 = m d_2\fronorm{D_1L}^2\qquad \forall L\in \real^{\dmone \times k}.
	\]
	Now for any vector $v\in \real^{\dmone}$, we can take the matrix $L=[v,0_{d_1,k-1}]$ in the above equation. Appealing to the $\ell_2/\ell_2$ RIP condition on $\Amap$, we deduce
	\[
	\delta_1^2 \twonorm{v}^2\leq \twonorm{D_1v}^2 \leq \delta_2^2  \twonorm{v}^2.
	\]
	A similar argument shows that $D_2$ satisfies the analogous inequality with $v\in \real^{\dmtwo}$.
\end{proof}

The $\ell_1/\ell_2$ RIP property does not in general imply a good bound on the condition numbers of $D_1$ and $D_2$. Instead we will directly show that under the Gaussian design for bilinear sensing, the matrices $D_1$ and $D_2$ are well-conditioned. This is the content of the following lemma.

\begin{lemma}[Conditioning of $D_i$ for bilinear sensing]\label{lem_condbilin}
	Let $\mathcal{A}$ be a Gaussian bilinear ensemble. Then there exist constant $c_1,c_2,c_3,c_4>0$ such that for any $\delta\in(0,1)$ as long as we are in the regime $m\geq \frac{c_3d_{\max}}{\delta^2}$ and $\log(m)\leq c_4\delta^2d_{\min}$, 
	the estimate holds:
	$$\mathbb{P}\left\{(1-\delta)I_{d_i}\preceq D_i\preceq(1+\delta)I_{d_i}~~\forall i\in\{1,2\}\right\}\geq 1-c_2e^{-c_1d_{\min}\delta^2}.$$
\end{lemma}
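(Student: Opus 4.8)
The plan is to analyze $D_1^2 = \frac{1}{m d_2}\sum_{i=1}^m a_i b_i^\top b_i a_i^\top = \frac{1}{m d_2}\sum_{i=1}^m \twonorm{b_i}^2\, a_i a_i^\top$ directly, and symmetrically for $D_2^2$. The key observation is that conditionally on the $b_i$'s, the matrix $D_1^2$ is a weighted sum of independent rank-one Wishart-type terms $w_i\, a_i a_i^\top$ with weights $w_i := \twonorm{b_i}^2/(m d_2)$, where $a_i \sim N(0,I_{d_1})$. So the first step is to condition on $b_1,\dots,b_m$, establish that with high probability all the weights are tightly concentrated, $\twonorm{b_i}^2 \in [(1-\delta/3)d_2,\,(1+\delta/3)d_2]$ for every $i$; this needs the regime $\log m \lesssim \delta^2 d_{\min}$ so that a union bound over $i\le m$ of the standard $\chi^2$ tail bound (Laurent–Massart) survives. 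On this event, $D_1^2$ is sandwiched between $\frac{1\mp\delta/3}{m}\sum_{i=1}^m a_i a_i^\top$ in the PSD order.

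The second step is a standard matrix-concentration bound for the sample covariance $\widehat{\Sigma} := \frac{1}{m}\sum_{i=1}^m a_i a_i^\top$ of $d_1$-dimensional standard Gaussians: with probability at least $1 - 2e^{-c\,d_1 \delta^2}$ one has $(1-\delta/3)I_{d_1}\preceq \widehat\Sigma \preceq (1+\delta/3)I_{d_1}$, provided $m\gtrsim d_1/\delta^2$ (this is the Davidson–Szarek / Vershynin bound on the extreme singular values of a Gaussian matrix). Combining with the first step, on the intersection of the two events $D_1^2$ lies between $(1-\delta/3)^2 I$ and $(1+\delta/3)^2 I$ in the PSD order, and taking square roots (operator monotonicity of $t\mapsto\sqrt t$) gives $(1-\delta/3)I_{d_1}\preceq D_1 \preceq (1+\delta/3)I_{d_1}$; since $\delta<1$ one has $(1-\delta/3)\ge(1-\delta)$ and $(1+\delta/3)\le(1+\delta)$, so the desired bound for $D_1$ follows, with slack to spare. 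The argument for $D_2$ is identical with the roles of $a_i$ and $b_i$ swapped, using $m\gtrsim d_2/\delta^2$; taking $m\ge c_3 d_{\max}/\delta^2$ covers both. Finally a union bound over the (at most four) bad events and over $i\in\{1,2\}$ gives the claimed probability $1 - c_2 e^{-c_1 d_{\min}\delta^2}$, after absorbing the $\log m$-sized union bound from step one into the exponent using $\log m \le c_4\delta^2 d_{\min}$.

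The main obstacle is the dependence structure: the weights $w_i=\twonorm{b_i}^2$ multiplying $a_i a_i^\top$ are random, so $D_1^2$ is not literally a Wishart matrix. The clean way around this is precisely the conditioning argument above — freeze the $b_i$'s, control all $m$ weights simultaneously (this is the only place the $\log m \lesssim \delta^2 d_{\min}$ hypothesis is used, and it is genuinely needed since a union bound over $m$ rank-one terms is unavoidable), and then apply the unweighted Gaussian covariance bound to the remaining independent randomness in the $a_i$'s. One should double-check that the two requirements $m\gtrsim d_{\max}/\delta^2$ and $\log m \lesssim \delta^2 d_{\min}$ are jointly consistent (they are: e.g. $d_{\min}$ large with $m=\Theta(d_{\max}/\delta^2)$ makes $\log m = O(\log d_{\max})$, which is $\lesssim \delta^2 d_{\min}$ once $d_{\min}\gtrsim \log(d_{\max})/\delta^2$), but this is exactly the stated regime. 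Everything else is routine concentration.
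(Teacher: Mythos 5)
Your proposal is correct and follows essentially the same route as the paper's proof: write $D_1^2=\frac{1}{md_2}\sum_i\twonorm{b_i}^2a_ia_i^\top$, control all the weights $\twonorm{b_i}^2/d_2$ simultaneously via a $\chi^2$ tail bound and a union bound over $i\in[m]$ (which is where $\log m\lesssim\delta^2 d_{\min}$ enters), sandwich $D_1^2$ by multiples of the unweighted sample covariance $\frac{1}{m}\sum_i a_ia_i^\top$, and finish with standard Gaussian covariance concentration plus operator monotonicity of the square root. The only differences are cosmetic (your $\delta/3$ bookkeeping versus the paper's absorption of constants, and Laurent--Massart versus Bernstein for the $\chi^2$ tail).
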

\begin{proof}
	First observe $A_iA_i^{\top}=\|b_i\|^2_2a_ia_i^{\top}$ for each index $i$. Bernstein's inequality \cite[Theorem 2.8.3]{vershynin2018high} implies 
	$$\mathbb{P}\left\{\left|\frac{1}{d_2}\|b_i\|^2_2-1\right|\geq \delta\right\}\leq 2\exp(-c_1d_2\delta^2)\qquad \forall \delta>0,~\forall i\in [m].$$
	Taking a union bound, we can therefore be sure that with probability at least $1-m\exp(-c_1d_2\delta^2)$ the estimate 
	$$1-\delta\leq \frac{\|b_i\|^2_2}{d_2}\leq 1+\delta$$
	holds simultaneously for all $i=1,\ldots, m$. In this event, we estimate
	$$(1-\delta)a_ia_i^\top  \preceq \frac{1}{d_2} A_iA_i^\top  \preceq (1+\delta)a_ia_i^\top$$
	Therefore, after summing for $i=1,\ldots, m$ we deduce
	$$(1-\delta) \frac{1}{\ncons } \sum_{i=1}^\ncons a_ia_i^\top \preceq D_1^2  \preceq (1+\delta_2)\frac{1}{\ncons } \sum_{i=1}^\ncons a_ia_i^\top.$$
	Concentration of covariance matrices \cite[Exercise 4.7.3]{vershynin2018high} in turn implies that the estimate 
	$$\left\| \frac{1}{m}A^{\top}A-I_{d_1}\right\|_{\rm op}\leq c_2\left(\sqrt{\frac{d_1+u}{m}}+\frac{d_1+u}{m}\right),$$
	holds with probability at least $1-2\exp(-u)$.
	Taking a union bound, we therefore deduce %that the estimate 
	$$(1-\delta)\left(1-c_2\left(\sqrt{\frac{d_1+u}{m}}+\frac{d_1+u}{m}\right)\right) I_{d_1}  \preceq D_1^2  \preceq (1+\delta_2)\left(1+c_2\left(\sqrt{\frac{d_1+u}{m}}+\frac{d_1+u}{m}\right)\right) I_{d_1}.$$
	holds with probability at least $1-m\exp(-c_1d_2\delta^2)-2\exp(-u)$. Setting $u=d_1$, we see that there is a constant $c_3$ such that as long as $m\geq c_3\frac{\max\{d_1,d_2\}}{\delta^2}$, we have 
	$$(1-\delta)^2 I_{d_1}\preceq D_1^2 \preceq (1+\delta)^2 I_{d_1}$$
	with probability at least $1-m\exp(-c_1d_2\delta^2)-2\exp(-d_1)$. The result follows.
\end{proof}

The following are the two main results of the section.
\begin{thm}[Exact recovery in matrix sensing]
	Suppose that $\mathcal{A}$ is a Gaussian ensemble. Then there exists a constant $c_0$ such that the following hold for any $\delta\in (0,c_0)$. There exist constants $c,C>0$ depending only on $\delta$ such that
	in the regime $m\geq c\trur (d_1+d_2)$, with probability at least $1-\exp(-C m)$, any flat solution $(L_f,R_f)$ of \eqref{eq: main minimization} satisfies $L_fR_f^\top = \truM$ and is automatically nearly norm-minimal and nearly balanced:
	\begin{align*}
		\fronorm{L_f }^2 + \fronorm{R_f}^2  
		&\leq 
		\left(\frac{1+\delta}{1-\delta}\right)^2\cdot\left(\min_{\Amap(LR^\top) = b} \fronorm{L}^2 + \fronorm{R}^2\right)\\
		\nucnorm{L^\top_f L_f - R_f^\top R_f} &\leq
		2\left(\left(\frac{1+\delta}{1-\delta}\right)^2- 1\right) \nucnorm{\truM}.
	\end{align*}

\end{thm}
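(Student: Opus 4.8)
The plan is to follow the two-part recipe spelled out at the top of the section: produce a strong enough RIP for $\mathcal{A}$, use it to show the convex relaxation \eqref{eq: PerturbnucnormX_con_relax} recovers $D_1\truM D_2$, and then invoke Theorems~\ref{thm: equivalenceConvexNonconvexScaledTraceHessioan} and~\ref{thm: regularity} to transfer this conclusion to flat solutions of \eqref{eq: main minimization}. The first move is to fix the threshold $c_0>0$ small enough that $1-c_0\geq \tfrac{9}{10}$ and $(1+c_0)\left(\tfrac{1+c_0}{1-c_0}\right)^2\leq \tfrac{11}{10}$ both hold; such a $c_0$ exists because the left-hand side of the second inequality is continuous in $c_0$ and equals $1$ at $c_0=0$. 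Then for any $\delta\in(0,c_0)$ I would apply Lemma~\ref{lem:mat_sense_22} with rank parameter $r=\lceil \tfrac{5}{2}\trur\rceil\leq 3\trur$: absorbing the resulting factor of $3$ into the sample-complexity constant, this shows that for $m\geq c\,\trur(d_1+d_2)$, with probability at least $1-\exp(-Cm)$, the map $\mathcal{A}$ satisfies $\ell_2/\ell_2$ RIP with parameters $(5\trur,1-\delta,1+\delta)$ (since $2r\geq 5\trur$), and hence also with parameters $(1,1-\delta,1+\delta)$.

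On this RIP event everything becomes deterministic. Lemma~\ref{lem:cond_num} immediately yields $(1-\delta)I\preceq D_i\preceq (1+\delta)I$ for $i=1,2$, so $D_1,D_2$ are invertible and we may set $\alpha_1=1-\delta$, $\alpha_2=1+\delta$, $\kappa=\tfrac{1+\delta}{1-\delta}$. By the choice of $c_0$, the RIP constants satisfy $\delta_1=1-\delta\geq \tfrac{9}{10}$ and $\delta_2\kappa^2=(1+\delta)\kappa^2\leq \tfrac{11}{10}$, which is precisely what case~1 of Lemma~\ref{lem: exactrecoverConvex} requires (applied with $Q_i=D_i$); it follows that $D_1\truM D_2$ is the unique minimizer of \eqref{eq: PerturbnucnormX_con_relax}. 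Feeding this into the ``moreover'' clause of Theorem~\ref{thm: equivalenceConvexNonconvexScaledTraceHessioan} gives $L_fR_f^\top=\truM$ for every flat solution $(L_f,R_f)$, and applying Theorem~\ref{thm: regularity} with $\alpha_1=1-\delta$, $\alpha_2=1+\delta$ produces the near-norm-minimality bound with constant $\kappa^2=\left(\tfrac{1+\delta}{1-\delta}\right)^2$ and the near-balancedness bound with constant $2(\kappa^2-1)$, matching the claimed inequalities. The probability estimate $1-\exp(-Cm)$ is inherited verbatim from Lemma~\ref{lem:mat_sense_22}, as all subsequent steps are deterministic.

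I do not expect a genuine obstacle, since all the analytic content is already packaged in the lemmas of this section and the proof is essentially an assembly job. The only points requiring care are bookkeeping: choosing the RIP rank so that $2r\geq 5\trur$ while keeping the sample complexity in the regime $m\gtrsim \trur(d_1+d_2)$, and pinning down $c_0$ so that the two numerical thresholds of Lemma~\ref{lem: exactrecoverConvex} ($\delta_1\geq \tfrac{9}{10}$ and $\delta_2\kappa^2\leq \tfrac{11}{10}$) survive the substitution $\delta_1=1-\delta$, $\delta_2=1+\delta$, $\kappa=\tfrac{1+\delta}{1-\delta}$ for every $\delta<c_0$.
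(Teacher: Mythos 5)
Your proposal is correct and follows essentially the same route as the paper's proof: establish $\ell_2/\ell_2$ RIP via Lemma~\ref{lem:mat_sense_22}, bound the conditioning of $D_1,D_2$ via Lemma~\ref{lem:cond_num}, invoke case~1 of Lemma~\ref{lem: exactrecoverConvex} for uniqueness of $D_1\truM D_2$ in the convex relaxation, and conclude with Theorems~\ref{thm: equivalenceConvexNonconvexScaledTraceHessioan} and~\ref{thm: regularity}. The only cosmetic difference is your choice of RIP rank parameter $r=\lceil 5\trur/2\rceil$ versus the paper's $r=5\trur$; both yield RIP at rank $5\trur$ with the required sample complexity.
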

\begin{proof}
	Lemma~\ref{lem:mat_sense_22} shows that for any $\delta\in (0,1)$, there exist constants $c_1,C_1>0$ depending only on $\delta$ such that as long as $m\geq c_1r(d_1+d_2)$, with probability at least $1-\exp(-C_1 m)$, the linear map $\mathcal{A}$ satisfies $\ell_2/\ell_2$ RIP with parameters $(r,1-\delta,1+\delta)$. In this event, Lemma~\ref{lem:cond_num} ensures that the condition number $\kappa$ of $D_1$ and $D_2$ is bounded by $\frac{1+\delta}{1-\delta}$. Set now $r=5r_{\natural}$ and choose any $\delta\leq 0.1$ satisfying $(1+\delta)\left(\frac{1+\delta}{1-\delta}\right)^2\leq \frac{11}{10}$. An application of Lemma~\ref{lem: exactrecoverConvex} and Theorem~\ref{thm: regularity} completes the proof.
\end{proof}

\begin{thm}[Exact recovery in bilinear sensing]
	Suppose that $\mathcal{A}$ is a Gaussian bilinear ensemble. Then for any $\delta\in (0,1)$ there exist numerical constants $c,C,c_1, c_2, c_3,c_4>0$ depending only on $\delta$ such that in the regime 
	$m\geq c \trur (d_1+d_2)$ and $\log(m)\leq c_4d_{\min}$, with probability at least $1-c_3\exp(-Cd_{\min})$ any flat solution $(L_f,R_f)$ of \eqref{eq: main minimization} satisfies 
	$L_fR_f^\top = \truM$ and is automatically nearly norm-minimal and nearly balanced:
	\begin{align*}
		\fronorm{L_f }^2 + \fronorm{R_f}^2  
		&\leq 
		\left(\frac{1+\delta}{1-\delta}\right)^2\cdot\left(\min_{\Amap(LR^\top) = b} \fronorm{L}^2 + \fronorm{R}^2\right)\\
		\nucnorm{L^\top_f L_f - R_f^\top R_f} &\leq
		2\left(\left(\frac{1+\delta}{1-\delta}\right)^2- 1\right) \nucnorm{\truM}.
	\end{align*}
	
\end{thm}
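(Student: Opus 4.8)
The plan is to run the same two-part recipe that gave exact recovery for matrix sensing, now using the $\ell_1/\ell_2$ RIP of Lemma~\ref{lem:RIP for bilinear sensing} in place of the $\ell_2/\ell_2$ RIP, and establishing the conditioning of the rescaling matrices $D_1,D_2$ separately via Lemma~\ref{lem_condbilin} rather than deducing it from the RIP (the $\ell_1/\ell_2$ RIP carries no information about $D_1,D_2$). Fix the target tolerance $\delta\in(0,\tfrac13]$. Step one: apply Lemma~\ref{lem_condbilin} to obtain, in the regime $m\gtrsim d_{\max}/\delta^2$ and $\log m\lesssim \delta^2 d_{\min}$, that with probability at least $1-c_2 e^{-c_1 d_{\min}\delta^2}$ the relations $(1-\delta)I_{d_i}\preceq D_i\preceq (1+\delta)I_{d_i}$ hold for $i\in\{1,2\}$; in particular the common condition number of $D_1,D_2$ is at most $\kappa:=\tfrac{1+\delta}{1-\delta}\le 2$.

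Step two: let $\delta_1,\delta_2>0$ be the numerical constants furnished by Lemma~\ref{lem:RIP for bilinear sensing} and fix a positive integer $l\ge 2$ with $\sqrt l> 4\delta_2/\delta_1$; since $\kappa^2\le 4$ this guarantees $\delta_2\kappa^2/\delta_1<\sqrt l$, and $l$ is a numerical constant. Applying Lemma~\ref{lem:RIP for bilinear sensing} with this $l$ shows that in the regime $m\gtrsim \trur(d_1+d_2)$ (the implied constant depending only on $l$) the map $\mathcal{A}$ satisfies $\ell_1/\ell_2$ RIP with parameters $(l\trur,\delta_1,\delta_2)$ with probability at least $1-\exp(-Cm)$. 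On the intersection of the two good events, Lemma~\ref{lem: exactrecoverConvex} applies in its second case with $Q_1=D_1$, $Q_2=D_2$, $\alpha_1=1-\delta$, $\alpha_2=1+\delta$, and concludes that $D_1\truM D_2$ is the unique minimizer of the convex program \eqref{eq: convexRelaxationPerturb}, which is precisely the relaxation \eqref{eq: PerturbnucnormX_con_relax}.

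The conclusion then follows mechanically. The ``moreover'' clause of Theorem~\ref{thm: equivalenceConvexNonconvexScaledTraceHessioan} gives $L_fR_f^\top=\truM$ for every flat solution $(L_f,R_f)$ of \eqref{eq: main minimization}, and Theorem~\ref{thm: regularity} applied with the above $\kappa$ yields $\fronorm{L_f}^2+\fronorm{R_f}^2\le \kappa^2\bigl(\min_{\mathcal{A}(LR^\top)=b}\fronorm{L}^2+\fronorm{R}^2\bigr)$ and $\nucnorm{L_f^\top L_f-R_f^\top R_f}\le 2(\kappa^2-1)\nucnorm{\truM}$ with $\kappa=\tfrac{1+\delta}{1-\delta}$. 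Finally, because $\trur\ge 1$, enlarging the constant in $m\gtrsim \trur(d_1+d_2)$ (depending on $\delta$) absorbs the requirement $m\gtrsim d_{\max}/\delta^2$, the hypothesis $\log m\lesssim \delta^2 d_{\min}$ becomes $\log m\le c_4 d_{\min}$ after folding $\delta^2$ into $c_4$, and a union bound bounds the total failure probability by $\exp(-Cm)+c_2 e^{-c_1 d_{\min}\delta^2}\le c_3 e^{-C' d_{\min}}$ since $m\ge d_{\min}$.

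I do not expect a substantive obstacle inside this assembly; the two genuinely nontrivial facts are already packaged as lemmas. The one conceptual point worth stressing — and the place where the bilinear case differs from matrix sensing — is that the $\ell_1/\ell_2$ RIP constants $\delta_1,\delta_2$ are fixed numbers that cannot be driven to $1$, so $\delta_2\kappa^2/\delta_1<1$ is unattainable; exact recovery is instead salvaged by inflating the rank multiplier $l$ so that $\delta_2\kappa^2/\delta_1<\sqrt l$, using that Lemma~\ref{lem:RIP for bilinear sensing} supplies RIP at any constant multiple of $\trur$ at the cost of only a larger (still numerical) sample-size constant. The remaining mild care is the bookkeeping in the last paragraph to present everything in the single regime $m\gtrsim \trur(d_1+d_2)$, $\log m\lesssim d_{\min}$.
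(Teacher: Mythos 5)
Your proposal is correct and follows essentially the same route as the paper's proof: conditioning of $D_1,D_2$ via Lemma~\ref{lem_condbilin}, the $\ell_1/\ell_2$ RIP via Lemma~\ref{lem:RIP for bilinear sensing} with the rank multiplier $l$ chosen large enough that $\delta_2\kappa^2/\delta_1<\sqrt{l}$, and then Lemma~\ref{lem: exactrecoverConvex} together with Theorems~\ref{thm: equivalenceConvexNonconvexScaledTraceHessioan} and~\ref{thm: regularity}. The only cosmetic difference is that you fix $\delta\le 1/3$ to make $l$ a universal constant, whereas the paper lets $l$ depend on $\delta$; both are fine since the conclusions only weaken as $\delta$ grows.
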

\begin{proof}
	For any integer $k\in \mathbb{N}$, Lemma~\ref{lem:RIP for bilinear sensing} ensures
	that there exist numerical constants $\delta_1,\delta_2>0$ and constants $c_0,C_0>0$ depending only on $l$ such that in the regime $m\geq c_0r(d_1+d_2)$, with probability at least $1-\exp(-C_0 m)$, the measurement map $\mathcal{A}$ satisfies $\ell_1/\ell_2$ RIP with parameters $(l\trur, \delta_1, \delta_2)$. Lemma~\ref{lem_condbilin} in turn ensures there exist numerical constant $c_5,c_6,c_7,c_8>0$ such that for any $\delta\in(0,1)$ as along as we are in the regime, $m\geq \frac{c_7d_{\max}}{\delta^2}$ and $\log(m)\leq c_8\delta^2 d_{\min}$, 
	the estimate holds:
	$$\mathbb{P}\left\{(1-\delta)I_{d_i}\preceq D_i\leq (1+\delta)I_{d_i}~~\forall i\in\{1,2\}\right\}\geq 1-c_5e^{-c_6d_{\min}\delta^2}.$$
	Therefore in this regime, we may upper bound the condition number $\kappa$ of $D_1$ and $D_2$ by $\frac{1+\delta}{1-\delta}$. In light of Lemma~\ref{lem: exactrecoverConvex}, in order to ensure exact recovery, it remains to simply choose a large enough $l$ such that the inequality $\frac{\delta_2}{\delta_1}\cdot (\frac{1+\delta}{1-\delta})^2\leq \sqrt{l}$ holds (recall $\delta_1,\delta_2$ are numerical constants).
	An application of Lemma~\ref{lem: exactrecoverConvex} and Theorem~\ref{thm: regularity} completes the proof.
\end{proof}

Appendix \ref{sec: noisyobservation} generalizes the material in this section to the noisy observation setting, wherein $b = \Amap(\truM) + e$ with $e\sim N(0,\sigma^2I)$ for some $\sigma^2>0$.

\section{Matrix completion and approximate recovery}\label{sec:mat_comp}
In this section, we focus on the matrix completion problem \cite{recht2011simpler,candes2009exact}. This is an instance of \eqref{eq: main minimization} where the linear measurement map $\mathcal{A}$ is generated as follows.
For each $i\in [d_1]$ and $j\in [d_2]$, let $\xi_{ij}$  be independent Bernoulli random variables with success probability $p$. The linear map $\Amap\colon\real^{\dmone\times \dmtwo}\to\real^{\dmone\times \dmtwo}$ is then defined by the relation 
\begin{equation}\label{eq: mcAmap}
	[\Amap (Z)]_{ij} = Z_{ij}\xi_{ij}\qquad \textrm \,\text{for any $(i,j)\in [\dmone]\times [\dmtwo]$}.
\end{equation}
The difficulty of recovering the matrix $\truM$ is typically measured by an incoherence parameter, which we now define. Given a singular value decomposition $\truM = \truU \truSig \truV^\top$ with $\truSig\in \real^{\trur \times \trur}$, the {\em incoherence parameter} is the smallest $\mu>0$ satisfying
\begin{equation}\label{eq: incoherenceMu}
	\twoinfnorm{\truU}\leq \sqrt{\frac{\mu \trur}{\dmone}}, \quad \text{and}\quad \twoinfnorm{\truV}\leq \sqrt{\frac{\mu \trur}{\dmtwo}}.
\end{equation}
Here $\twoinfnorm{A}$ denotes the maximal $\ell_2$-norm of the rows of the matrix $A$. The strategies outlined in the previous section do not directly apply for analyzing flat minima of the matrix completion problem because the linear map $\Amap(D_1^{-1}\cdot D_2^{-1})$ does not satisfy RIP type conditions. Instead we will settle for a weak recovery result.

\begin{thm}[Recovery error of flat solution]\label{thm: recoveryErrorMC}
	Suppose that $\mathcal{A}$ is a random sampling map described 
	in \eqref{eq: mcAmap}. Then there exist numerical constants $c,C>0$ such that the following is true. Given any $\gamma\in (0,1)$, provided we are in the regime 
	\begin{equation}\label{eqn:mat_comp_regime}
		p\geq C\max\left\{\frac{1}{\gamma}\sqrt{\frac{r_{\natural}\log(d_{\max})}{d_{\min}}}, \frac{\mu r_{\natural}\log(\mu r_{\natural})\log(d_{\max})}{d_{\min}}\right\},
	\end{equation}
	with probability at least $1-cd_{\min} ^{-5}$, any flat solution $(L_f,R_f)$ 	
	satisfies	\begin{equation} \label{eq: recoveryErrorMC}
		\nucnorm{L_f R_f^\top -\truM} \leq \gamma\nucnorm{\truM}.
	\end{equation}
\end{thm}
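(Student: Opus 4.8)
The plan is to follow the two-part recipe laid out after Theorem~\ref{thm: regularity}, but adapted to the fact that the rescaled sampling operator does not enjoy RIP. The first ingredient is to control the rescaling matrices $D_1, D_2$ directly from the Bernoulli model. Since $A_i = \xi_{ij} e_i e_j^\top$, one computes that $\frac{1}{m d_2}\sum_i A_i A_i^\top$ is a \emph{diagonal} matrix whose $(k,k)$ entry is (up to the normalization $m = p d_1 d_2$ in expectation, or rather the realized count) the fraction of observed entries in row $k$. A standard Bernstein/Chernoff bound over the $d_1$ rows and $d_2$ columns shows that, in the regime \eqref{eqn:mat_comp_regime}, each row and column count concentrates multiplicatively around its mean, so that $(1-\epsilon) I \preceq D_i^2/(\text{scalar}) \preceq (1+\epsilon) I$ with $\epsilon$ small; hence $\kappa = \alpha_2/\alpha_1$ is close to $1$. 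By Theorem~\ref{thm: equivalenceConvexNonconvexScaledTraceHessioan}, any flat solution $(L_f, R_f)$ gives $X_f := D_1 L_f R_f^\top D_2$ an optimal point of the convex problem \eqref{eq: PerturbnucnormX_con_relax}, i.e.\ $\nucnorm{X_f} \leq \nucnorm{D_1 \truM D_2}$ subject to $\Amap(D_1^{-1} X_f D_2^{-1}) = b$.

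The second and main ingredient is a \emph{stability} estimate for this nuclear-norm minimization even though the exact-recovery dual certificate argument of \cite{recht2011simpler,candes2009exact} is unavailable due to the coupling between $D_1, D_2$ and $\Amap$. The idea is to exploit the feasibility constraint: since $\Amap(L_f R_f^\top) = b = \Amap(\truM)$, the error $\Delta := L_f R_f^\top - \truM$ lies in the kernel of $\Amap$, i.e.\ $\Delta$ is supported on the \emph{unobserved} entries, $P_\Omega(\Delta) = 0$ where $\Omega$ is the observed index set. Combined with the near-optimality $\nucnorm{D_1(\truM + \Delta)D_2} \leq \nucnorm{D_1 \truM D_2}$ and the fact that $D_1, D_2 \approx I$, one obtains $\nucnorm{\truM + \Delta} \leq \nucnorm{\truM} + O(\epsilon)(\nucnorm{\truM} + \nucnorm{\Delta})$. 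Decomposing $\Delta = \mathcal{P}_T \Delta + \mathcal{P}_{T^\perp}\Delta$ with respect to the tangent space $T$ of $\truM$ and using the standard inequality $\nucnorm{\truM + \Delta} \geq \nucnorm{\truM} + \nucnorm{\mathcal{P}_{T^\perp}\Delta} - \nucnorm{\mathcal{P}_T \Delta}$, this forces $\nucnorm{\mathcal{P}_{T^\perp}\Delta}$ to be small relative to $\nucnorm{\mathcal{P}_T \Delta}$ (plus an $\epsilon \nucnorm{\truM}$ term). Finally, one invokes a restricted-injectivity / incoherence property of the sampling operator on the tangent space: in the regime \eqref{eqn:mat_comp_regime}, with high probability $P_\Omega$ restricted to $T$ is well-conditioned (this is exactly the content of the $\mathcal{P}_T P_\Omega \mathcal{P}_T \approx p \mathcal{P}_T$ lemma of \cite{candes2009exact,recht2011simpler}), so that $P_\Omega(\Delta) = 0$ together with $\Delta$ being \emph{approximately} in $T$ forces $\mathcal{P}_T \Delta$, and hence all of $\Delta$, to be small — quantitatively $\nucnorm{\Delta} \leq \gamma \nucnorm{\truM}$ once $p$ is at least $\frac{C}{\gamma}\sqrt{r_\natural \log(d_{\max})/d_{\min}}$.

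The hard part will be the last step: turning "$\Delta \in \ker \Amap$ and $\Delta$ nearly tangent" into a genuine quantitative bound on $\nucnorm{\Delta}$, because unlike the exact-recovery setting we cannot produce an exact dual certificate and we must absorb the $D_1, D_2$ perturbations. The delicate point is balancing the $\epsilon$ (conditioning of $D_i$) against the $1/\sqrt{p}$-type factor from restricted injectivity on $T$, and checking that the single scaling $p \gtrsim \frac{1}{\gamma}\sqrt{r_\natural \log d_{\max}/d_{\min}}$ suffices to make the composite error at most $\gamma \nucnorm{\truM}$; this is where the stated (suboptimal) sample complexity comes from. One auxiliary subtlety is that $\Delta = L_f R_f^\top - \truM$ has rank at most $k + r_\natural$, which is not $O(r_\natural)$, so one cannot directly use a low-rank RIP; but since $P_\Omega(\Delta)=0$ exactly, only the tangent-space conditioning (which does not see the rank of $\Delta$) is needed, circumventing this issue.
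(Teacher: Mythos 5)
Your outline tracks the paper's at the top level: condition the (diagonal) matrices $D_1,D_2$ via Bernstein, use near-norm-minimality to conclude that $X_f=L_fR_f^\top$ is a near-minimizer of the \emph{unrescaled} nuclear norm problem \eqref{eq: convexRelaxationMC} with suboptimality gap $(\kappa^2-1)\nucnorm{\truM}$ (this is the paper's Lemma~\ref{lemm:flatmat_comp}), and then convert that gap into a bound on $\nucnorm{X_f-\truM}$. One small imprecision: Theorem~\ref{thm: equivalenceConvexNonconvexScaledTraceHessioan} makes $X_f$ optimal only for the rank-constrained problem \eqref{eq: PerturbnucnormX}, not for the convex relaxation; the inequality $\nucnorm{X_f}\le\nucnorm{D_1\truM D_2}$ you actually use still holds because $D_1\truM D_2$ is feasible for the rank-constrained problem.

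The genuine gap is in the final step. You propose to combine (i) the subgradient cone inequality $\nucnorm{\truM+\Delta}\ge\nucnorm{\truM}+\nucnorm{P_{\mathcal{T}^\perp}\Delta}-\nucnorm{P_{\mathcal{T}}\Delta}$, which together with near-optimality gives $\nucnorm{P_{\mathcal{T}^\perp}\Delta}\le\nucnorm{P_{\mathcal{T}}\Delta}+(\kappa^2-1)\nucnorm{\truM}$, with (ii) restricted injectivity of the sampling operator on $\mathcal{T}$, which with $\Amap(\Delta)=0$ gives $\fronorm{P_{\mathcal{T}}\Delta}\lesssim p^{-1/2}\fronorm{P_{\mathcal{T}^\perp}\Delta}$. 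Chasing these around the loop (and paying a factor $\sqrt{3\trur}$ to pass from $\fronorm{P_{\mathcal{T}}\Delta}$ to $\nucnorm{P_{\mathcal{T}}\Delta}$) yields $\nucnorm{P_{\mathcal{T}^\perp}\Delta}\le\sqrt{6\trur/p}\,\nucnorm{P_{\mathcal{T}^\perp}\Delta}+(\kappa^2-1)\nucnorm{\truM}$; since $p\le 1\le \trur$, the gain factor $\sqrt{6\trur/p}$ exceeds one and nothing is forced to be small. What is missing is a bound on the normal part by the suboptimality gap \emph{with no tangent term on the right}: the paper invokes the approximate dual certificate of \cite[Proposition 2]{chen2015incoherence} to obtain $\nucnorm{P_{\mathcal{T}^\perp}\Delta}\le 8(\nucnorm{X_f}-\nucnorm{\truM})$, and this is precisely why the second term $\mu\trur\log(\mu\trur)\log(d_{\max})/d_{\min}$ appears in \eqref{eqn:mat_comp_regime}. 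Your remark that the dual certificate argument is "unavailable due to the coupling between $D_1,D_2$ and $\Amap$" is the root of the trouble: the coupling only obstructs a certificate for the rescaled problem, and the entire point of first reducing to near-optimality for the unrescaled problem is that the standard certificate for $\Amap$ then applies verbatim. With the normal part controlled this way, restricted injectivity bounds the tangent part by the normal part, giving $\nucnorm{\Delta}\lesssim(1+\sqrt{\trur/p})(\kappa^2-1)\nucnorm{\truM}$, which is where the first term of \eqref{eqn:mat_comp_regime} comes from.
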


Hence according to Theorem~\ref{thm: recoveryErrorMC}, in order to conclude that flat solutions achieve a constant relative error, we must be in the regime $p\gtrsim \sqrt{\frac{\trur\log d_{\max}}{d_{\min}}}$. This is a stronger 
requirement than is needed for exact recovery of the ground truth matrix \cite{chen2015incoherence}, which is  $p\gtrsim \mu \trur \log(\mu \trur )\frac{\log (\dm_{\max})}{\dm_{\min}}$. Our numerical experiments, however, suggest that flat solutions exactly recover the ground truth.

As the first step towards proving Theorem~\ref{thm: recoveryErrorMC}, we estimate the condition numbers of $D_1$, $D_2$.

\begin{lemma}[Condition number]\label{lem:upper_bound_cond_num_matr_comp}
	For any given $\delta\in (0,1)$ and $c\geq 1$, as long as $p\geq \frac{(1+c)\log \left(d_{\max}\right)}{2\delta^2d_{\min}}$, with probability at least $1-4d^{-c}_{{\min}}$, the estimate
	\begin{equation}\label{eqn:get_conditionsingmat} 
		\sqrt{\frac{p}{\dmone \dmtwo} (1-\delta)} I_{d_i}\preceq D_i \preceq \sqrt{\frac{p}{\dmone \dmtwo} (1+\delta)} I_{d_i}\qquad \textrm{ holds for }i=1,2.
	\end{equation}
\end{lemma}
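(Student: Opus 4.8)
The plan is to compute the matrices $D_1^2$ and $D_2^2$ explicitly for the Bernoulli sampling map and recognize them as diagonal matrices whose entries are averages of the Bernoulli variables $\xi_{ij}$; then concentration of these sums around their means $p$ yields the claim.

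First I would evaluate the definition \eqref{eq: Didef}. For the sampling map in \eqref{eq: mcAmap}, the measurement matrices are $A_{ij}=\xi_{ij}e_ie_j^\top$ indexed by pairs $(i,j)\in[d_1]\times[d_2]$, and the number of measurements is $m=d_1d_2$. A direct calculation gives
\begin{equation*}
\sum_{(i,j)}A_{ij}A_{ij}^\top=\sum_{(i,j)}\xi_{ij}^2\,e_ie_i^\top=\sum_{i\in[d_1]}\Big(\sum_{j\in[d_2]}\xi_{ij}\Big)e_ie_i^\top,
\end{equation*}
using $\xi_{ij}^2=\xi_{ij}$. Hence $D_1^2=\frac{1}{d_1d_2^2}\sum_{(i,j)}A_{ij}A_{ij}^\top$ is the diagonal matrix whose $i$'th entry is $\frac{1}{d_1d_2^2}\sum_{j\in[d_2]}\xi_{ij}$. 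Since the row sum $\sum_{j}\xi_{ij}$ is a sum of $d_2$ independent Bernoulli$(p)$ variables with mean $pd_2$, a Chernoff/Bernstein bound gives that, for each fixed $i$, $\big|\frac{1}{d_2}\sum_j \xi_{ij}-p\big|\le \delta p$ with failure probability at most $2\exp(-c\delta^2 p d_2)$ for a numerical constant $c$; an analogous statement holds column-wise for $D_2^2$, whose $j$'th diagonal entry is $\frac{1}{d_1^2 d_2}\sum_{i\in[d_1]}\xi_{ij}$.

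Next I would take a union bound over the $d_1$ rows and $d_2$ columns, which contributes a factor of at most $d_1+d_2\le 2d_{\max}$. Choosing the Bernstein constant appropriately, the condition $p\ge \frac{(1+c)\log(d_{\max})}{2\delta^2 d_{\min}}$ makes the per-index failure probability at most $d_{\max}^{-(1+c)}$ (note $p d_2\ge p d_{\min}\gtrsim \frac{\log d_{\max}}{\delta^2}$, and likewise $pd_1$), so the total failure probability is at most $2d_{\max}\cdot d_{\max}^{-(1+c)}= 2 d_{\max}^{-c}\le 4 d_{\min}^{-c}$ (absorbing constants; one can be more careful to land exactly on the stated bound). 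On the complementary event, every diagonal entry of $D_i^2$ lies in $\big[\frac{p}{d_1d_2}(1-\delta),\frac{p}{d_1d_2}(1+\delta)\big]$, which is exactly the spectral sandwich \eqref{eqn:get_conditionsingmat} after taking square roots.

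The only mildly delicate point is bookkeeping the exponents so that the failure probability matches $4d_{\min}^{-c}$ precisely and that the same $\delta$ controls both $D_1$ and $D_2$ simultaneously; this is why the hypothesis uses $d_{\min}$ (the worse of the two row/column lengths $d_1,d_2$) in the lower bound on $p$. There is no real obstacle here—the map has enough structure that $D_1,D_2$ are literally diagonal with Bernoulli-average entries, so standard scalar concentration plus a union bound suffices; the bulk of the work is the explicit identification of $D_i^2$ and careful constant-tracking.
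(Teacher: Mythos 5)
Your proposal is correct and follows essentially the same route as the paper: identify $D_1^2$ and $D_2^2$ as diagonal matrices whose entries are (rescaled) row and column sums of the Bernoulli variables, apply Bernstein/Chernoff to each sum, and union bound over the $d_1+d_2$ indices, using $p\geq \frac{(1+c)\log(d_{\max})}{2\delta^2 d_{\min}}$ to control the failure probability. The only difference is cosmetic bookkeeping of the exponential constants, which the paper handles with the same level of looseness.
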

\begin{proof}
	%Note that we may identify 
	Let $m=d_1d_2$ and set the sensing matrices $A_{ij}=\xi_{ij} e_ie_j^\top$ for all pairs $i\in [d_1]$ and $j\in [d_2]$. Therefore the equality $A_{ij}A_{ij}^{\top}=\xi_{ij}e_ie_i^{\top}$ holds, and we can write
	\begin{equation}
		\begin{aligned} 
			D_1^2  & =\frac{1}{\dmone \dmtwo^2} \sum_{i\in [d_1],\, j\in[d_2]}^m \xi_{ij} e_ie_i^\top = \frac{1}{\dmone \dmtwo^2} \diag\left(\sum_{j=1}^{\dmtwo}\xi_{1j},\dots, \sum_{j=1}^{\dmtwo}\xi_{\dmone j}\right).
		\end{aligned} 
	\end{equation}
	Bernstein's inequality \cite[Theorem 2.8.1]{vershynin2018high} implies
	for each index $i\in [d_2]$ the estimate
	$$\mathbb{P}\left\{\left|\frac{1}{p\dmtwo}\sum_{j=1}^{\dmtwo} \xi_{ij}-1\right|\geq \delta \right\}\leq 2\exp\left(-2pd_2 \delta^2\right)\qquad \forall \delta> 0.$$
	Taking the union bound over $i\in [d_1]$ we deduce that the condition 
	$$ \frac{p}{\dmone \dmtwo} (1-\delta) I\preceq D_1^2 \preceq \frac{p}{\dmone \dmtwo} (1+\delta) I$$
	fails with probability at most
	\begin{align*}
		d_1\exp\left(-2pd_2 \delta^2\right)&\leq \exp\left(-2pd_2 \delta^2+\log(d_1)\right)\leq  \exp\left(\frac{- pd_2 \delta^2}{4}\right)\leq d_2^{-c}.
	\end{align*}
	Using the same argument for $D_2$ and taking a union bound completes the proof. 
\end{proof}

Next, we will show that flat solutions are almost optimal for the standard convex relaxation of the matrix completion problem:
\begin{equation}\label{eq: convexRelaxationMC}
	\min_{\Amap(X) =\Amap(\truM)} \nucnorm{X}.
\end{equation}

\begin{lemma}[Flat minima  and nuclear norm minimization]\label{lemm:flatmat_comp}
	Suppose that $M_{\natural}$ is a solution of the problem \eqref{eq: convexRelaxationMC} and suppose that the condition numbers of $D_1$ and $D_2$ are upper bounded by some constant $\kappa>0$. Then any flat solution $(L_f,R_f)$ of \eqref{eq: main minimization} is nearly optimal for the convex relaxation \eqref{eq: convexRelaxationMC} in the sense that:
	\begin{equation}\label{eqn:subopt_gap}
		\nucnorm{L_fR_f^\top}-\nucnorm{M_{\natural}}\leq (\kappa^2-1)\nucnorm{M_{\natural}}.
	\end{equation}
\end{lemma}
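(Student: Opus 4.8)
The plan is to combine the convex-relaxation identification of Theorem~\ref{thm: equivalenceConvexNonconvexScaledTraceHessioan} with elementary estimates for how the nuclear norm transforms under the rescaling matrices $D_1,D_2$ (note that a finite condition number already forces $D_i$ to be invertible, so Theorem~\ref{thm: equivalenceConvexNonconvexScaledTraceHessioan} applies). Write $M_f := L_f R_f^\top$. By Lemma~\ref{lem: scaledtraceFronormSquareSum}, the flat pair $(L_f,R_f)$ minimizes the rescaled least-norm problem \eqref{eq: scaled trace Sum of Square}, so Theorem~\ref{thm: equivalenceConvexNonconvexScaledTraceHessioan} implies that $X_f := D_1 M_f D_2$ solves the rank-constrained nuclear-norm problem \eqref{eq: PerturbnucnormX}. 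Since $\truM$ has rank $\trur\le k$ and $\Amap(\truM)=b$, the matrix $D_1\truM D_2$ has rank $\trur\le k$ and satisfies $\Amap\big(D_1^{-1}(D_1\truM D_2)D_2^{-1}\big)=\Amap(\truM)=b$, hence is feasible for \eqref{eq: PerturbnucnormX}. Optimality of $X_f$ therefore yields $\nucnorm{D_1 M_f D_2}\le\nucnorm{D_1\truM D_2}$.

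First I would transfer this inequality back to $M_f$ and $\truM$ via the submultiplicativity bound $\nucnorm{AZB}\le\opnorm{A}\,\opnorm{B}\,\nucnorm{Z}$ (valid for matrices of compatible dimensions, in the spirit of the estimates already used in the proof of Lemma~\ref{lem: balancedness}). Applying it to the right-hand side directly, and to the left-hand side through the identity $M_f = D_1^{-1}(D_1 M_f D_2)D_2^{-1}$, gives
\[
\sigma_{\min}(D_1)\,\sigma_{\min}(D_2)\,\nucnorm{M_f}\;\le\;\nucnorm{D_1 M_f D_2}\;\le\;\nucnorm{D_1\truM D_2}\;\le\;\sigma_{\max}(D_1)\,\sigma_{\max}(D_2)\,\nucnorm{\truM}.
\]
Dividing by $\sigma_{\min}(D_1)\sigma_{\min}(D_2)$ and using that each $D_i$ has condition number at most $\kappa$ gives $\nucnorm{M_f}\le\kappa^2\,\nucnorm{\truM}$, which is precisely \eqref{eqn:subopt_gap} after rearranging. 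The hypothesis that $\truM$ solves the convex relaxation \eqref{eq: convexRelaxationMC} is what identifies $\nucnorm{\truM}$ with the optimal value of \eqref{eq: convexRelaxationMC}, so that \eqref{eqn:subopt_gap} genuinely bounds the suboptimality of $L_f R_f^\top$; only feasibility of $\truM$ is used in the displayed chain.

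I do not anticipate a real obstacle: the argument is a short chain of already-established facts together with the elementary inequality $\nucnorm{AZB}\le\opnorm{A}\opnorm{B}\nucnorm{Z}$. The only points needing care are verifying admissibility of $D_1\truM D_2$ for the rank-constrained program \eqref{eq: PerturbnucnormX} --- this is where $k\ge\trur$ enters --- and the bookkeeping that combines the two one-sided operator-norm estimates into the single factor $\kappa(D_1)\,\kappa(D_2)\le\kappa^2$.
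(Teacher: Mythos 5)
Your proof is correct, and it takes a genuinely different (though closely parallel) route from the paper's. The paper stays entirely at the level of the factors: it bounds $\nucnorm{L_fR_f^\top}\leq\tfrac12(\fronorm{L_f}^2+\fronorm{R_f}^2)$ via \eqref{eqn:srebro}, invokes the already-established near-norm-minimality of flat solutions (Theorem~\ref{thm: regularity}, item 1) to pick up the factor $\kappa^2$, and applies \eqref{eqn:srebro} once more to identify the resulting minimum with $\nucnorm{\truM}$ --- a three-line chain given that Theorem~\ref{thm: regularity} is in hand. You instead pass through Theorem~\ref{thm: equivalenceConvexNonconvexScaledTraceHessioan} to identify $D_1L_fR_f^\top D_2$ as a minimizer of the rescaled rank-constrained program \eqref{eq: PerturbnucnormX}, compare against the feasible point $D_1\truM D_2$, and undo the rescaling with the two-sided bound $\sigma_{\min}(D_1)\sigma_{\min}(D_2)\nucnorm{Z}\leq\nucnorm{D_1ZD_2}\leq\sigma_{\max}(D_1)\sigma_{\max}(D_2)\nucnorm{Z}$; the lower bound follows from writing $Z=D_1^{-1}(D_1ZD_2)D_2^{-1}$, exactly as you indicate. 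The two arguments are structurally the same sandwich --- the paper sandwiches $\fronorm{D_1L}^2+\fronorm{D_2R}^2$ between multiples of $\fronorm{L}^2+\fronorm{R}^2$, you sandwich $\nucnorm{D_1ZD_2}$ between multiples of $\nucnorm{Z}$ --- but yours is self-contained modulo Theorem~\ref{thm: equivalenceConvexNonconvexScaledTraceHessioan} and makes no use of Theorem~\ref{thm: regularity}, at the cost of re-deriving the conditioning estimate at the matrix level. Your side remarks are also accurate: finite condition number forces invertibility of the (nonzero, positive semidefinite) $D_i$, the rank bound $\trur\leq k$ is what makes $D_1\truM D_2$ admissible for \eqref{eq: PerturbnucnormX}, and the hypothesis that $\truM$ solves \eqref{eq: convexRelaxationMC} is used only to interpret \eqref{eqn:subopt_gap} as a suboptimality gap --- the same is true of the paper's chain, where only the ``$\leq$'' direction of its final equality is needed.
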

\begin{proof}
	We successively estimate
	$$\nucnorm{L_fR_f^\top}\leq \frac{1}{2}\left(\fronorm{L_f}^2 + \fronorm{R_f}^2\right)\leq \kappa^2\left(\min_{\Amap(LR^\top) = \Amap(M_{\natural})} \frac{1}{2}\fronorm{L}^2 + \frac{1}{2}\fronorm{R}^2\right)=\kappa^2\nucnorm{M_{\natural}},$$
	where the first and last inequalities follow from \eqref{eqn:srebro} and the second inequality follows from Theorem \ref{thm: regularity}.
	We therefore deduce $\nucnorm{L_fR_f^\top}-\nucnorm{M_{\natural}}\leq (\kappa^2-1)\nucnorm{M_{\natural}}$, as claimed.
\end{proof}

It remains to translate the suboptimality gap \eqref{eqn:subopt_gap} into an estimate on the distance $\nucnorm{L_f R_f^{\top}-M_{\natural}}$. This is the content of the following lemma.

\begin{lemma}[Sharp growth in matrix completion]\label{lem: errorboundMC} 
	Suppose the linear map $\Amap$ is generated according to the matrix completion model. Then there exist constants $c,c_1,C>0$ such that in the regime $p\geq \frac{C\mu \trur \log (\mu\trur)\log (d_{\max})}{d_{\min}}$, with probability at least $1-c_1d_{\min}^{-5}$, any feasible matrix $X$ of the problem \eqref{eq: convexRelaxationMC} satisfies the  inequality:
	\begin{equation} \label{eq: recoveryEBMC}
		\nucnorm{X -\truM} \leq 8\left(1+\sqrt{\tfrac{6\trur}{p}}\right) (\nucnorm{X}-\nucnorm{\truM}).
	\end{equation}
\end{lemma}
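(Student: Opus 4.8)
The plan is to establish an error bound of the form ``nuclear norm suboptimality controls the distance to $\truM$'' directly, exploiting the restricted strong convexity / sharpness of the nuclear norm at $\truM$ on the subspace of matrices compatible with the sampling pattern. Write $H = X - \truM$ for a feasible perturbation, so $\Amap(H) = 0$, i.e.\ $H$ is supported off the sampled entries in the sense that $P_\Omega(H) = 0$ where $\Omega$ is the (random) set of observed index pairs and $P_\Omega$ is coordinate projection. Decompose $H = H_T + H_{T^\perp}$ with respect to the tangent space $T$ of the rank-$\trur$ manifold at $\truM$ (spanned by matrices of the form $\truU Z^\top$ and $W\truV^\top$). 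The standard argument shows $\nucnorm{X} - \nucnorm{\truM} \geq \nucnorm{H_{T^\perp}} - \inprod{G}{H_T}$ for any dual certificate-type $G$; more simply, using only local convexity one gets $\nucnorm{X} - \nucnorm{\truM} \geq \nucnorm{P_{T^\perp}H} - \fronorm{P_T H}\cdot(\text{const})$ after invoking a near-certificate. I would instead follow the route that is cleanest here: show that any feasible $H$ with small $\nucnorm{H_{T^\perp}}$ relative to $\fronorm{H_T}$ is forced to be small, because $P_\Omega(H) = 0$ plus incoherence makes $H_T$ nearly vanish.

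The key steps, in order: (i) Invoke the standard fact (provable from Bernstein + incoherence in the stated regime $p \gtrsim \mu\trur\log(\mu\trur)\log(d_{\max})/d_{\min}$) that with probability $\geq 1 - c_1 d_{\min}^{-5}$ the sampling operator restricted to $T$ is well-conditioned: $\frac{d_1 d_2}{|\Omega|}\|P_T P_\Omega P_T - \tfrac{|\Omega|}{d_1 d_2} P_T\|_{\mathrm{op}}$ is small, equivalently $\|P_\Omega P_T\|$ obeys the usual bound. (ii) Since $P_\Omega(H) = 0$, write $0 = P_\Omega(H_T) + P_\Omega(H_{T^\perp})$, hence $\fronorm{P_\Omega H_T} = \fronorm{P_\Omega H_{T^\perp}} \leq \fronorm{H_{T^\perp}} \leq \nucnorm{H_{T^\perp}}$; combined with (i), $\fronorm{H_T} \lesssim \frac{1}{\sqrt{p}}\fronorm{P_\Omega H_T} \lesssim \frac{1}{\sqrt p}\nucnorm{H_{T^\perp}}$. (iii) Local convexity of the nuclear norm: $\nucnorm{\truM + H} - \nucnorm{\truM} \geq \nucnorm{H_{T^\perp}} - \fronorm{H_T}\cdot\sqrt{2\trur}$ (using $\nucnorm{H_T} \leq \sqrt{2\trur}\fronorm{H_T}$ since $H_T$ has rank $\leq 2\trur$, together with the subgradient inequality at $\truM$). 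Plug in (ii) to get $\nucnorm{X} - \nucnorm{\truM} \geq \nucnorm{H_{T^\perp}}\big(1 - \sqrt{2\trur}\cdot\frac{c}{\sqrt p}\big)$, and in the regime where $\sqrt{6\trur/p}$ is controlled this gives $\nucnorm{H_{T^\perp}} \leq \frac{1}{1 - \sqrt{6\trur/p}}(\nucnorm{X}-\nucnorm{\truM})$ roughly. (iv) Finally reassemble: $\nucnorm{H} \leq \nucnorm{H_T} + \nucnorm{H_{T^\perp}} \leq \sqrt{2\trur}\fronorm{H_T} + \nucnorm{H_{T^\perp}} \leq (1 + \sqrt{6\trur/p})\nucnorm{H_{T^\perp}}$ using (ii) again, and combine with (iii) to land on the stated constant $8(1+\sqrt{6\trur/p})$, the factor $8$ absorbing the various numerical slacks from the operator-norm concentration bound.

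The main obstacle is step (i): proving that the sampling operator is well-conditioned on the tangent space $T$ in precisely the claimed sampling regime, with the explicit constant that makes $\sqrt{6\trur/p}$ (rather than a worse factor) appear in the bound. This is the classical ``$\|P_T P_\Omega P_T\|$'' estimate from the matrix completion literature (Recht, Candès--Recht, Chen), and I would cite it rather than reprove it, taking care that the version cited yields multiplicative bounds sharp enough in $\trur$ and $p$; the only real work on our side is bookkeeping the constants so the final coefficient is $8(1+\sqrt{6\trur/p})$ and verifying that $H_T$ has rank at most $2\trur$ so the $\nucnorm{\cdot} \le \sqrt{2\trur}\fronorm{\cdot}$ passage (and the resulting $\sqrt{6\trur}$ after folding in the conditioning constant) is legitimate. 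A secondary point to be careful about: the suboptimality gap $\nucnorm{X}-\nucnorm{\truM}$ is guaranteed nonnegative only because $\truM$ is assumed feasible and (by the theorem's hypothesis chain) a minimizer of \eqref{eq: convexRelaxationMC}; here we only use that $X, \truM$ are both feasible and the inequality is vacuous if the right side is negative, but I'd state this explicitly.
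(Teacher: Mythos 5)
Your overall architecture matches the paper's: decompose $H=X-\truM$ into tangent and orthogonal parts, use the conditioning of the sampling operator on $T$ to get $\fronorm{H_T}\lesssim p^{-1/2}\nucnorm{H_{T^\perp}}$ (the paper cites \cite[Lemma 5]{chen2015incoherence} for exactly this, with constant $\sqrt{2/p}$), and reassemble via $\nucnorm{H}\leq \nucnorm{H_{T^\perp}}+\sqrt{O(\trur)}\fronorm{H_T}$. Steps (i), (ii) and (iv) are fine (the paper uses rank $\leq 3\trur$ for $P_T H$ rather than your $2\trur$, which is where $\sqrt{3\trur}\cdot\sqrt{2/p}=\sqrt{6\trur/p}$ comes from; this is bookkeeping).

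The genuine gap is step (iii). You bound the suboptimality gap from below by the naive subgradient inequality, $\nucnorm{X}-\nucnorm{\truM}\geq \nucnorm{H_{T^\perp}}-\sqrt{2\trur}\,\fronorm{H_T}$, and then substitute $\fronorm{H_T}\leq \tfrac{c}{\sqrt{p}}\nucnorm{H_{T^\perp}}$ to obtain a coefficient $1-c\sqrt{2\trur/p}$. But $p\in(0,1)$ and $\trur\geq 1$, so $\trur/p>1$ \emph{always}, and this coefficient is always negative: step (iii) is vacuous in every parameter regime, not just an edge case. The lemma is explicitly designed for the regime where $\sqrt{6\trur/p}$ may be large, so you cannot assume it is ``controlled.'' The paper's route avoids this by invoking the dual-certificate result \cite[Proposition 2]{chen2015incoherence}: the golfing-scheme certificate $Y$ supported on $\Omega$ satisfies $\fronorm{P_T Y-\truU\truV^\top}\lesssim\sqrt{p}$, and since $\langle Y,H\rangle=0$ the error term in the subgradient inequality becomes $\fronorm{P_T Y-\truU\truV^\top}\cdot\fronorm{H_T}\lesssim \sqrt{p}\cdot p^{-1/2}\nucnorm{H_{T^\perp}}$, i.e.\ the $\sqrt{p}$ accuracy of the certificate on $T$ exactly cancels the $p^{-1/2}$ blow-up of $\fronorm{H_T}$. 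This calibration is what yields $\nucnorm{H_{T^\perp}}\leq 8(\nucnorm{X}-\nucnorm{\truM})$ with an absolute constant and no requirement $p\gtrsim\trur$. To repair your argument you must replace the crude subgradient step with this certificate-based bound (which you alluded to and then discarded); without it the proof does not go through.
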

\begin{proof}
	Let $M_{\natural}=U_{\natural}\Sigma_{\natural}V^{\top}_{\natural}$ be a singular value decomposition of $M_{\natural}$, where $\Sigma_{\natural}\in \RR^{r_{\natural}\times r_{\natural}}$ is a diagonal matrix of singular values.
	Define the matrix $R:= X - \truM$ and define the linear map $P_{\mathcal{T}}$ by
	\begin{equation}
		P_{\mathcal{T}} (Z) :\,= \truU\truU^\top Z+
		Z\truV \truV^\top - \truU \truU^\top Z \truV \truV^\top,
	\end{equation}
	Set $P_{\mathcal{T}^\perp}(Z):=Z- P_{\mathcal{T}}(Z)$.  Observe that we may bound $\nucnorm{R}$ as follows: 
	\begin{equation}\label{eq: nucRbound}
		\nucnorm{R} =\nucnorm{P_{\mathcal{T}^\perp}(R) + P_{\mathcal{T}}(R)}\leq
		\nucnorm{P_{\mathcal{T}^\perp}(R)} +\nucnorm{P_{\mathcal{T}}(R)}
		\overset{(a)}{\leq} \nucnorm{P_{\mathcal{T}^{\perp}}(R)}+ \sqrt{3\trur} \fronorm{P_{\mathcal{T}}(R)} ,
	\end{equation}
	where the step $(a)$ is due to the fact that $P_{\mathcal{T}}(R)$ has rank no more than $3\trur$. 
	We now bound $\nucnorm{P_{\mathcal{T}^{\perp}}(R)}$ and $\fronorm{P_{\mathcal{T}}(R)}$ separately. 
	As verified in \cite[Section 6]{ding2020leave}, the premise in \cite[Proposition 2]{chen2015incoherence}
	is satisfied with  probability at least $1-c_3\dmone^{-5} - c_3\dmtwo^{-5}$ for some universal $c_3>0$ under 
	the condition $p\geq \frac{C\mu \trur \log (\mu\trur)\log (d_{\max})}{d_{\min}}$. Hence,
	the result \cite[Proposition 2 and its proof]{chen2015incoherence}\footnote{Specifically, the first displayed equation above \cite[Lemma 5]{chen2015incoherence}} shows that  with probability at least $1-c_3\dmone^{-5} - c_3\dmtwo^{-5}$, there holds the inequality
	\begin{equation}\label{eq: objProjectionPerpBoound}
		\nucnorm{P_{\mathcal{T}^\perp}(R)}\leq 8(\nucnorm{X}- \nucnorm{\truM}). 
	\end{equation}
	Moreover, the premise in \cite[Lemma 5]{chen2015incoherence} is satisfied with probability at least $1-c_4\dmone^{-5} - c_4\dmtwo^{-5}$ for some universal constant $c_4>0$ as verified in \cite[Lemma 4.1]{candes2009exact} 
	or in \cite[Lemma 11]{chen2013low}.
	Hence, \cite[Lemma 5 and its proof]{chen2015incoherence}\footnote{In the displayed equation in the statement of the lemma, one can simply replace $n^5$ by $\frac{1}{\sqrt{p}}$ and set $Z=R$.} shows that with probability at least $1-c_4\dmone^{-5} - c_4\dmtwo^{-5}$, the inequality
	\begin{equation}\label{eq: projectionAndProjectionPerpBoound}
		\fronorm{ P_{\mathcal{T}}(R)}\leq  \frac{\sqrt{2}}{\sqrt{p}}\nucnorm{P_{\mathcal{T}^\perp}(R)}.
	\end{equation}
	holds.
	Combining \eqref{eq: nucRbound},\eqref{eq: objProjectionPerpBoound}, and \eqref{eq: projectionAndProjectionPerpBoound}, yields the desired inequality \eqref{eq: recoveryEBMC}. 
\end{proof}

Putting all the lemmas together, we can now prove Theorem~\ref{thm: recoveryErrorMC}.
\begin{proof}[Proof of Theorem~\ref{thm: recoveryErrorMC}]
	Lemma~\ref{lem: errorboundMC} ensures that in the regime $p\geq C \frac{\mu \trur \log (\mu\trur)\log (d_{\max})}{d_{\min}}$, with probability at least $1-c_1d_{\min}^{-5}$, the estimate 
	$$\nucnorm{X -\truM} \leq 8\left(1+\sqrt{\tfrac{6\trur}{p}}\right) (\nucnorm{X}-\nucnorm{\truM}),$$
	holds for all $X$ satisfying $\mathcal{A}(X)=\mathcal{A}(M_{\natural})$. In this event, $M_{\natural}$ is clearly a minimizer of \eqref{eq: convexRelaxationMC}. Lemma~\ref{lemm:flatmat_comp} therefore ensures that the matrix $X:=L_fR_f^\top$ satisfies 
	$$\nucnorm{X}-\nucnorm{M_{\natural}}\leq (\kappa^2-1)\nucnorm{M_{\natural}},$$
	where $\kappa$ is an upper bound on the condition numbers of $D_1$ and $D_2$.  Lemma~\ref{lem:upper_bound_cond_num_matr_comp} in turn ensures that for any $\delta\in (0,1)$, in the regime 
	$p\geq \frac{3\log \left(d_{\max}\right)}{\delta^2d_{\min}}$, with probability at least $1-4d_{\min}^{-5}$, the upper bound $\kappa\leq \sqrt{\frac{1+\delta}{1-\delta}}$ is valid. Algebraic manipulations therefore yield, within these events, the estimate:
	\begin{equation} \label{eq: recoveryErrorMC2}
		\nucnorm{L_f R_f^\top -\truM} \leq C\delta\sqrt{\frac{\trur}{p }}\nucnorm{\truM},
	\end{equation}
	for a some numerical constant $C>0$.
	To summarize, there exist numerical constants $c_1,c_2,C>0$ such that the following is true. Given any $\delta\in (0,1)$, provided we are in the regime 
	\begin{equation}\label{eqn:need_dis}	
		p\geq c_1 \max\{\mu r_{\natural}\log(\mu r_{\natural}),\delta^{-2}\}\cdot\frac{\log(d_{\max})}{d_{\min}},	
	\end{equation}
	with probability at least $1-c_2d_{\min} ^{-5}$, any flat solution $(L_f,R_f)$ 	
	satisfies \eqref{eq: recoveryErrorMC2}. Let us now try to set
	$$\delta^{-2}=\max\left\{1,\mu r_{\natural}\log(\mu r_{\natural}),\frac{C^2 r_{\natural}}{\gamma^2p}\right\}.$$
	This choice is consistent with the requirement \eqref{eqn:need_dis}	 as long as \eqref{eqn:mat_comp_regime} holds.
	With this choice of $\delta$, the estimate \eqref{eq: recoveryErrorMC2} becomes
	$\nucnorm{L_f R_f^\top -\truM} \leq \gamma\nucnorm{\truM}$, as claimed.
\end{proof}

\section{Robust principal component analysis (PCA)}\label{sec: extension}
In this section, we focus on problem of principal component analysis (PCA) with outliers, also known as ``robust PCA", following  the approach in \cite{candes2011robust,chandrasekaran2011rank}. Though this problem is not of the form \eqref{eq: main minimization}, we will see that flat solutions (appropriately defined) exactly recover the ground truth under reasonable assumptions.
The robust PCA problem asks to find a matrix $\truM \in \real^{\dmone\times \dmtwo}$ that has been corrupted by sparse noise $\truS$. More precisely, we observe a matrix  $Y\in \real^{\dmone\times \dmtwo}$ of the form 
\begin{equation*}%\label{eq: PCA}
	Y = \truM + \truS, 
\end{equation*}
The matrix $\truS$ is assumed to have at most $l_\natural$ many nonzero entries in any column and in any row, and $\truM$ has rank $r_{\natural}$. Moreover, following existing literature we assume that the matrix $\truM$ is  {\em strongly incoherent with parameter $\mu$}. That is, given a singular value decomposition $\truM = \truU \truSig \truV^\top$ with $\truSig\in \real^{\trur \times \trur}$, we let $\mu>0$ denote the smallest constant satisfying
\begin{equation}\label{eq: jointIncoherent}
	\twoinfnorm{\truU}\leq \sqrt{\frac{\mu \trur}{\dmone}}, \qquad \twoinfnorm{\truV}\leq \sqrt{\frac{\mu \trur}{\dmtwo}}, \qquad \text{and}\qquad \infnorm{\truU \truV^\top}\leq \sqrt{\frac{\mu \trur}{\dmone\dmtwo}},
\end{equation}
where $\infnorm{\cdot}$ denotes the entrywise sup-norm.

One common approach for recovering $M_{\natural}$ is to solve the problem:
\begin{equation}\label{eq: minimizationobjectiveRPCA}
	\min_{L,R} \min_{S\in \Omega}\,\fronorm{LR^\top+S-Y}^2.
\end{equation}
where we define the set 
$$\Omega :=\{ S\mid \oneonenorm{S}\leq \oneonenorm{\truS}\},$$
and $\oneonenorm{\cdot}$ is entry-wise $\ell_1$-norm used to promote sparsity. The factors $L$ and $R$ vary over $\RR^{d_1\times k}$ and $\RR^{d_2\times k}$, respectively.
As usual, we focus on the rank overparameterized setting $k\geq r_{\natural}$.
Note that the optimal value of the problem \eqref{eq: minimizationobjectiveRPCA} is clearly zero.

Observe that we may express the problem \eqref{eq: minimizationobjective} more compactly as  
\begin{equation}\label{eqn:key_blah_rpca}
	\min_{L,R}~f(L,R):=\dist^2_{\Omega}(Y-LR^{\top}),
\end{equation}
where $\dist^2_{\Omega}$ denotes the square Frobenius distance to $\Omega$. This is the overparameterized problem that we will focus on.
As usual, we let $\mathcal{S}$ denote the set of minimizers of $f$; note that $\mathcal{S}$ is simply the set of pairs $(L,R)$ satisfying $Y-LR^{\top}\in \Omega$.
Observe that the objective function $f$ is $C^1$-smooth but not $C^2$-smooth. Therefore, in order to measure flatness, we proceed via a smoothing technique introduced in  \cite[Section 4.2]{ha2020equivalence}, \cite[Section 4.3]{ge2017no}. Namely, we approximate $f$ near a basepoint $(\tilde L,\tilde S)$ by the local model:
\begin{equation}\label{eq: minimizationobjective}
	f_{\tilde{L},\tilde{R}}(L,R) :=\fronorm{Y-LR^\top -P_{\Omega}(Y-\tilde{L}\tilde{R}^{\top})}^2,
\end{equation} 
where $P_{\Omega}$ denotes the nearest point projection onto $\Omega$.
It is straightforward to see that the $C^2$-smooth function $f_{\tilde{L},\tilde{R}}(\cdot,\cdot)$ majorizes $f$ and agrees with $f(\cdot,\cdot)$ up to first order at $(\tilde{L},\tilde{R)}$. We may therefore define a minimizer of \eqref{eqn:key_blah_rpca} to be {\em flat} if it solves the problem: 
\begin{equation}\label{eq: rpcaSmallestHessian}
	\min_{(L,R)\in \mathcal{S}}
	\str(D^2f_{L,R}(L,R)).
\end{equation}

The following is the main result of the section.

\begin{thm}[Exact recovery in Robust PCA]
	There is a numerical constant $c>0$ such that in the regime $l _{\natural}\leq \frac{d_{\min}}{\mu \trur}$, any flat minimizer $(L_f,R_f)$ of  \eqref{eqn:key_blah_rpca} satisfies $L_fR_f^\top = \truM$.
\end{thm}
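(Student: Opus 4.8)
The plan is to reduce the flatness condition to a familiar variational problem, exactly as in Section~\ref{sec:conv_relax_flat}: first identify flat minimizers of \eqref{eqn:key_blah_rpca} with the minimizers of $\fronorm{L}^2+\fronorm{R}^2$ over $\mathcal{S}$, then recognize this as a convex nuclear-norm program via the Srebro identity \eqref{eqn:srebro}, and finally invoke an exact-recovery guarantee for that convex program. There is no rescaling here (the objective $\dist^2_{\Omega}(Y-LR^\top)$ involves no measurement map), so the picture is the $D_1=D_2=I$ analogue of Theorem~\ref{thm: equivalenceConvexNonconvexScaledTraceHessioan}, but with the affine feasible set replaced by $\{X:\oneonenorm{Y-X}\leq\oneonenorm{\truS}\}$.

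The first step is to simplify the local model at its basepoint. Fix any $(L,R)\in\mathcal{S}$, so that $S:=Y-LR^\top$ already lies in $\Omega$ and hence $P_{\Omega}(Y-LR^\top)=S$. Then
$$
f_{L,R}(L',R')=\fronorm{Y-L'R'^\top-S}^2=\fronorm{L'R'^\top-LR^\top}^2,
$$
which is precisely the population objective \eqref{eqn:pop_level} with ground truth $LR^\top$. Evaluating its Hessian at $(L',R')=(L,R)$, the linear term $4\langle L'R'^\top-LR^\top,\,UV^\top\rangle$ vanishes, and the computation in Lemma~\ref{lem:flattest} gives
$$
\str\big(D^2 f_{L,R}(L,R)\big)=2\big(\fronorm{L}^2+\fronorm{R}^2\big)\qquad\text{for all }(L,R)\in\mathcal{S}.
$$
Consequently, $(L_f,R_f)$ is flat if and only if it minimizes $\fronorm{L}^2+\fronorm{R}^2$ subject to $\oneonenorm{Y-LR^\top}\leq\oneonenorm{\truS}$.

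Next, I would pass to the convex relaxation exactly as in the ``moreover'' part of Theorem~\ref{thm: equivalenceConvexNonconvexScaledTraceHessioan}. By \eqref{eqn:srebro}, for any $X$ with $\rank(X)\leq k$ one has $\min_{X=LR^\top}\tfrac12(\fronorm{L}^2+\fronorm{R}^2)=\nucnorm{X}$, so the flat value equals twice the optimal value of $\min\{\nucnorm{X}:\rank(X)\leq k,\ \oneonenorm{Y-X}\leq\oneonenorm{\truS}\}$. Dropping the rank constraint gives the convex program
$$
\min_{X\in\RR^{d_1\times d_2}}\ \nucnorm{X}\qquad\textrm{subject to}\qquad\oneonenorm{Y-X}\leq\oneonenorm{\truS},
$$
for which $\truM$ is feasible since $Y-\truM=\truS$. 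If $\truM$ — which has rank $\trur\leq k$ — is the \emph{unique} minimizer of this convex program, then every flat solution obeys $L_fR_f^\top=\truM$: indeed $X:=L_fR_f^\top$ satisfies $\tfrac12(\fronorm{L_f}^2+\fronorm{R_f}^2)\geq\nucnorm{X}\geq\nucnorm{\truM}$ while flatness forces $\tfrac12(\fronorm{L_f}^2+\fronorm{R_f}^2)\leq\nucnorm{\truM}$ (using the optimal factorization of $\truM$, which lies in $\mathcal{S}$), so $\nucnorm{X}=\nucnorm{\truM}$ with $X$ feasible for the relaxation, whence $X=\truM$ by uniqueness.

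It remains to show that $\truM$ is the unique minimizer of the displayed convex program whenever $l_{\natural}\lesssim d_{\min}/(\mu\trur)$. This is where I would appeal to the deterministic (worst-case per-row/column sparsity) robust-PCA analysis of \cite{chen2013low}: build the standard dual certificate $W$ supported appropriately on the tangent space $\mathcal{T}$ of the rank-$\trur$ matrices at $\truM$ and on the support of $\truS$, and use strong incoherence \eqref{eq: jointIncoherent} together with the sparsity bound $l_{\natural}\lesssim d_{\min}/(\mu\trur)$ to control the operator norms that enter the certificate conditions. The only bookkeeping beyond \cite{chen2013low} is to translate between the Lagrangian decomposition form usually analyzed there and our constrained form: given a feasible $X$ with $\nucnorm{X}\leq\nucnorm{\truM}$, write $S=Y-X$ so that $X+S=\truM+\truS=Y$ with $\oneonenorm{S}\leq\oneonenorm{\truS}$, and the certificate yields $\nucnorm{X}+\lambda\oneonenorm{S}>\nucnorm{\truM}+\lambda\oneonenorm{\truS}$ for the appropriate $\lambda$ unless $X=\truM$, contradicting these two inequalities. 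I expect this last step — confirming that the certificate construction of \cite{chen2013low} carries over verbatim and covers exactly the advertised sparsity regime — to be the main (and essentially the only) nontrivial part; everything before it is a direct unwinding of definitions and the Srebro identity.
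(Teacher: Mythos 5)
Your proposal is correct and follows essentially the same route as the paper: reduce the local model at a zero of $f$ to $\fronorm{L'R'^\top-LR^\top}^2$ so that Lemma~\ref{lem:flattest} gives $\str(D^2f_{L,R}(L,R))=2(\fronorm{L}^2+\fronorm{R}^2)$, pass via \eqref{eqn:srebro} to nuclear-norm minimization over $\{X: Y-X\in\Omega\}$, and invoke \cite[Theorem~3]{chen2013low} with exactly the Lagrangian-to-constrained translation you describe (which the paper carries out in a footnote). The only cosmetic difference is that the paper writes the basepoint as $W_{\natural}=Y-P_{\Omega}(Y-L_0R_0^{\top})$ rather than observing directly that $P_{\Omega}$ fixes $Y-L_0R_0^{\top}$; the substance is identical.
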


\begin{proof}
	Let $(L_0, R_0)\in \mathcal{S}$ be a solution of \eqref{eqn:key_blah_rpca}. Since $f(L_0,R_0)=0$, the equality 
	$f_{L_0,R_0}(L_0,R_0)=0$ holds. In particular, we may write $f_{L_0,R_0}(L,R)$ as
	$$f_{L_0,R_0}(L,R)=\fronorm{LR^{\top}-W_{\natural}}^2,$$
	where we define $W_{\natural}:=Y-P_{\Omega}(Y-L_0R_0^{\top})$. 
	Therefore appealing to Lemma~\ref{lem:flattest}, we may write the scaled trace as 
	$$\str(D^2f_{L_0,R_0}(L_0,R_0))=2(\fronorm{L_0}^2+\fronorm{R_0}^2).$$
	Thus any flat solution $(L_f,R_f)$ of  \eqref{eqn:key_blah_rpca} solves the problem: 
	\begin{equation}
		\begin{aligned}
			\min_{L,R}&  \quad \fronorm{L}^2 +\fronorm{R}^2\\
			\text{subject to}& \quad Y - LR^\top \in \Omega.
		\end{aligned}
	\end{equation}
	Equivalently, the characterization \eqref{eqn:srebro}  implies that the matrix $X_f = L_fR_f^\top$ solves the problem 
	\begin{equation}\label{eq: rankconsRPCA}
		\begin{aligned}
			\text{minimize}&  \quad \nucnorm{X}\\
			\text{subject to}& \quad Y- X\in \Omega, \\
			& \quad   \rank(X)\leq k.
		\end{aligned}
	\end{equation}
	On the other hand, the result \cite[Theorem 3]{chen2013low}\footnote{The result \cite[Theorem 3]{chen2013low} actually shows that $(\truM,\truS)$ uniquely solves \begin{equation}\label{eq: RPCApenalized}
			\begin{aligned}
				\text{minimize}&  \quad \nucnorm{X} + \lambda \oneonenorm{S}\\
				\text{subject to}& \quad Y = X+ S,
			\end{aligned}
		\end{equation} for some $\lambda >0$. Now for any solution $X_1$ to \eqref{eq: RPCA}, the pair $(X_1,Y-X_1)$ is feasible for \eqref{eq: RPCApenalized} and satisfies 
		$\nucnorm{X_1}+\lambda \oneonenorm{Y-X_1}\leq 
		\nucnorm{\truM}+\lambda \oneonenorm{\truS}$, by definition of $\Omega$. Hence by the uniqueness of \eqref{eq: RPCApenalized}, we know $X_1 =\truM$.
	} shows that $\truM$ is the unique minimizer of the convex relaxation
	\begin{equation}\label{eq: RPCA}
		\begin{aligned}
			\text{minimize}&  \quad \nucnorm{X}\\
			\text{subject to}& \quad Y -X\in \Omega.
		\end{aligned}
	\end{equation}
	Hence, we know $\truM$ also uniquely solves \eqref{eq: rankconsRPCA} and we conclude $\truM = X_f = L_fR_f^\top$, as claimed.
\end{proof}

\section{Neural networks with quadratic activations and covariance matrix estimation}\label{sec: NNQA}
In this section, we investigate flat minimizers of a one hidden layer neural network, considered in the work \cite{soltanolkotabi2018theoretical,li2018algorithmic} for the purpose of analyzing the energy landscape around saddle points. Though this problem is not in the form 
\eqref{eq: main minimization}, we will see that flat minimizers (naturally defined) exactly recover the ground truth under reasonable statistical assumptions. As a special case, we will obtain guarantees for flat minimizers of the overparameterized covariance matrix estimation problem.

The problem setup, following \cite{soltanolkotabi2018theoretical,li2018algorithmic}, is as follows. We suppose that given an input vector $x\in \RR^d$ a response vector $y(x)$ is given by the function
$$y(U_{\natural},x) =v^\top q(\truU ^\top x).$$
We assume that the output weight vector $v\in \real^{r}$ has $r_1$ positive entries and $r_2$ negative entries, the 
hidden layer weight matrix $\truU$ has dimensions $\dm \times r_{\natural}$, and we use a quadratic activation $q(s) = s^2$ applied coordinatewise. 
We get to observe a set of $\ncons$  pairs $(x_i,y_i)\in\RR^d\times\RR$, 
where the features $x_i$ are drawn as $x_i\overset{\text{iid}}{\sim} N(0,I_d)$ and the output values $y_i$ are given by
$$y_i =y(x_i)\qquad \forall i=1,\ldots,m.$$
We aim to fit the data with an overparameterized neural network with a single hidden layer with weights $U\in \real^{d\times k}$ and an output layer with weights $u = (\ones_{k_1},-\ones_{k_2})$, where $k_1\geq r_1$, and $k_2\geq r_2$. 
The prediction $\hat{y}$ of the neural network on input $x$ is thus given by 
\begin{equation}\label{eq: predictionNN}
	\hat{y}(U,x)= u^\top q(U^\top x).
\end{equation} Thus the overparameterized problem we aim to solve is 
\begin{equation}\label{eq: lossfunction}
	\min_{U\in \RR^{d\times k}}~ f(U) := \frac{1}{\ncons} \sum_{i=1}^{\ncons}(\hat{y}(U,x_i)-y_i)^2. 
\end{equation}
As usual, we define the solution set $\mathcal{S}=\{U\in \RR^{d\times k}: f(U)=0\}$. We will see shortly that $\mathcal{S}$ is nonempty and therefore coincides with the set of minimizers of $f$.
Naturally, we declare a matrix $U_f\in \mathcal{S}$ to be {\em flat} if it minimizes the trace of the Hessian of 
$f$ over the set of the minimizers of $f$, i.e., it solves the problem 
$$\min_{U\in \mathcal{S}}~\tr(D^2 f(U)).$$ 
In this section, we aim to show:
\begin{center}
	with high probability over the training set $\{(x_i,y_i)\}_{i=1,\ldots,n}$ flat solutions $U_f$ \\
	achieve zero generalization error, that is 
	$\mathbb{E}_{x\sim N(0,I)}(\hat{y}(U_f,x) -y(U_{\natural},x)) = 0.$
\end{center}

Indeed, we will prove a stronger result by  relating the problem \eqref{eq: lossfunction} to low-rank matrix factorization. To see this, we can write
$\hat{y}(U,x)-y(\truU,x)$ as:
\begin{equation}\label{eq: generalizationErrorWithoutExpectation}
	\begin{aligned}
		\hat{y}(U,x)-y (\truU,x) 
		& = u^\top q(U^\top x) -v ^\top q(\truU ^\top x)\\ 
		& = \inprod{\underbrace{U\diag(u)U^\top}_{U_1U_1^\top - U_2U_2^\top}}{xx^\top} -\inprod{\underbrace{\truU\diag(v)\truU^\top}_{=\,:\truM}}{xx^\top}\\
		& = \inprod{U_1U_1^\top - U_2U_2^\top -\truM}{xx^\top}. 
	\end{aligned}
\end{equation}
Here, we write $U=[U_1, U_2]$ with $U_1\in \real^{\dm \times k_1}$ and $U_2\in \real^{\dm \times k_2}$. Note that the matrix $\truM$ is symmetric. Using \eqref{eq: generalizationErrorWithoutExpectation}, we may rewrite the objective of \eqref{eq: lossfunction} as 
\begin{equation}\label{eqn:funnyNN}
	f(U_1,U_2) = \frac{1}{m}\twonorm{\Amap(U_1U_1^\top - U_2U_2^\top -\truM)}^2,
\end{equation}
where the linear map $\Amap$ is defined as $\Amap:\real^{\dm\times \dm}\rightarrow \real^{\ncons}$ with $[\Amap(Z)]_i = \inprod{Z}{x_ix_i^\top}$ for any $Z\in \real^{\dm\times\dm}$. In particular, from the second equation in \eqref{eq: generalizationErrorWithoutExpectation} and our assumption on $v$, there always exists a matrix $U= [U_1, U_2]$ satisfying $U_1U_1^\top -U_2U_2^\top = \truM$. Therefore, the set of minimizers of $f$ is nonempty and it coincides with $\mathcal{S}$.
Note that in the special case $r_2 = k_2= 0$, the problem \eqref{eqn:funnyNN} becomes covariance matrix estimation  \cite{chen2015exact} and 
further reduces to phase retrieval  when $k_1 = r_1= 1$ \cite{candes2013phaselift}.

Summarizing, the task of  finding a matrix $U$ with a small generalization error, $\abs{\mathbb{E}_{x\sim N(0,I)}[\hat{y}(U,x)-y (\truU,x)]}$,  amounts to implicitly recovering the symmetric matrix $\truM$, but with the parameterization $U_1U_1^\top - U_2U_2^\top$ instead of the usual $LR^\top$ parameterization. The following is the main theorem of the section.

\begin{thm}[Exact recovery]\label{thm: exactRecoverySymmetric}
	There exist numerical constant $c,C>0$, such that in the regime $m\geq C \trur\dm$ and  $\dm \geq C\log m$, with probability at least $1-C\exp(-c\dm)$, any flattest solution $U_f = (U_{f,1},U_{f,2})$ of \eqref{eq: lossfunction} satisfies
	\begin{equation}\label{eq: recoveryErrorNN}
		U_{f,1}U_{f,1}^\top -
		U_{f,2}U_{f,2}^\top = \truM,
	\end{equation}
	and achieves zero generalization error:
	\begin{equation}\label{eq: generalizationError}
		\mathbb{E}_{x\sim N(0,I)}(\hat{y}(U_f,x) -y(\truU,x)) = 0. 
	\end{equation}
\end{thm}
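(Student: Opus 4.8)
The plan is to reduce the symmetric, difference-of-squares parameterization $U_1U_1^\top - U_2U_2^\top$ to the $LR^\top$ setting already analyzed, and then invoke the RIP machinery of Section~\ref{sec:RIP} together with the scaled-trace/convex-relaxation equipment of Section~\ref{sec:conv_relax_flat}. First I would observe that, since $\mathcal{S}$ is exactly the set of $U=[U_1,U_2]$ with $U_1U_1^\top - U_2U_2^\top = \truM$, on $\mathcal{S}$ the function $f$ in \eqref{eqn:funnyNN} has zero residual, so by the same computation as in Lemma~\ref{lem:flattest} and Lemma~\ref{lem: scaledtraceFronormSquareSum} the Hessian at a minimizer is the quadratic form $D^2 f(U_1,U_2)[V_1,V_2]=\tfrac{2}{m}\|\Amap(U_1V_1^\top + V_1U_1^\top - U_2V_2^\top - V_2U_2^\top)\|_2^2$, and taking the scaled trace (here $\str = d\,\tr$) will produce an expression of the form $\tr(D^2 f(U_1,U_2)) \propto \fronorm{D U_1}^2 + \fronorm{D U_2}^2$ for a single rescaling matrix $D = \big(\tfrac{1}{md}\sum_i A_iA_i^\top\big)^{1/2} = \big(\tfrac{1}{md}\sum_i x_ix_i^\top \|x_i\|^2\big)^{1/2}$ (or an appropriate fourth-moment analogue; this bookkeeping is routine but needs care because $A_i = x_ix_i^\top$ is symmetric rank-one).

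Next I would use the characterization \eqref{eqn:srebro}-style fact that $\min\{\fronorm{U_1}^2 + \fronorm{U_2}^2 : U_1U_1^\top - U_2U_2^\top = X\}$ equals the nuclear norm of $X$ (this is the symmetric ``signed'' analogue of Srebro's identity — writing $X = X_+ - X_-$ via the eigendecomposition shows the minimum is $\sum|\lambda_i(X)| = \nucnorm{X}$ when $X$ is symmetric). Hence flat solutions of \eqref{eq: lossfunction} are identified with minimizers of $\min \nucnorm{DXD}$ subject to $\Amap(X)=b$ over rank-$\le k_1+k_2$ symmetric $X$, whose convex relaxation drops the rank constraint. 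The argument then mirrors Theorem~\ref{thm: equivalenceConvexNonconvexScaledTraceHessioan}: it suffices to show $D\truM D$ is the unique minimizer of the relaxed problem $\min\nucnorm{X}$ s.t.\ $\Amap(D^{-1}XD^{-1})=b$. For this I would invoke the RIP of the quadratic sensing map $Z\mapsto (x_i^\top Z x_i)_i$ restricted to symmetric rank-$O(r_\natural)$ matrices — this is exactly the content of \cite{chen2015exact} / \cite{candes2013phaselift}-type results in the regime $m\gtrsim r_\natural d$ — combined with Lemma~\ref{lem:reparam}/Lemma~\ref{lem: exactrecoverConvex} to transfer RIP to the rescaled map, provided $D$ is well-conditioned. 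Finally, to control the condition number of $D$ I would use a Bernstein/covariance-concentration argument analogous to Lemma~\ref{lem_condbilin}: with $A_iA_i^\top = \|x_i\|^2 x_ix_i^\top$, concentrate $\|x_i\|^2/d$ around $1$ and $\tfrac{1}{m}\sum x_ix_i^\top$ around $I_d$, which gives $(1-\delta)I_d \preceq D^2 \preceq (1+\delta)I_d$ w.h.p.\ when $m\gtrsim d$ and $d\gtrsim \log m$ (the $\log m$ appears exactly from the union bound over the $m$ events $|\|x_i\|^2/d - 1|\le \delta$). Once exact recovery $U_{f,1}U_{f,1}^\top - U_{f,2}U_{f,2}^\top = \truM$ holds, the zero-generalization claim \eqref{eq: generalizationError} is immediate from \eqref{eq: generalizationErrorWithoutExpectation}, since the difference of predictions is $\inprod{U_{f,1}U_{f,1}^\top - U_{f,2}U_{f,2}^\top - \truM}{xx^\top} \equiv 0$.

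The main obstacle I anticipate is the interplay between the symmetric rank-one structure of $A_i = x_ix_i^\top$ and the RIP theory: the standard $\ell_2/\ell_2$ RIP fails for quadratic sensing (the map $Z\mapsto(x_i^\top Z x_i)$ is not an approximate isometry on all low-rank matrices because of the heavy-tailed $\|x_i\|^4$ factor), so one must use the sharper rank-restricted isometry / ``mixed-norm'' RIP tailored to quadratic measurements as in \cite{chen2015exact}, and verify that Lemma~\ref{lem: exactrecoverConvex}'s hypotheses (or a direct appeal to the recovery theorem of \cite{chen2015exact}) survive the rescaling by $D$. A secondary technical point is confirming that the ``signed Srebro'' identity together with the scaled-trace computation really does collapse to a single rescaling matrix $D$ rather than two — this follows because the parameterization is symmetric (same space $\RR^{d\times k_i}$ for both factors up to sign), but the constants in \eqref{eqn:sctrace} must be tracked carefully so that $\str$ reduces cleanly to $d\cdot\tr$ and then to $\fronorm{DU_1}^2 + \fronorm{DU_2}^2$.
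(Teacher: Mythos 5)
Your overall architecture matches the paper's: the trace formula $\tr(D^2 f(U_1,U_2)) = 4d\left(\fronorm{DU_1}^2+\fronorm{DU_2}^2\right)$ at minimizers, the ``signed Srebro'' identity $\min\{\fronorm{U_1}^2+\fronorm{U_2}^2 : U_1U_1^\top-U_2U_2^\top=X\}=\nucnorm{X}$, the reduction to $\min\nucnorm{X}$ subject to $\Amap(D^{-1}XD^{-1})=b$, the Bernstein-plus-covariance-concentration bound on the condition number of $D$ (which is where $d\gtrsim\log m$ enters), and the final observation that exact recovery of $\truM$ forces zero generalization error via \eqref{eq: generalizationErrorWithoutExpectation}. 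All of that is sound and is exactly what the paper does (Lemmas~\ref{lem: traceFronormSquareSum}--\ref{lem: symNucnorm} and Theorem~\ref{eq: exactConvexSymmetric}).

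The gap is in the one step you flag as ``the main obstacle'': certifying that $D\truM D$ uniquely solves the rescaled nuclear norm problem in the regime $m\gtrsim \trur d$. A direct appeal to the RIP of the quadratic sensing map $Z\mapsto (x_i^\top Z x_i)_i$ does not close this. As the paper notes explicitly before Lemma~\ref{lem: A1andAsolution}, that map satisfies $\ell_1/\ell_2$ RIP over rank-$\trur$ matrices only when $m\gtrsim \trur^2 d$, so feeding it into Lemma~\ref{lem: exactrecoverConvex} loses a factor of $\trur$; and the sharper mixed-norm RIP machinery of \cite{chen2015exact} that does achieve $m\gtrsim \trur d$ is not of the plain $\ell_p/\ell_2$ form, so it does not transfer under the rescaling $Z\mapsto D^{-1}ZD^{-1}$ via Lemma~\ref{lem:reparam} — one would have to redo that entire analysis for the perturbed map. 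The missing idea is the paper's Lemma~\ref{lem: A1andAsolution}: pair consecutive measurements and form $\tfrac{1}{2}\left(x_{2i-1}^\top Z x_{2i-1}-x_{2i}^\top Z x_{2i}\right)=\left(\tfrac{x_{2i-1}+x_{2i}}{\sqrt{2}}\right)^\top Z\left(\tfrac{x_{2i-1}-x_{2i}}{\sqrt{2}}\right)$. The two vectors on the right are independent Gaussians, so the differenced map $\Amap_1$ is a Gaussian \emph{bilinear} ensemble with $m/2$ measurements, which satisfies $\ell_1/\ell_2$ RIP at $m\gtrsim \trur d$ (Lemma~\ref{lem:RIP for bilinear sensing}) and does transfer under rescaling. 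Since every $X$ feasible for the original constraint is feasible for the differenced constraint, uniqueness for the $\Amap_1$-problem implies uniqueness for the original one. Without this (or an equivalent) reduction, your argument either proves the theorem only for $m\gtrsim \trur^2 d$ or leaves the rescaled recovery step unjustified.

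A minor secondary point: the nonconvex problem equivalent to flatness constrains $X$ to have at most $k_1$ positive and $k_2$ negative eigenvalues (Lemma~\ref{lem:decomp_NN}), not merely $\rank(X)\le k_1+k_2$; this distinction is harmless here because the constraint is dropped in the relaxation, but it is needed to state the exact (non-relaxed) equivalence correctly.
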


The rest of the section is devoted to the proof of Theorem~\ref{thm: exactRecoverySymmetric}. The general strategy is very similar to the one pursued in Section~\ref{sec:RIP}.
We begin with the following lemma that expresses the trace of the Hessian in the same spirit as Lemma~\ref{lem: scaledtraceFronormSquareSum}. With this in mind, we define the  matrix 
\begin{equation}\label{eq: Ddef}
	D:= \left(\frac{1}{\ncons \dm} \sum_{i=1}^m A_i A_i^\top\right)^{\frac{1}{2}}.
\end{equation}
\begin{lemma}[Trace]\label{lem: traceFronormSquareSum}
	The second order derivative of the function $f$ at any matrix $[U_1,U_2]\in \mathcal{S}$ is the quadratic form:
	$$D^2 f(U_1,U_2)(V_1,V_2)=\frac{2}{m}\|\mathcal{A}(U_1V_1^{\top}+V_1U_1^{\top}-U_2V_2^{\top}-V_2U_2^{\top})\|_2^2$$
	Moreover, the trace can be written as 
	\begin{equation}
		\tr(D^2f(U_1,U_2)) 
		= 4d\fronorm{DU_1}^2 + 4d\fronorm{D U_2}^2.
	\end{equation}
\end{lemma}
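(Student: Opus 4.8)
The plan is to mirror the proof of Lemma~\ref{lem: scaledtraceFronormSquareSum}, the only new feature being that here the matrix variable enters through the symmetric parameterization $U_1U_1^\top-U_2U_2^\top$ rather than through $LR^\top$.

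First I would compute the second-order Taylor expansion of $f$ directly from \eqref{eqn:funnyNN}. Writing $g(U_1,U_2):=U_1U_1^\top-U_2U_2^\top-\truM$, so that $f=\tfrac1m\|\mathcal{A}(g)\|_2^2$, a one-line expansion of $g(U_1+tV_1,U_2+tV_2)$ gives the first derivative $Dg(U_1,U_2)[V_1,V_2]=U_1V_1^\top+V_1U_1^\top-U_2V_2^\top-V_2U_2^\top$ and the second derivative $D^2g(U_1,U_2)[(V_1,V_2),(V_1,V_2)]=2(V_1V_1^\top-V_2V_2^\top)$. The chain rule applied to $f=\tfrac1m\langle\mathcal{A}(g),\mathcal{A}(g)\rangle$ then yields
\begin{equation*}
D^2f(U_1,U_2)[V_1,V_2]=\tfrac{2}{m}\|\mathcal{A}(Dg[V_1,V_2])\|_2^2+\tfrac{2}{m}\langle\mathcal{A}(g),\,\mathcal{A}(D^2g[(V_1,V_2),(V_1,V_2)])\rangle.
\end{equation*}
At any $(U_1,U_2)\in\mathcal{S}$ one has $g=0$, so the cross term vanishes and the stated quadratic form falls out.

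Next I would evaluate the trace by summing $D^2f$ over the standard basis directions of $\RR^{d\times k_1}\times\RR^{d\times k_2}$, namely $(e_ae_b^\top,0)$ for $a\in[d],b\in[k_1]$ and $(0,e_ae_b^\top)$ for $a\in[d],b\in[k_2]$. Since there are no ``mixed'' basis directions, the trace splits cleanly into a $U_1$-part and a $U_2$-part, and in each summand only one of the two factors survives $Dg$. For the direction $(e_ae_b^\top,0)$, writing $U_{1,b}$ for the $b$-th column of $U_1$, the argument of $\mathcal{A}$ is $U_{1,b}e_a^\top+e_aU_{1,b}^\top$; using that every $A_i=x_ix_i^\top$ is symmetric gives $\langle A_i,\,U_{1,b}e_a^\top+e_aU_{1,b}^\top\rangle=2\,A_{i,a}^\top U_{1,b}$, where $A_{i,a}$ denotes the $a$-th column of $A_i$. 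Substituting into the quadratic form and summing over $a$ and $b$, the inner column sums collapse exactly as in the chain of equalities \eqref{eqn:need_later}, via $\sum_a A_{i,a}A_{i,a}^\top=A_iA_i^\top$ and $\sum_b U_{1,b}U_{1,b}^\top=U_1U_1^\top$, leaving a multiple of $\langle\sum_{i}A_iA_i^\top,\,U_1U_1^\top\rangle$. Invoking $\sum_i A_iA_i^\top=md\,D^2$ from \eqref{eq: Ddef} turns this into a multiple of $d\,\fronorm{DU_1}^2$; the identical computation for $(0,e_ae_b^\top)$ (where the sign is irrelevant once squared) produces the matching $d\,\fronorm{DU_2}^2$ term, and adding the two halves gives the asserted identity.

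The algebraic manipulations---the Taylor expansion and the index bookkeeping---are routine and essentially duplicate those in Lemma~\ref{lem: scaledtraceFronormSquareSum}. The one place where the present case genuinely departs from the $LR^\top$ setting, and hence where care is needed, is the symmetrization: perturbing $U_1$ produces both $U_1V_1^\top$ and $V_1U_1^\top$, so one must correctly carry the factor arising from $\langle A_i,\,M+M^\top\rangle=2\langle A_i,M\rangle$ for symmetric $A_i$ when passing to the squared norm. Pinning down the exact numerical constant in front of $d\,(\fronorm{DU_1}^2+\fronorm{DU_2}^2)$ is therefore the main---albeit mild---obstacle; the rest is mechanical.
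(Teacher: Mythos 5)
Your proposal is correct and follows the paper's proof essentially verbatim: chain rule on $f=\tfrac1m\|\mathcal{A}(g)\|_2^2$ with the cross term vanishing on $\mathcal{S}$, then summing over coordinate directions, using symmetry of $A_l$ to write $\langle A_l,\,U_1e_be_a^\top+e_aU_{1}e_b{}^\top\rangle$-type terms as twice a single inner product, and collapsing the sums exactly as in \eqref{eqn:need_later} via $\sum_l A_lA_l^\top=md\,D^2$. The one loose end you flag---the constant---is worth finishing, and doing so carefully is instructive: squaring $2\langle A_l,U_1e_be_a^\top\rangle$ gives $4\langle A_l,U_1e_be_a^\top\rangle^2$, so the $U_1$-block of the trace evaluates to $\tfrac{2}{m}\cdot 4\cdot md\,\fronorm{DU_1}^2=8d\,\fronorm{DU_1}^2$, whereas the paper's intermediate display records the factor as $2$ rather than $4$ and thus lands on the stated $4d$; the lemma's constant appears to be off by a factor of $2$, which is immaterial downstream since only the minimizers of $\fronorm{DU_1}^2+\fronorm{DU_2}^2$ are used.
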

\begin{proof}
	The expression for $D^2 f(U_1,U_2)[V_1,V_2]$ follows immediately from algebraic manipulations.	The trace of the Hessian therefore can be written as 
	\begin{equation}
		\begin{aligned} 
			&\tr(D^2f(U_1,U_2)) \\ 
			= &\frac{2}{\ncons }\left( \sum_{i=1}^{d} \sum_{j=1}^k\twonorm{\Amap( U _1 e_j e_i^\top + e_ie_j^\top U_1^\top  )}^2\right)+ \frac{2}{\ncons }\left(\sum_{i=1}^{d} \sum_{j=1}^k \twonorm{\Amap(e_i e_j^\top U_2^\top 
				+ U_2e_i^\top e_j
				)}^2\right).
		\end{aligned}
	\end{equation}
	Using the symmetry of the matrices $A_i$, the first term can be written as 
	\begin{equation}
		\begin{aligned}
			\sum_{i=1}^{d} \sum_{j=1}^k\twonorm{\Amap( U _1 e_j e_i^\top + e_ie_j^\top U_1^\top  )}^2  & = 
			2 \sum_{i=1}^{d} \sum_{j=1}^k\sum_{l=1}^m 
			\inprod{A_l}{U_1e_j e_i^\top}^2  .
		\end{aligned}
	\end{equation}
	Following exactly the same computation as \eqref{eqn:need_later} completes the proof.	
\end{proof}

In particular, Lemma~\ref{lem: traceFronormSquareSum} implies that flat solutions are exactly the minimizers of the problem 
\begin{equation}\label{eq: scaled trace Sum of Square_NN}
	\min_{U_1,U_2} ~
	\frac{1}{2}\left(\fronorm{DU_1}^2 + \fronorm{DU_2}^2\right)\qquad \textrm{such that}\qquad  \Amap(U_1U_1^\top - U_2U_2^\top)=\Amap(\truM).
\end{equation}
We would like to next rewrite this problem in terms of minimizing a nuclear norm of a $d\times d$ matrix. With this in mind, we will require the following two lemmas in the spirit of the characterization of the nuclear norm \eqref{eqn:srebro}.

\begin{lemma}[Decompositions and pos/neg eigenvalues]\label{lem:decomp_NN}
	The following two statements are equivalent for any symmetric matrix $X\in \RR^{d\times d}$.
	\begin{enumerate}
		\item\label{it:1syms} $X$ admits a decomposition $X=  U_1 U_1^\top- U_2  U_2^\top$ for some matrices $ U_i \in \real^{\dm\times k_i}$,
		\item\label{it:2syms} $X$ has at most $k_1$ non-negative eigenvalues and $k_2$ non-positive eigenvalues.
	\end{enumerate}
\end{lemma}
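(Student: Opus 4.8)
The plan is to establish the two implications of the equivalence separately: the direction \ref{it:1syms}$\Rightarrow$\ref{it:2syms} by an eigenvalue-counting argument based on Weyl's inequalities, and the direction \ref{it:2syms}$\Rightarrow$\ref{it:1syms} by constructing the two factors directly from a spectral decomposition of $X$.

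For the forward implication, I would suppose $X = U_1U_1^\top - U_2U_2^\top$ with $U_i\in\RR^{d\times k_i}$ and order the eigenvalues of each symmetric matrix decreasingly. Since $U_1U_1^\top\succeq 0$ has rank at most $k_1$, its $(k_1+1)$-st eigenvalue vanishes, and since $-U_2U_2^\top\preceq 0$, its top eigenvalue is non-positive. Writing $X = U_1U_1^\top + (-U_2U_2^\top)$ and applying Weyl's inequality at index $k_1+1$ gives $\lambda_{k_1+1}(X)\le \lambda_{k_1+1}(U_1U_1^\top) + \lambda_1(-U_2U_2^\top)\le 0$, so at most $k_1$ eigenvalues of $X$ are positive. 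Repeating the argument with $-X = U_2U_2^\top - U_1U_1^\top$ bounds the number of negative eigenvalues of $X$ by $k_2$, which gives item \ref{it:2syms}.

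For the reverse implication, I would take a spectral decomposition $X=\sum_{i=1}^d \lambda_i v_iv_i^\top$ with $\{v_i\}_{i=1}^d$ orthonormal and set $P=\{i:\lambda_i>0\}$ and $N=\{i:\lambda_i<0\}$. The hypothesis gives $|P|\le k_1$ and $|N|\le k_2$, so I can take $U_1\in\RR^{d\times k_1}$ to be the matrix whose columns are $\sqrt{\lambda_i}\,v_i$ for $i\in P$, padded with $k_1-|P|$ zero columns, and similarly $U_2\in\RR^{d\times k_2}$ with columns $\sqrt{-\lambda_i}\,v_i$ for $i\in N$ padded with zeros. Then $U_1U_1^\top - U_2U_2^\top = \sum_{i\in P}\lambda_i v_iv_i^\top - \sum_{i\in N}(-\lambda_i)v_iv_i^\top = \sum_{i=1}^d\lambda_i v_iv_i^\top = X$, since the zero eigenvalues contribute nothing, which establishes item \ref{it:1syms}.

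I expect the only point that needs care — rather than a genuine obstacle — to be the treatment of the zero eigenvalues of $X$, which are simultaneously non-negative and non-positive and add rank to neither factor. One should fix a convention (simply discard them, or absorb them into the zero-column padding of $U_1$ and $U_2$) so that the eigenvalue counts in item \ref{it:2syms} line up exactly with the column widths $k_1$ and $k_2$; with that understood, the remaining steps are routine linear algebra.
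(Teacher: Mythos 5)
Your proof is correct. The direction \ref{it:2syms}$\Rightarrow$\ref{it:1syms} coincides with the paper's (spectral decomposition plus zero-column padding), but for \ref{it:1syms}$\Rightarrow$\ref{it:2syms} you take a genuinely different and somewhat cleaner route: a single application of Weyl's inequality, $\lambda_{k_1+1}(X)\le\lambda_{k_1+1}(U_1U_1^\top)+\lambda_1(-U_2U_2^\top)\le 0$, caps the number of positive eigenvalues at $k_1$, and the same argument applied to $-X$ handles the negative ones. The paper instead argues by contradiction: assuming $X$ has $r_1>k_1$ positive eigenvalues, it applies the Courant--Fischer (min--max) characterization on the span of the top $r_1$ eigenvectors to conclude that $X+U_2U_2^\top=U_1U_1^\top$ has at least $r_1$ positive eigenvalues, hence rank exceeding $k_1$. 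Both arguments rest on the same monotonicity of eigenvalues under a positive semidefinite perturbation; yours is direct and avoids the detour through rank, while the paper's needs only the variational characterization. One further point in your favor: the lemma as literally stated (counting \emph{non-negative} and \emph{non-positive} eigenvalues) cannot hold, since zero eigenvalues are counted in both categories (e.g.\ $X=0$ with $k_1<d$ would violate item \ref{it:2syms} despite admitting the trivial decomposition); both your argument and the paper's prove the intended strict version --- at most $k_1$ positive and at most $k_2$ negative eigenvalues --- which is the form actually used downstream in \eqref{eq: PerturbnucnormXsymmetric}, and your explicit handling of the zero eigenvalues via the zero-column padding is exactly the right way to make the counts line up.
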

\begin{proof}
	The implication $\ref{it:2syms}\Rightarrow\ref{it:1syms}$ follows immediately from an eigenvalue decomposition of $X$. Conversely, suppose that \ref{it:1syms} holds.
	Observe that \ref{it:1syms} clearly is equivalent to being able to write $X = A-B$ with $A,B\succeq 0$, $\rank(A)\leq k_1$, and $\rank(B)\leq k_2$. Let $r_1$ be the number of strictly positive eigenvalues of $X$ and let $r_2$ be the number of strictly negative eigenvalues of $X$. We now prove $r_1\leq k_1$ by contradiction. A similar arguments yields $r_2\leq k_2$. Suppose indeed $r_1>k_1$ and consider the matrix $A := X+B$. Let $\mathcal{U}$ be the span of eigenspaces corresponding to the top $r_1$ eigenvalues of $X$. Note that $\mathcal{U}$ has dimension $r_1$. 
	Cauchy's interlacing theorem implies that the $r_1$-th largest eigenvalue of 
	$X+B$ satisfies that 
	$\lambda_{r_1}(X+B)\geq \min_{v\in \mathcal{U}\setminus\{0\}}\frac{v^\top (X+B)v}{v^\top v}$. Since $B\succeq 0$, for any $v\in \mathcal{U}\setminus\{0\}$ we estimate $\frac{v^\top (X+B)v}{v^\top v}= \frac{v^\top Xv+v^\top Bv}{v^\top v}\geq \frac{v^\top Xv}{v^\top v}\geq  \lambda_{r_1}(X)>0$. We conclude that that rank of $A$ is at least $r_1$, which is a contradiction since $A$ has rank at most $k_1$.
\end{proof}

\begin{lemma}\label{lem: symNucnorm}
	If a symmetric matrix $X\in \RR^{d\times d}$ admits a decomposition $X= \bar U_1\bar U_1^\top-\bar U_2 \bar U_2^\top$ for some matrices $\bar U_i \in \real^{\dm\times k_i}$, then equality holds:
	$$\nucnorm{X} = 
	\min_{X= U_1U_1^\top-U_2U_2^\top, U_i\in \real^{\dm \times k_i}} \fronorm{U_1}^2+\fronorm{U_2}^2.$$
\end{lemma}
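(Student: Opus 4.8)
The plan is to prove the two inequalities in the asserted identity separately, in the spirit of the standard characterization~\eqref{eqn:srebro} but adapted to the indefinite ``difference of PSD matrices'' parametrization. First I would establish ``$\leq$'', i.e.\ that $\nucnorm{X}\leq \fronorm{U_1}^2+\fronorm{U_2}^2$ for \emph{every} admissible decomposition $X=U_1U_1^\top-U_2U_2^\top$. This is immediate from the triangle inequality for the nuclear norm together with the fact that a positive semidefinite matrix has nuclear norm equal to its trace: $\nucnorm{X}\leq \nucnorm{U_1U_1^\top}+\nucnorm{U_2U_2^\top}=\tr(U_1U_1^\top)+\tr(U_2U_2^\top)=\fronorm{U_1}^2+\fronorm{U_2}^2$. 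Taking the infimum over all decompositions yields one direction, and in particular shows the right-hand side is finite.

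For the reverse inequality I would exhibit a decomposition attaining $\nucnorm{X}$. Since $X$ admits \emph{some} decomposition of the prescribed shape, Lemma~\ref{lem:decomp_NN} guarantees that $X$ has at most $k_1$ nonnegative eigenvalues and at most $k_2$ nonpositive eigenvalues; in particular, the numbers $r_1$ and $r_2$ of strictly positive and strictly negative eigenvalues satisfy $r_1\leq k_1$ and $r_2\leq k_2$. Writing a spectral decomposition $X=V\diag(\lambda_1,\dots,\lambda_d)V^\top$ with orthonormal $V$, I split the spectrum into its positive part $\Lambda_+=\diag(\max\{\lambda_i,0\})$ and negative part $\Lambda_-=\diag(\max\{-\lambda_i,0\})$, so that $X=V\Lambda_+V^\top-V\Lambda_-V^\top$. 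I then collect the $r_1$ columns of $V\Lambda_+^{1/2}$ indexed by the strictly positive eigenvalues and pad them with zero columns to form $U_1\in\real^{d\times k_1}$ (legitimate since $r_1\leq k_1$), and likewise build $U_2\in\real^{d\times k_2}$ from $V\Lambda_-^{1/2}$. By construction $U_1U_1^\top=V\Lambda_+V^\top$ and $U_2U_2^\top=V\Lambda_-V^\top$, whence $X=U_1U_1^\top-U_2U_2^\top$, while $\fronorm{U_1}^2+\fronorm{U_2}^2=\tr(\Lambda_+)+\tr(\Lambda_-)=\sum_i\abs{\lambda_i}=\nucnorm{X}$. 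This proves the infimum is at most $\nucnorm{X}$ and is attained, so equality holds and the infimum is a minimum.

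I do not expect a genuine obstacle here; the argument is routine. The only points requiring care are the dimensional bookkeeping---using Lemma~\ref{lem:decomp_NN} to ensure the positive and negative inertia of $X$ do not exceed $k_1$ and $k_2$, so the padded factors $U_1,U_2$ have the correct number of columns---and the repeated use of the elementary identity $\nucnorm{W}=\tr(W)$ valid for $W\succeq 0$, which drives both inequalities.
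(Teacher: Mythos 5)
Your proposal is correct and follows essentially the same route as the paper: the upper bound via the triangle inequality and $\nucnorm{W}=\tr(W)$ for $W\succeq 0$, and the lower bound by invoking Lemma~\ref{lem:decomp_NN} to control the inertia of $X$ and then padding the factors of a spectral decomposition with zero columns. No issues.
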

\begin{proof}
	First, using triangle inequality, for any $(U_1,U_2)$ such that $X= U_1U_1^\top-U_2U_2^\top$, we have 
	$\nucnorm{X}= \nucnorm{U_1U_1^\top-U_2U_2^\top}\leq\nucnorm{U_1U_1^\top}+\nucnorm{U_2U_2^\top} = \tr(U_1U_1^\top)+\tr(U_2U_2^\top) =  \fronorm{U_1}^2+\fronorm{U_2}^2$.
	Conversely, suppose that we may write $X= U_1U_1^\top-U_2U_2^\top$ for some $U_1,U_2$. Then Lemma~\ref{lem:decomp_NN} implies that $X$ has at most $k_1$ non-negative eigenvalues and 
	$k_2$ non-positive eigenvalues. Thus, we may write any eigenvalue decomposition of $X$ as $X = P_1\Lambda_1P_1^\top - P_2\Lambda_2P_2^\top$, where the diagonal matrices $\Lambda_i\in \RR^{r_i \times r_i}$ have positive entries . By taking $U_i = [P_i \sqrt{\Lambda_i},0_{k_i-r_i}]$, we see that equality $\nucnorm{X} = \fronorm{U_1}^2+\fronorm{U_2}^2$ holds and the proof is complete.
\end{proof}

Lemma~\ref{lem:decomp_NN} and ~\ref{lem: symNucnorm} directly imply that the problem \eqref{eq: scaled trace Sum of Square_NN}, which characterizes flat solutions, is equivalent to the rank constrained problem:
\begin{equation}\label{eq: PerturbnucnormXsymmetric}
	\begin{aligned}
		\text{minimize} & \quad \nucnorm{X}. \\ 
		\text{subject to} & \quad \Amap(D^{-1}XD^{-1})= \Amap(\truM),\\
		&\quad \text{$X$ is symmetric and has at most $k_1$ positive eigenvalues and $k_2$ negative eigenvalues}.
	\end{aligned}
\end{equation}
Therefore a natural convex relaxation simply drops the requirements on the eigenvalues:
\begin{equation}\label{eq: PerturbnucnormXsymmetric_convex}
	\begin{aligned}
		\text{minimize} & \quad \nucnorm{X} \\ 
		\text{subject to} & \quad \Amap(D^{-1}XD^{-1})= \Amap(\truM),\\
		&\quad \text{$X\in\RR^{d\times d}$ is symmetric}.
	\end{aligned}
\end{equation}
The following theorem summarizes these observations.

\begin{thm}[Convex relaxation]\label{eq: exactConvexSymmetric}
	Suppose that the matrix $D$ is invertible. Then the problems \eqref{eq: scaled trace Sum of Square_NN} and \eqref{eq: PerturbnucnormXsymmetric} are equivalent in the following sense. 
	\begin{enumerate}
		\item The optimal values of \eqref{eq: scaled trace Sum of Square_NN} and \eqref{eq: PerturbnucnormXsymmetric} are equal.
		\item If $[U_1, U_2]$ solves  \eqref{eq: scaled trace Sum of Square_NN}, then 
		$X=D(U_1U_1^{\top}-U_2U_2^{\top})D$ is optimal for  \eqref{eq: PerturbnucnormXsymmetric}.
		\item If a minimizer $X$ of \eqref{eq: PerturbnucnormXsymmetric} has an eigenvalue decomposition $X=P_1\Lambda_1 P^{\top}_1-P_2\Lambda_2 P^{\top}_2$ for some diagonal matrices $\Lambda_i\in \RR^{r_i\times r_i}$ with positive entries, then the matrices $U_i=[D^{-1}P_i\sqrt{\Lambda_i}, 0_{d,(k_i-r_i)}]$, $i=1,2$  are minimizers of \eqref{eq: scaled trace Sum of Square_NN}.
	\end{enumerate}
	Moreover, if $X=DM_{\natural}D$ is the unique minimizer of  the problem \eqref{eq: PerturbnucnormXsymmetric_convex}, then any flat solution $[U_1,U_2]$ satisfies $U_1U_1^\top-U_2U_2^\top = M_{\natural}$. 
\end{thm}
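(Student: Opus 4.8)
The plan is to reuse the recipe behind Theorem~\ref{thm: equivalenceConvexNonconvexScaledTraceHessioan}: a linear change of variables that removes the weighting $D$ from the objective, combined with the two symmetric analogues of the factorization identity \eqref{eqn:srebro} just established, namely Lemma~\ref{lem:decomp_NN} (which pins down exactly which symmetric matrices admit a decomposition $U_1U_1^\top-U_2U_2^\top$ with $U_i\in\RR^{d\times k_i}$) and Lemma~\ref{lem: symNucnorm} (which evaluates the least $\fronorm{U_1}^2+\fronorm{U_2}^2$ over such decompositions of a fixed matrix). Since $D$ is assumed invertible and is symmetric, being a principal square root of a positive semidefinite matrix, so is $D^{-1}$, and the map $(U_1,U_2)\mapsto(DU_1,DU_2)=:(\tilde{U}_1,\tilde{U}_2)$ is a bijection. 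Under it the constraint of \eqref{eq: scaled trace Sum of Square_NN} becomes $\Amap\bigl(D^{-1}(\tilde{U}_1\tilde{U}_1^\top-\tilde{U}_2\tilde{U}_2^\top)D^{-1}\bigr)=\Amap(\truM)$ and the objective becomes $\tfrac12(\fronorm{\tilde{U}_1}^2+\fronorm{\tilde{U}_2}^2)$. Writing $X=\tilde{U}_1\tilde{U}_1^\top-\tilde{U}_2\tilde{U}_2^\top$, Lemma~\ref{lem:decomp_NN} says $X$ ranges over all symmetric matrices with at most $k_1$ nonnegative and $k_2$ nonpositive eigenvalues, and Lemma~\ref{lem: symNucnorm} identifies the least objective value among the preimages of such an $X$ with $\nucnorm{X}$ (up to the $\tfrac12$ normalization carried by \eqref{eq: scaled trace Sum of Square_NN}). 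This puts \eqref{eq: scaled trace Sum of Square_NN} and \eqref{eq: PerturbnucnormXsymmetric} in correspondence, giving claim~1, and the explicit minimizing decomposition in the proof of Lemma~\ref{lem: symNucnorm}, built from an eigendecomposition of $X$ and padded with zero columns, supplies the two constructions asserted in claims~2 and~3.

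The only extra bookkeeping relative to Section~\ref{sec:conv_relax_flat} concerns the eigenvalue signature, because the constraint in \eqref{eq: PerturbnucnormXsymmetric} is imposed on the congruence-transformed matrix $X=D(U_1U_1^\top-U_2U_2^\top)D$ rather than on $U_1U_1^\top-U_2U_2^\top$ directly. I would settle this with Sylvester's law of inertia: $X$ and $D^{-1}XD^{-1}=U_1U_1^\top-U_2U_2^\top$ are congruent through the invertible symmetric matrix $D^{-1}$, hence share the numbers of positive, negative, and zero eigenvalues. This shows in claim~2 that $X$ inherits the eigenvalue constraint from $U_1U_1^\top-U_2U_2^\top$, which satisfies it by Lemma~\ref{lem:decomp_NN}, and in claim~3 that $r_i\le k_i$, so that $U_i=[D^{-1}P_i\sqrt{\Lambda_i},0_{d,(k_i-r_i)}]$ is well defined; feasibility and value-matching in each direction are then one-line checks.

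For the ``moreover'' part I would argue exactly as in the asymmetric case. Problem \eqref{eq: PerturbnucnormXsymmetric_convex} is obtained from \eqref{eq: PerturbnucnormXsymmetric} by deleting the eigenvalue constraint, so it is a relaxation; and $D\truM D$ is feasible for \eqref{eq: PerturbnucnormXsymmetric}, since it is symmetric, satisfies $\Amap(D^{-1}(D\truM D)D^{-1})=\Amap(\truM)$, and (by Sylvester again, through the congruence by $D$) has inertia equal to that of $\truM$, which in turn has at most $k_1$ positive and $k_2$ negative eigenvalues. Hence if $D\truM D$ is the unique minimizer of \eqref{eq: PerturbnucnormXsymmetric_convex}, then it is also the unique minimizer of the more constrained problem \eqref{eq: PerturbnucnormXsymmetric}. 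Given any flat solution $[U_1,U_2]$, claim~2 shows $X=D(U_1U_1^\top-U_2U_2^\top)D$ is optimal for \eqref{eq: PerturbnucnormXsymmetric}, whence $X=D\truM D$ and therefore $U_1U_1^\top-U_2U_2^\top=D^{-1}XD^{-1}=\truM$.

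I do not anticipate a genuine obstacle: the substantive work is already carried out in Lemmas~\ref{lem:decomp_NN} and~\ref{lem: symNucnorm}, and the single point that takes a moment's care — that the eigenvalue constraint lives on the congruence-transformed variable rather than on $U_1U_1^\top-U_2U_2^\top$ — is handled cleanly by invariance of inertia under congruence.
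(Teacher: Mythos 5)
Your proposal is correct and follows essentially the same route as the paper, which treats this theorem as a direct consequence of the substitution $\tilde{U}_i=DU_i$ together with Lemma~\ref{lem:decomp_NN} and Lemma~\ref{lem: symNucnorm} (mirroring the proof of Theorem~\ref{thm: equivalenceConvexNonconvexScaledTraceHessioan}), with the ``moreover'' part following because \eqref{eq: PerturbnucnormXsymmetric_convex} relaxes \eqref{eq: PerturbnucnormXsymmetric}. Your explicit appeal to Sylvester's law of inertia cleanly settles the eigenvalue-signature bookkeeping that the paper leaves implicit.
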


Next, we aim to understand when the problem \eqref{eq: PerturbnucnormXsymmetric_convex} exactly recovers $D^{-1}M_{\natural}D$. The difficulty is that even in the case $k_2=0$, corresponding to covariance matrix estimation, the linear map $\mathcal{A}$ satisfies $\ell_1/\ell_2$ RIP only if $m\succsim r_{\natural}^2d$ \cite{chen2015exact}, which is suboptimal by a factor of $r_{\natural}$.
We will sidestep this issue by relating the program \eqref{eq: PerturbnucnormXsymmetric} to one with a different linear map that does satisfy the $\ell_1/\ell_2$-RIP condition over all rank $r_{\natural}$ matrices in the optimal regime $m\succsim r_{\natural}d$. The reduction we use is inspired by \cite[Equation (0.36)]{cai2015ropSupp}.
\begin{lemma}\label{lem: A1andAsolution}
	Define the linear map $\Amap_1: \real^{\dm \times \dm} \rightarrow \real^{\lfloor \ncons \rfloor/2}$ by
	\begin{equation}\label{eq: A1def}
		[\Amap_1(Z)]_i = \left(\frac{x_{2i-1}+x_{2i}}{\sqrt{2}} \right)^\top Z \left(\frac{x_{2i-1}-x_{2i}}{\sqrt{2}} \right), \qquad\textrm{for } i=1,\dots, \lfloor \ncons \rfloor/2,
	\end{equation}
	and consider the convex optimization program 
	\begin{equation}\label{eq: PerturbnucnormXA1}
		\min_{\Amap_1(D^{-1}XD^{-1})=\Amap_1(\truM)} \nucnorm{X}. 
	\end{equation}
	If $D\truM D$ is the unique solution of \eqref{eq: PerturbnucnormXA1}, then it is also the unique solution of \eqref{eq: PerturbnucnormXsymmetric_convex}.
\end{lemma}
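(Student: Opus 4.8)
The plan is a feasibility-transfer argument: I will show that the feasible set of \eqref{eq: PerturbnucnormXsymmetric_convex} is contained in the feasible set of \eqref{eq: PerturbnucnormXA1}, and that $D\truM D$ lies in the former; since $D\truM D$ is assumed to be the unique minimizer of $\nucnorm{\cdot}$ over the larger feasible set, it is then automatically the unique minimizer over the smaller one.

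The key step is a short algebraic identity showing that, on symmetric matrices, $\Amap_1$ is a fixed linear function of $\Amap$. Writing $u_i:=(x_{2i-1}+x_{2i})/\sqrt{2}$ and $w_i:=(x_{2i-1}-x_{2i})/\sqrt{2}$ and expanding $u_i^\top Z w_i$, one finds for any $Z$ that $u_i^\top Z w_i=\tfrac12\bigl(x_{2i-1}^\top Z x_{2i-1}-x_{2i}^\top Z x_{2i}-x_{2i-1}^\top Z x_{2i}+x_{2i}^\top Z x_{2i-1}\bigr)$; when $Z=Z^\top$ the last two (cross) terms cancel, leaving $[\Amap_1(Z)]_i=\tfrac12\bigl([\Amap(Z)]_{2i-1}-[\Amap(Z)]_{2i}\bigr)$. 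Equivalently, $\Amap_1(Z)=L\Amap(Z)$ for every symmetric $Z$, where $L$ is the linear map $[Lv]_i=\tfrac12(v_{2i-1}-v_{2i})$. The symmetry of $Z$ is essential here, and is exactly where the symmetry constraint of \eqref{eq: PerturbnucnormXsymmetric_convex} (together with the symmetry of $\truM$) enters; this is the only delicate point, and it is more a matter of care than a genuine obstacle.

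With the identity in hand the rest is routine. If $X$ is symmetric then $D^{-1}XD^{-1}$ is symmetric as well (since $D$, being an invertible matrix square root, is symmetric positive definite, hence so is $D^{-1}$), so $\Amap_1(D^{-1}XD^{-1})=L\,\Amap(D^{-1}XD^{-1})$; likewise $\Amap_1(\truM)=L\,\Amap(\truM)$ because $\truM$ is symmetric. Therefore any $X$ feasible for \eqref{eq: PerturbnucnormXsymmetric_convex} --- that is, symmetric with $\Amap(D^{-1}XD^{-1})=\Amap(\truM)$ --- satisfies $\Amap_1(D^{-1}XD^{-1})=\Amap_1(\truM)$ and so is feasible for \eqref{eq: PerturbnucnormXA1}. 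Moreover $D\truM D$ is symmetric and trivially satisfies $\Amap(D^{-1}(D\truM D)D^{-1})=\Amap(\truM)$, so it is feasible for \eqref{eq: PerturbnucnormXsymmetric_convex}. Consequently, for every $X$ feasible for \eqref{eq: PerturbnucnormXsymmetric_convex} we have $\nucnorm{X}\ge\nucnorm{D\truM D}$, with equality only at $X=D\truM D$, because $D\truM D$ is the unique minimizer of $\nucnorm{\cdot}$ over the (larger) feasible set of \eqref{eq: PerturbnucnormXA1}. Hence $D\truM D$ is the unique solution of \eqref{eq: PerturbnucnormXsymmetric_convex}, completing the proof.
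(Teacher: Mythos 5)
Your proof is correct and follows essentially the same route as the paper: expand $\bigl(\tfrac{x_{2i-1}+x_{2i}}{\sqrt 2}\bigr)^\top Z\bigl(\tfrac{x_{2i-1}-x_{2i}}{\sqrt 2}\bigr)$ to get $[\Amap_1(Z)]_i=\tfrac12\bigl([\Amap(Z)]_{2i-1}-[\Amap(Z)]_{2i}\bigr)$ on symmetric $Z$, deduce that the feasible set of \eqref{eq: PerturbnucnormXsymmetric_convex} is contained in that of \eqref{eq: PerturbnucnormXA1}, and transfer uniqueness of the minimizer. If anything, you are slightly more careful than the paper in flagging that the cancellation of the cross terms is exactly where the symmetry of $Z$ (and of $D$) is used.
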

\begin{proof}
	We first note the equalities 
	\begin{equation}
		\begin{aligned} 
			[\Amap_1(Z)]_i & = \left(\frac{x_{2i-1}+x_{2i}}{\sqrt{2}} \right)^\top Z \left(\frac{x_{2i-1}-x_{2i}}{\sqrt{2}} \right) \\ 
			& = \frac{1}{2}\inprod{x_{2i-1}x_{2i-1}^\top}{Z} - 
			\frac{1}{2}\inprod{x_{2i}x_{2i}^\top}{Z} \\
			& = \frac{1}{2} \left([\Amap_{2i-1}(Z)] - [\Amap_{2i}(Z)]\right).
		\end{aligned} 
	\end{equation}
	It follows immediately that any $X$ that is feasible for \eqref{eq: PerturbnucnormXsymmetric_convex} is also feasible for \eqref{eq: PerturbnucnormXA1}.
	Consequently, if the symmetric matrix $D\truM D$ is a unique minimizer of  \eqref{eq: PerturbnucnormXA1}, then it must also be a unique minimizer of \eqref{eq: PerturbnucnormXsymmetric_convex}. This completes the proof.	
\end{proof}

We are now ready to prove Theorem \ref{thm: exactRecoverySymmetric}. 
\begin{proof}[Proof of Theorem \ref{thm: exactRecoverySymmetric}]
	Notice that the two vectors, $\frac{x_{2i-1}+x_{2i}}{\sqrt{2}}$ and $\frac{x_{2i-1}-x_{2i}}{\sqrt{2}}$, are jointly normal and uncorrelated, and therefore are independent. Consequently, we see that the map $\Amap_1$ defined in  \eqref{eq: A1def} follows the bilinear sensing model (Definition~\ref{defn:mat_sense_bilin}). Therefore Lemma~\ref{lem:RIP for bilinear sensing} implies that for any positive integer $k\geq 2$, there exist constants $c,C>0$ depending only on $k$ and numerical constants $\delta_1,\delta_2>0$ and  such that in the regime $m\geq cr_{\natural} d$, with probability at least $1-\exp(-C m)$, the measurement map $\mathcal{A}_1$ satisfies $\ell_1/\ell_2$ RIP with parameters $(k r_{\natural}, \delta_1, \delta_2)$. In this event, Lemma \ref{lem:reparam} with   $Q_1=Q_2=D^{-1}$ implies that $\mathcal{A}_1(D^{-1}\cdot D^{-1})$ satisfies $\ell_1/\ell_2$ RIP with parameters $(kr_{\natural},\alpha_2^{-2} \delta_1, \alpha_1^{-2}\delta_2)$, where $\alpha_1>0$ is a lower bound on the minimal eigenvalue of $D$ and  $\alpha_2>0$ is an upper-bound on the maximal eigenvalue of $D$. In order to estimate $\kappa$, we may write
	$$D^2=\frac{1}{md}\sum_{i=1}^{m}A_iA_i^{\top}=\frac{1}{md}\sum_{i=1}^{m}\|x_i\|^2 x_ix_i^{\top}.$$
	Exactly the same proof as that of Lemma~\ref{lem_condbilin} ensures that there exist constants $c_1,c_2,c_3,c_4>0$ such that as long as we are in the regime, $m\geq c_3d$ and $\log(m)\leq c_4d$, 
	the estimate holds:
	$$\mathbb{P}\left\{\frac{1}{2} I_{d}\preceq D\preceq \frac{3}{2}I_{d}\right\}\geq 1-c_2e^{-c_1d}.$$
	Consequently, in this regime we may upper bound the condition number $\kappa$ of $D$  by $3$. In light of Lemma~\ref{lem: exactrecoverConvex}, in order to ensure that $D\truM D$ is the unique minimizer of \eqref{eq: PerturbnucnormXA1}, it remains to simply choose $k$ such that the inequality $\frac{9\delta_2}{\delta_1} \leq \sqrt{k}$ holds. Using Lemmas~\ref{eq: exactConvexSymmetric}-\ref{lem: A1andAsolution} completes the proof.
\end{proof}

\section{Numerical experiments}\label{sec: num}
Recall that we have proved that for a variety of overparameterized problems, under standard statistical assumptions, in the noiseless setting, (1) flat solutions recover the ground truth and (2) flat solutions are nearly  norm-minimal and nearly-balanced (but not exactly). 
In this section, we numerically validate both of the claims, in order. 
Note that finding flat solutions in these examples, amounts to solving a convex optimization problem as long as the number of measurements is sufficiently large.

\begin{figure}
	\centering
	\begin{subfigure}[b]{0.35\textwidth}
		\centering
		\includegraphics[width=\textwidth]{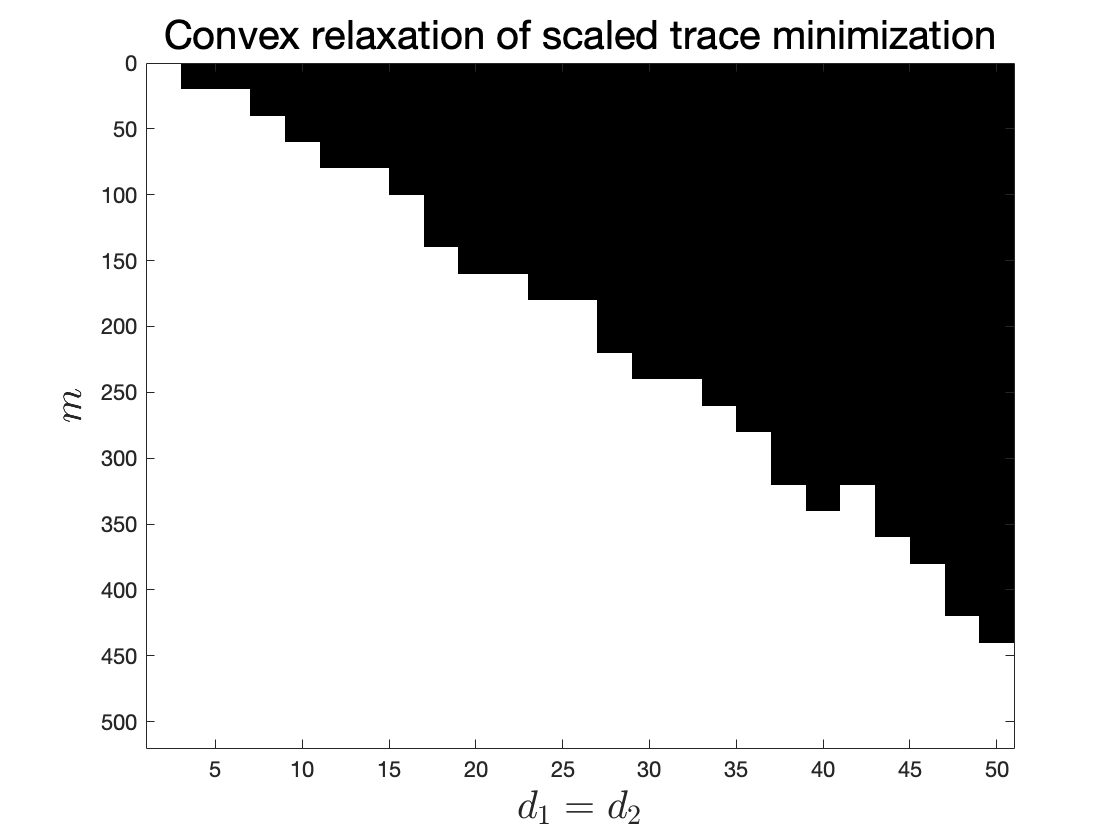}
		\includegraphics[width= \textwidth]{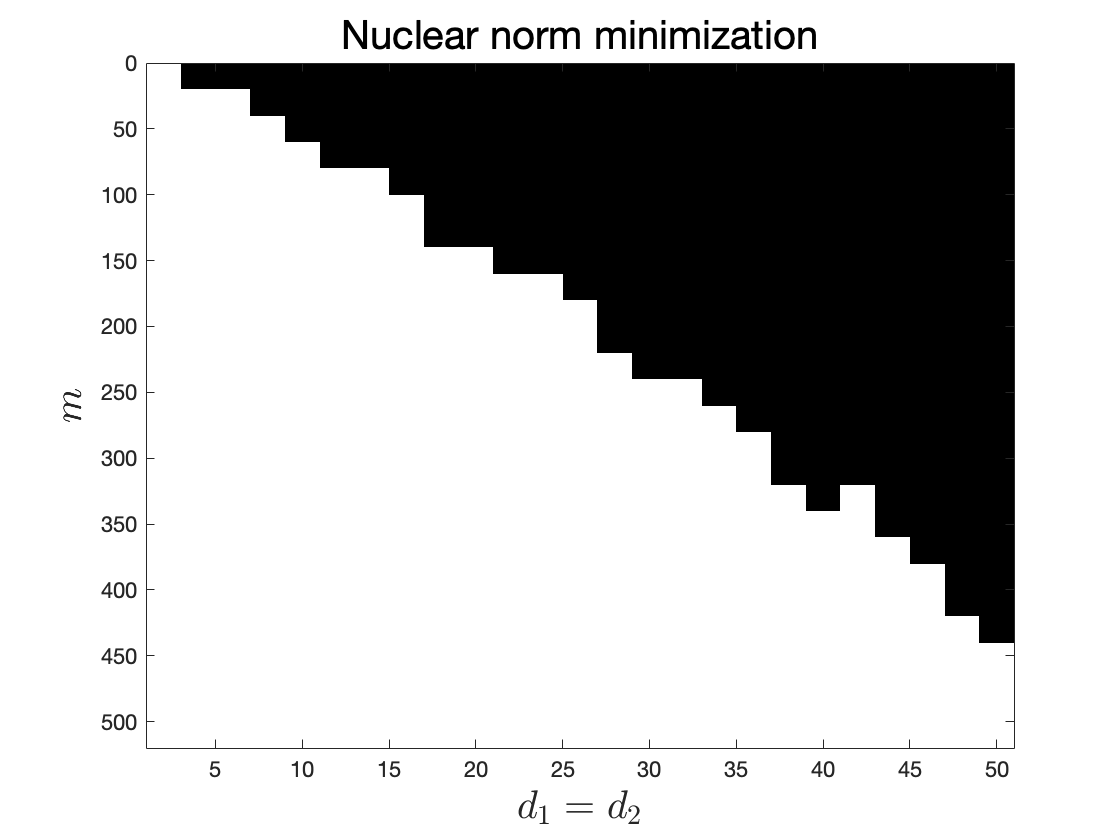}
		\caption{Matrix Sensing     \label{fig: MS}}
	\end{subfigure}
	\qquad
	\begin{subfigure}[b]{0.35\textwidth}
		\centering
		\includegraphics[width=\textwidth]{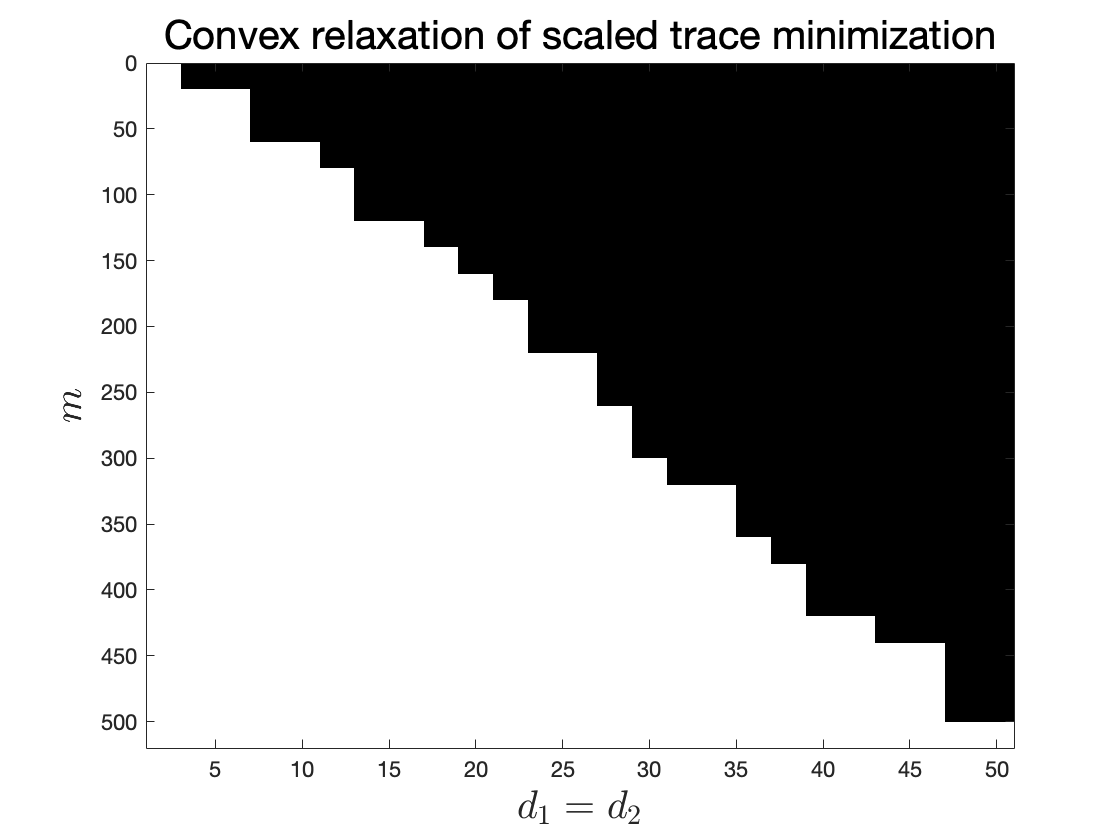}
		\includegraphics[width= \textwidth]{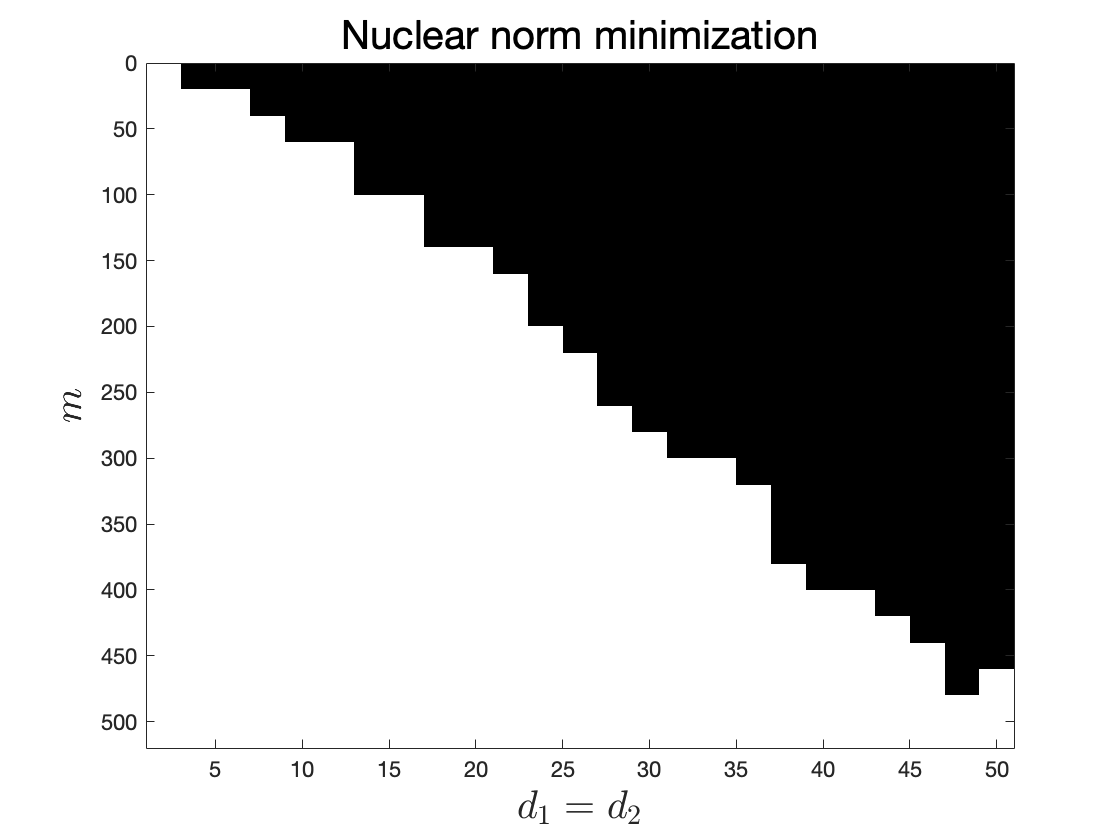}
		\caption{Bilinear Sensing   \label{fig: BS}}
	\end{subfigure}
	\begin{subfigure}[b]{0.35\textwidth}
		\centering
		\includegraphics[width=\textwidth]{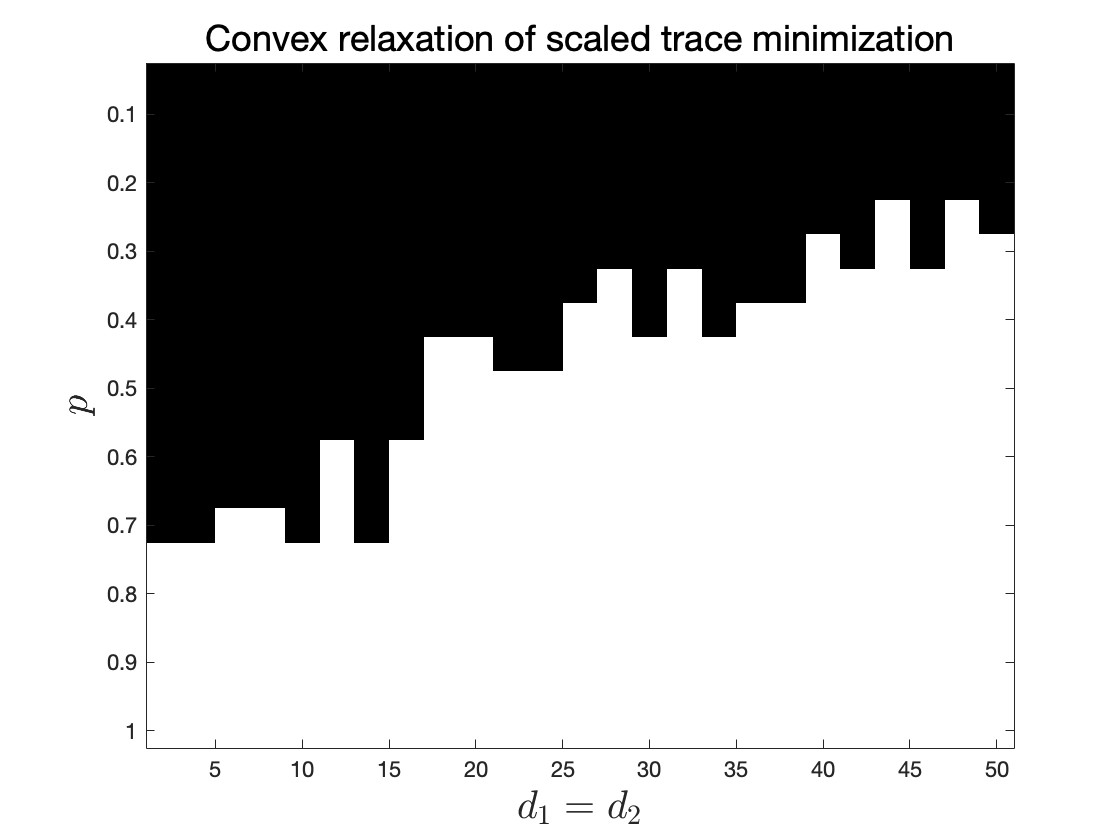}
		\includegraphics[width= \textwidth]{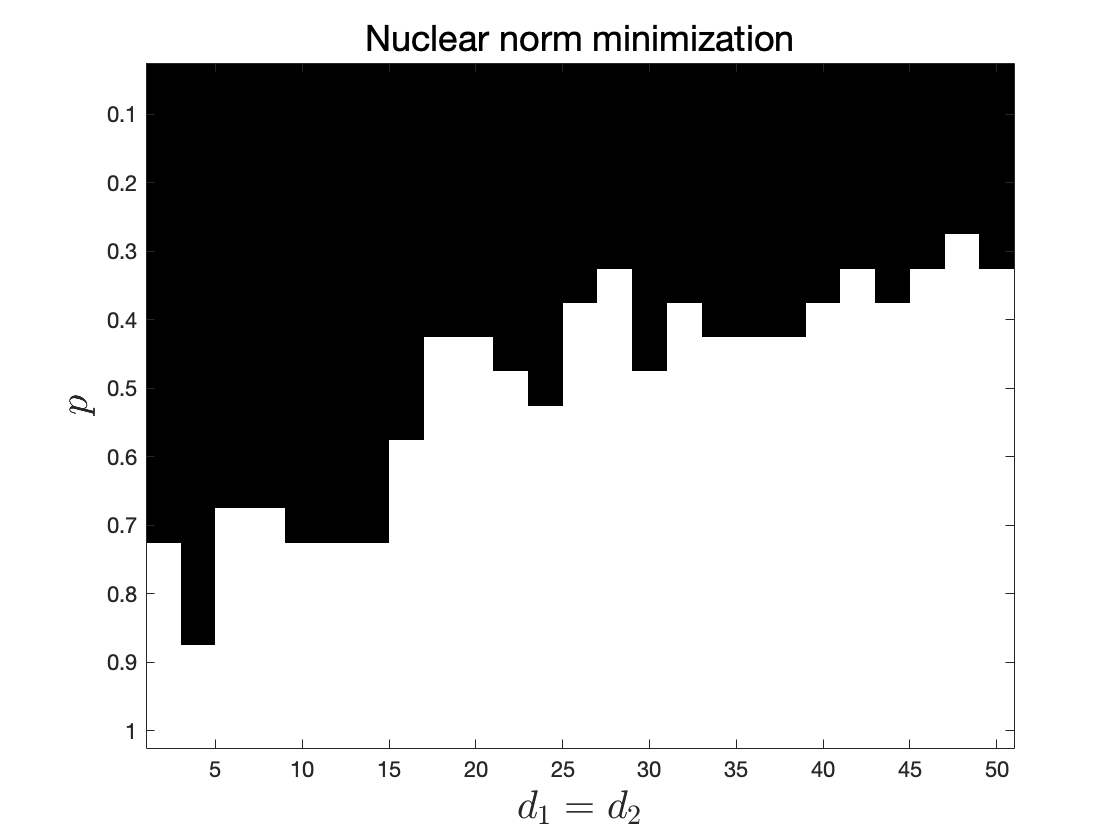}
		\caption{Matrix Completion     \label{fig: MC}}
	\end{subfigure}
	\qquad
	\begin{subfigure}[b]{0.35\textwidth}
		\centering
		\includegraphics[width=\textwidth]{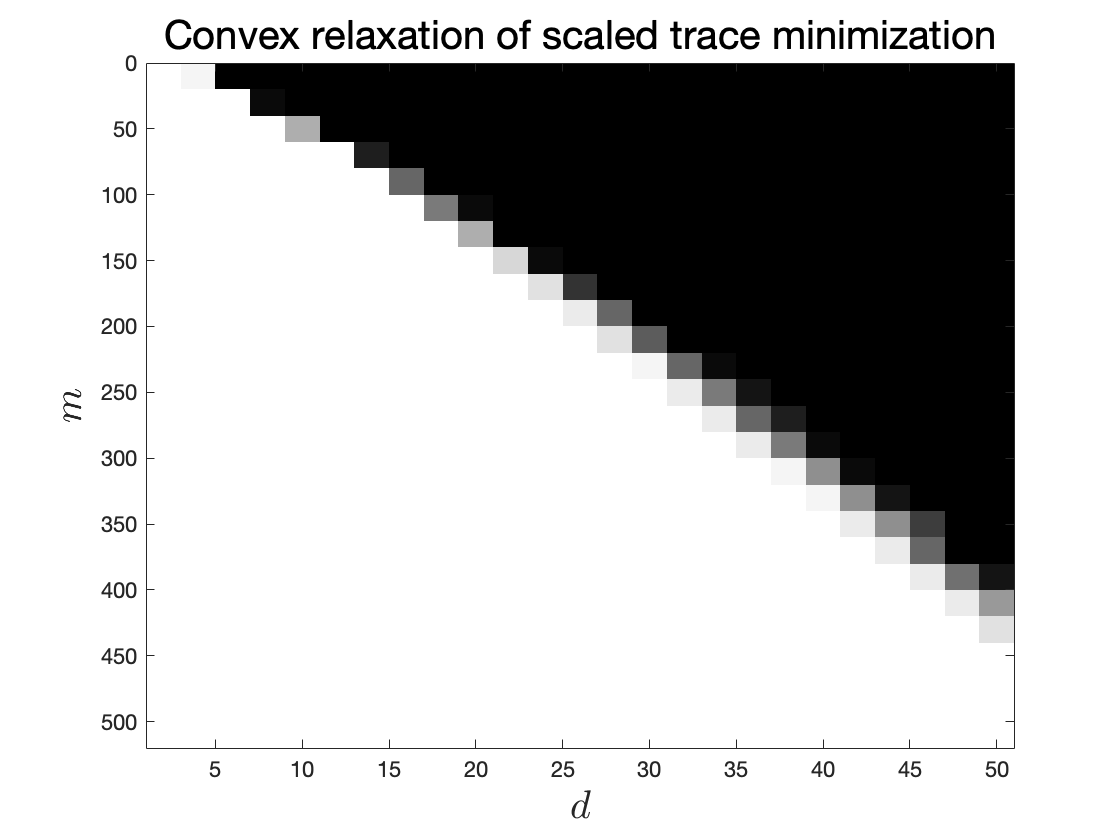}
		\includegraphics[width= \textwidth]{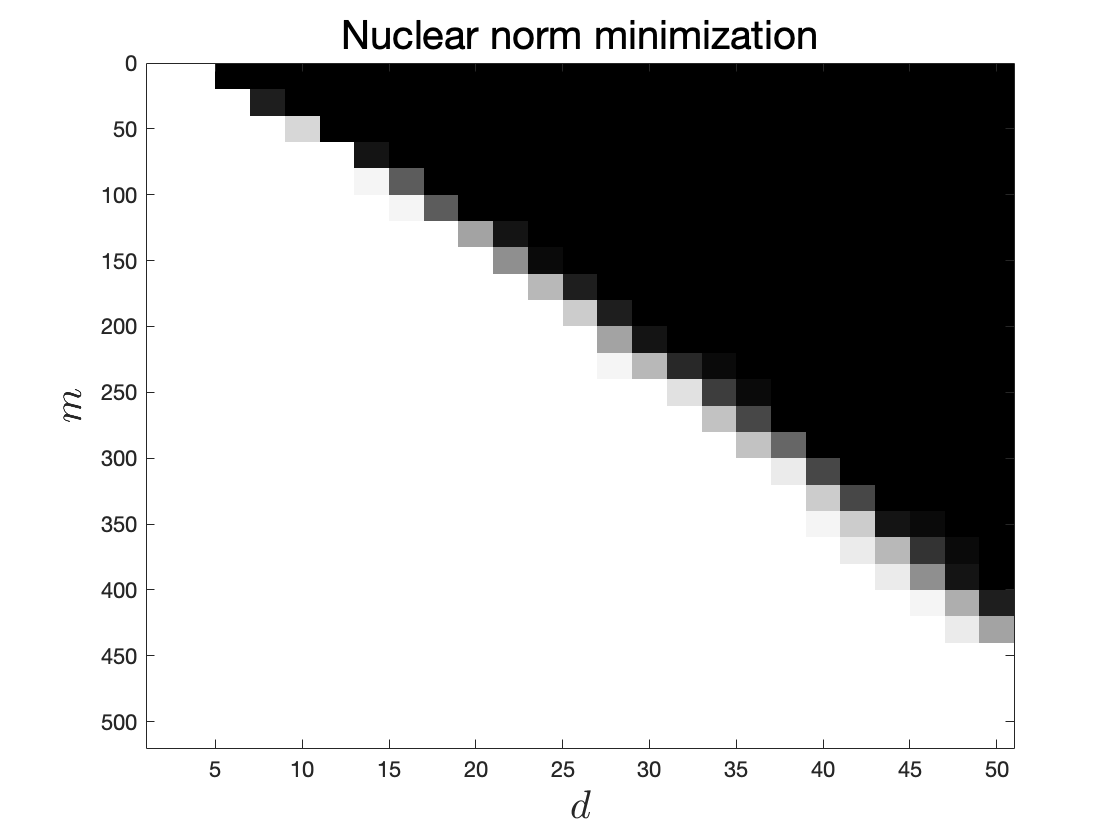}
		\caption{NN with quadratic activation  \label{fig: NNQ}}
	\end{subfigure}
	\caption{The empirical probability of successful recovery of $\truM$ for different combination of dimension $d$ and number of measurements $m$ ($p$ for matrix completion). We use gray scale and the whiter the color, the higher probability of success.}
	\label{fig: exact recovery}
\end{figure}

\paragraph{Experiment setup.}  We consider four problems described earlier in the paper: (a) matrix sensing, (b) bilinear sensing, (c) matrix completion, and (d) neural networks with quadratic activation. For each setting, we consider different combination of the dimension $d = \dmone = \dmtwo$ and the number of measurements $m$ ($p$ for matrix completion). For each combination $(\dm, \ncons)$ ( $(\dm,p)$ for matrix completion), we randomly generate a rank $2$ ground truth unit Frobenius norm matrix $\truM$ (rank $3$ for the setting of neural network with quadratic activation), then repeatedly generate the linear measurement map $\Amap$ and solve ten times the convex relaxation associated with being a flat solution and the nuclear norm minimization problem.

\paragraph{Exact Recovery.} To measure the success of exact recovery, for a solution $\hat{X}$ from the convex relaxation of the scaled trace problem (or from the nuclear norm minimization), we measure the Frobenius norm error $\fronorm{D_1^{-1}\hat{X}D_2^{-1}-\truM}$ (or $\fronorm{\hat{X}-\truM}$ for the nuclear norm minimization). Our criterion for exact recovery is whether this error is smaller than $10^{-6}$ or not. Figure \ref{fig: exact recovery} shows the empirical probability of success recovery (averaging over ten times) for each combination of dimension and number of measurements. The figure is in gray scale and the whiter color indicates higher success probability. We observe that the frequency of exact recovery by flat solutions almost matches the frequency of exact recovery by nuclear norm minimization. Notice moreover that flat solutions exactly recover the ground truth matrix, though we are only able to show weak recovery for matrix completion.

\paragraph{Regularity.} Next we test the regularity of flat solutions for the (a) matrix sensing, (b) bilinear sensing, (c) matrix completion problems. We only consider the pairs $(\dm,\ncons)$ such that the %the empirical success probability is one, since otherwise, it is hard to measure the minimal norm, 
matrices $D_1, D_2$ are nonsingular.  Let $\hat{X}$ be the solution of the convex relaxation for being a flat solution and let $\hat{X}_{\texttt{nuc}}$ be the solution to the nuclear norm minimization problem.
We compute the factors $L_f = D_1^{-1}U\sqrt{\Sigma}$ and $R_f = D_2^{-1}V\sqrt{\Sigma}$ using the full SVD of $\hat{X}= U\Sigma V^\top$. We then use the quantity $\frac{\fronorm{L_f}^2+\fronorm{R_f}^2}{2\nucnorm{\hat{X}_{\texttt{nuc}}}}$ to measure the norm-minimality of flat solutions, and the quantity $\nucnorm{L_f^\top L_f - R_f^\top R_f}/\nucnorm{\truM}$ to measure balancedness. Whenever one of the matrices $D_1,D_2$ is singular, we set both measures to be $10^{20}$. The result (in $\log10$ scale) is shown in Figure \ref{fig: regularity}. We observe that whenever flat solutions exactly recover the ground truth, both measures are small but not exactly zero. In particular, the norm-minimal and flat solutions are distinct. 

\begin{figure}[H]
	\centering
	\begin{subfigure}[b]{0.48\textwidth}
		\centering
		\includegraphics[width=\textwidth]{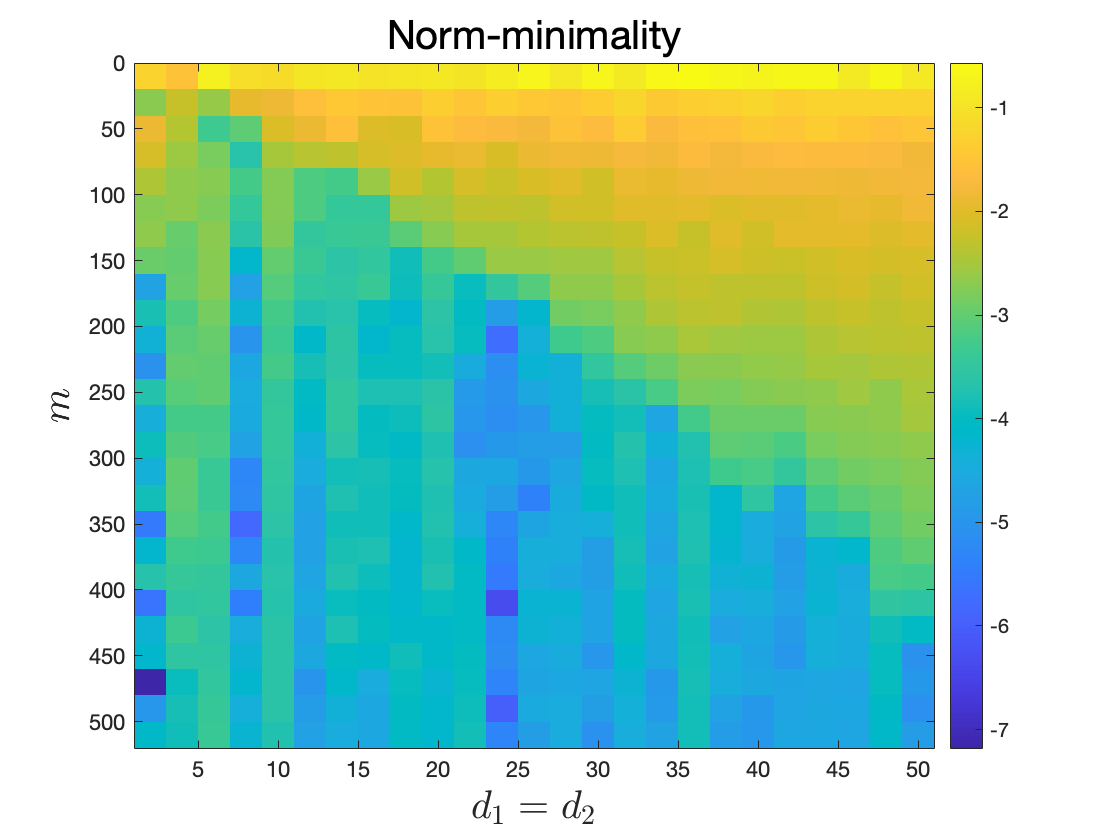}
		\includegraphics[width= \textwidth]{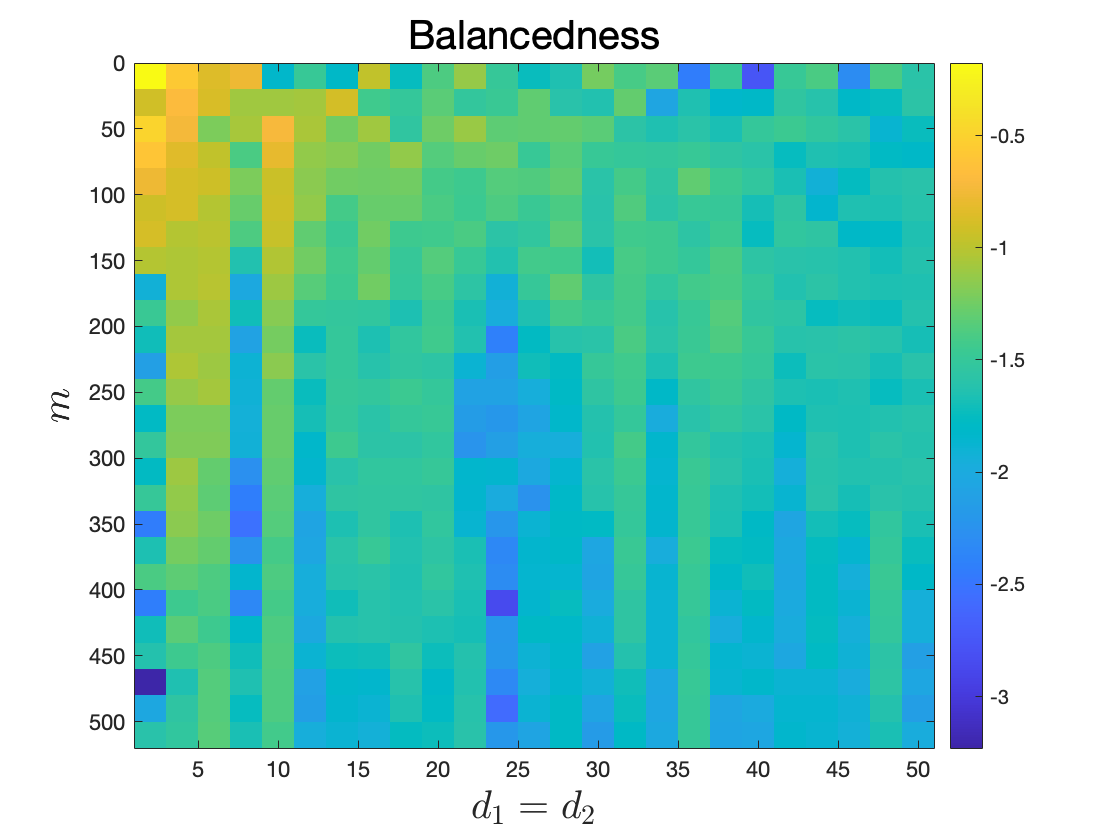}
		\caption{Matrix Sensing   \label{fig: MSreg}}
	\end{subfigure}
	\hfill
	\begin{subfigure}[b]{0.48\textwidth}
		\centering
		\includegraphics[width=\textwidth]{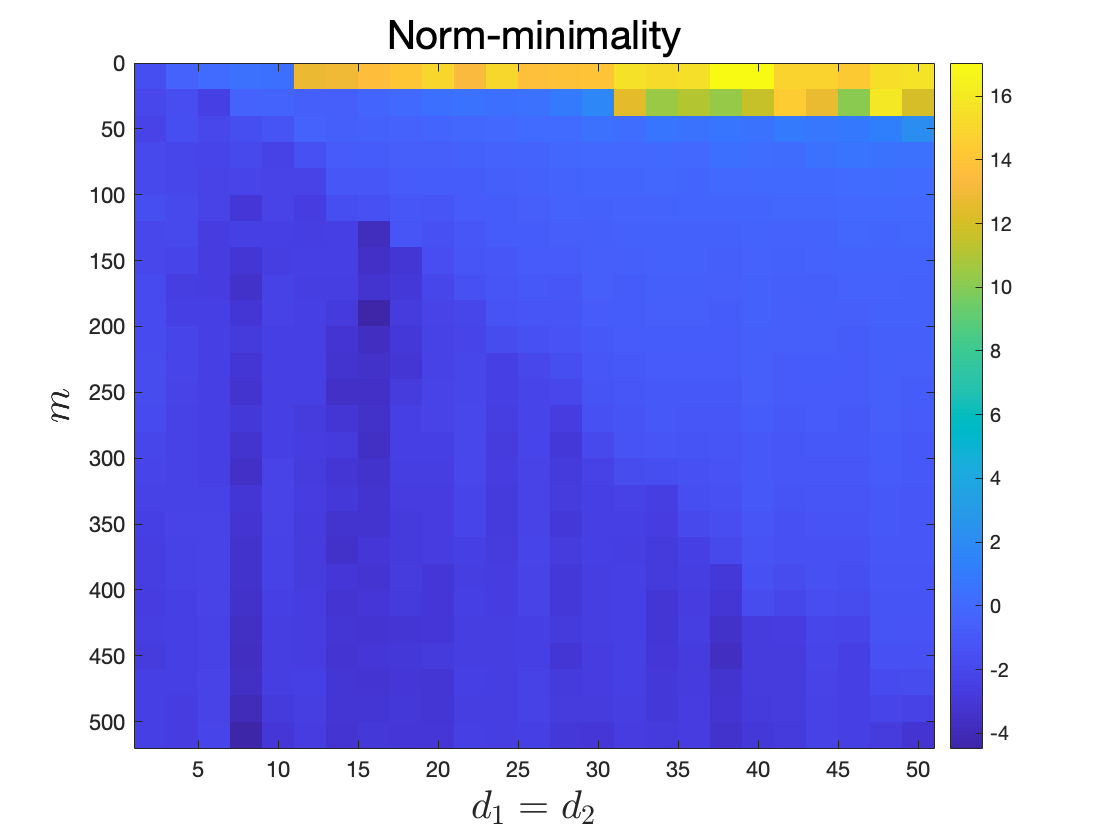}
		\includegraphics[width= \textwidth]{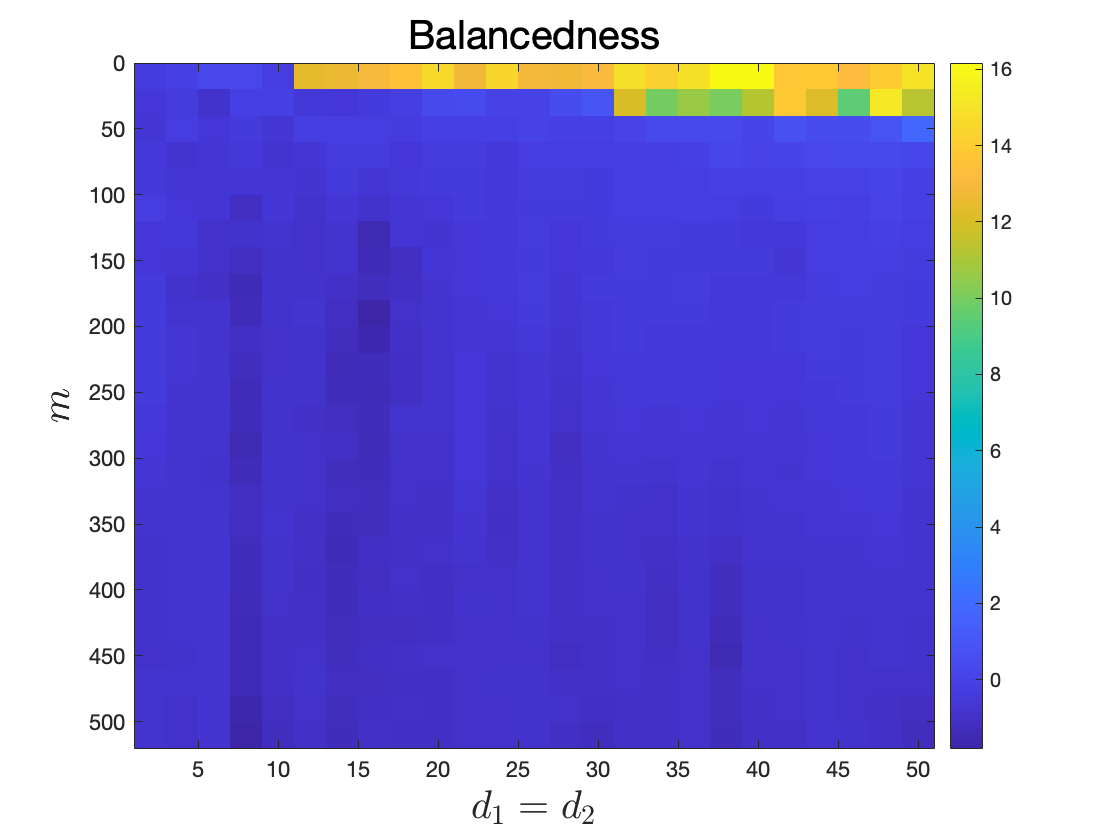}
		\caption{Bilinear Sensing \label{fig: BSreg}}
	\end{subfigure}
	\caption{The empirical average (over $10$ trials) of the regularity measure : (1) minimal norm, $\frac{\fronorm{L_f}^2 + \fronorm{R_f}^2}{2 \nucnorm{\hat{X}_{\texttt{cvx}}}}-1$, and (2) balance, $\frac{\nucnorm{L_f^\top L_f - R_f^\top R_f}}{\nucnorm{\truM}}$. Both measures are presented in $\log10$ scales. }%Whenever one of the $D_i$s is singular, we set both measures to be $10^{20}$.}
	\label{fig: regularity}
\end{figure}

%\subsection*{Acknowledgments.}
%We thank Zaid Harchaoui for insightful discussions about the paper and related topics.
%\input{sections/Discussion}

\section{Conclusion and discussion on depth}\label{sec: Effect_of_depth}

In this paper, we analyzed a variety of low rank matrix recovery problems in  rank-overparameterized settings. We considered overparameterized matrix and bilinear sensing, robust PCA, covariance matrix estimation, and single hidden layer neural networks with quadratic activation functions. In all cases, we showed that flat minima, measured by the scaled trace of the Hessian, {\em exactly recover} the ground truth under standard statistical assumptions. For matrix completion, we established weak recovery, although empirical evidence suggests exact recovery holds here as well.  

Matrix factorization problems are suggestive of the behavior one may expect for two layer neural networks. Therefore, an appealing question is to consider the effect that depth may have on generalization properties of flat solutions. In this section, we argue that depth may not bode well for generalization of flat solutions. 
As a simple model, we consider the setting of sparse recovery under a ``deep'' overparameterization. Namely, consider a ground truth vector $\trux\in \real^{\dm}$ with at most $\trur$ nonzero coordinates.  The goal is to recover $\trux$ from the observed measurements $b=A\trux$ under a linear map $A\colon\real^{\dm} \rightarrow \real^{\ncons}$. We assume that $A$ satisfies the restricted isometry property (RIP): there exist $(\delta_1,\delta_2)$ such that 
\begin{equation}\label{eq: vectorRIP}
\delta_1 \|x\|_2\leq \tfrac{1}{\sqrt{m}}\|Ax\|_2\leq \delta_2\|x\|_2,
\end{equation}
for all $x\in \RR^d$ that have at most $2r$ nonzero coordinates. The simple least square formulation for finding consistent dense signals is 
\begin{equation}\label{eq: ls}
	\min_{x\in \real^\dm} \quad g(x):=\frac{1}{m} \twonorm{Ax -b}^2.
\end{equation}
We  introduce overparameterization by parameterizing the variable $x$ as the Hadamard product $\odot$ of $k$ factors $x= v_1\odot v_2\odot \dots \odot v_k$ with $v_i\in \RR^d$. Thus, the problem \eqref{eq: ls} becomes 
\begin{equation}\label{eq: overparametrization}
	\min_{v_1,\dots,v_k} \quad f(v):=\frac{1}{m} \twonorm{A(v_1\odot \dots \odot v_k ) -b}^2\qquad \textrm{with }v_i \in \real^{\dm},\;i=1,\dots,k.
\end{equation}
The flat solutions are naturally defined as those $(v_i)_{i=1}^k$ solving the following problem: 
\begin{equation}\label{def: flat_solution_sparse_vector_recovery}
	\min_{v_i\in \real^\dm,i=1,\dots,k} \tr(D^2 f(v_1,\dots,v_k)) \quad 
	\text{subject to}\quad A(v_1\odot \dots \odot v_k) =b.
\end{equation}
To compute the Hessian $\tr(D^2 f(v_1,\dots,v_k))$, let $a_i$ be the $i$-th column of $A$. Following a similar calculation as in Lemma \ref{lem: scaledtraceFronormSquareSum} yields the expression
\begin{equation}\label{eqn: trace}
\tr(D^2 f([v_i]_{i=1}^k))  = \sum_{i=1}^k \twonorm{\sqrt{D} (v_1\odot \dots v_{i-1}\odot v_{i+1}\dots \odot v_k)}^2, 
\end{equation}
for any $(v_i)_{i=1}^k \in \real^{\dm \times k}$ where 
\begin{equation}
	D := \frac{1}{m}\diag(a_1^\top a_1,\dots, a_\dm ^\top a_\dm ).
\end{equation} 
The following lemma shows that $D$ is close to the identity matrix. 
\begin{lem}
	Suppose that the linear map $A$ satisfies $(1-\delta,1+\delta)$ RIP for some $\delta \in (0,1)$. 
	Then the matrix $D$ satisfies 
	\[
	(1-\delta)^2 I \preceq D \preceq (1+\delta)^2 I.
	\]
\end{lem}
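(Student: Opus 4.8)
The plan is to exploit the fact that $D$ is a diagonal matrix, so the operator inequality $(1-\delta)^2 I \preceq D \preceq (1+\delta)^2 I$ is equivalent to the scalar bounds $(1-\delta)^2 \le D_{ii} \le (1+\delta)^2$ for every coordinate $i$. By definition $D_{ii} = \tfrac{1}{m} a_i^\top a_i = \tfrac{1}{m}\|a_i\|_2^2$, where $a_i$ denotes the $i$-th column of $A$, so the whole statement reduces to controlling the norm of each column of $A$.

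To get these column-norm bounds, I would simply feed the standard basis vector $e_i \in \RR^d$ into the RIP estimate \eqref{eq: vectorRIP}. Since $e_i$ has exactly one nonzero entry it is $2r$-sparse (for $r \ge 1$), so RIP applies and gives $(1-\delta)\|e_i\|_2 \le \tfrac{1}{\sqrt m}\|A e_i\|_2 \le (1+\delta)\|e_i\|_2$. Using $A e_i = a_i$ and $\|e_i\|_2 = 1$ and squaring both sides yields precisely $(1-\delta)^2 \le \tfrac{1}{m}\|a_i\|_2^2 = D_{ii} \le (1+\delta)^2$.

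Finally, since $D$ is diagonal, the matrices $D - (1-\delta)^2 I$ and $(1+\delta)^2 I - D$ are both diagonal with nonnegative entries by the previous step, hence positive semidefinite, which is exactly the claimed ordering $(1-\delta)^2 I \preceq D \preceq (1+\delta)^2 I$. There is essentially no obstacle here: the argument is a one-line application of RIP to basis vectors, and the only point worth a moment's care is verifying that $e_i$ is an admissible (i.e.\ sufficiently sparse) test vector for the RIP, which is immediate.
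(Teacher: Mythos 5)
Your argument is correct and is essentially identical to the paper's own proof: both reduce the operator inequality to the diagonal entries $D_{ii}=\tfrac{1}{m}\|Ae_i\|_2^2$ and apply the RIP bound \eqref{eq: vectorRIP} to the $1$-sparse test vectors $e_i$. Nothing is missing.
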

\begin{proof}
	Indeed, since $D$ is diagonal, we only need to show  $\frac{1}{\ncons}a_i^\top a_i\in [(1-\delta)^2,(1+\delta)^2]$ for each index $i$. Note that 
	$\frac{1}{\ncons}\twonorm{Ae_i}^2 = \frac{1}{\ncons} a_i^\top a_i$. Since $e_i$ is a sparse vector with only one nonzero, 
	using the $(1-\delta,1+\delta)$ RIP, we have $\frac{1}{\ncons} a_i^\top a_i = \frac{1}{\ncons}\twonorm{Ae_i}^2 
	\in [(1-\delta)^2 \twonorm{e_i}, (1+\delta)^2 \twonorm{e_i}^2 ] =  [(1-\delta)^2 , (1+\delta)^2 ] $ and our proof is complete.
\end{proof}

The next lemma shows that the following optimization problem is equivalent to 
the optimization problem defining flat solutions  \eqref{def: flat_solution_sparse_vector_recovery}.
\begin{equation}\label{eq: sparse_Perturbed_L_1_norm}
	\min_{x\in \RR^{d}}\; \sum_{i=1}^d |D_{ii}| |x _i|^{2-\frac{2}{k}} \quad \text{s.t.}\quad 
Ax =b.
\end{equation}

Denote by $v_{h,j}$  the $j$-th component of the vector variable $v_h$ for $1\leq h\leq k$.
\begin{lemma}
	Problem \eqref{def: flat_solution_sparse_vector_recovery} is equivalent to Problem \eqref{eq: sparse_Perturbed_L_1_norm} 
	in the following sense: 
	\begin{itemize}
		\item If $x$ solves \eqref{eq: sparse_Perturbed_L_1_norm}, then 
		any $v_i$ satisfying $x = v_1\odot \dots \odot v_k$ and $|v_{1,j}| = \dots =|v_{k,j}|$ for any 
		$1\leq j\leq \dm$ solves \eqref{def: flat_solution_sparse_vector_recovery}.
		\item If $v_1,\dots, v_k$ solves \eqref{def: flat_solution_sparse_vector_recovery}, then 
		$x = v_1\odot \dots \odot v_k$ solves \eqref{eq: sparse_Perturbed_L_1_norm}.
	\end{itemize}
\end{lemma}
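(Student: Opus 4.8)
The plan is to reduce the nonconvex problem \eqref{def: flat_solution_sparse_vector_recovery} to the problem \eqref{eq: sparse_Perturbed_L_1_norm} by \emph{partially minimizing out} the factorization. The key observation is that the constraint $A(v_1\odot\cdots\odot v_k)=b$ depends on the factors $(v_i)_{i=1}^k$ only through the product $x:=v_1\odot\cdots\odot v_k$, so one may optimize in two stages: first minimize the trace over all factorizations of a fixed $x$, then minimize over all feasible $x$. Using the expression \eqref{eqn: trace}, the trace expands coordinate-wise as
\begin{equation*}
\tr(D^2 f([v_i]_{i=1}^k)) = \sum_{j=1}^d D_{jj}\sum_{i=1}^k\prod_{h\neq i} v_{h,j}^2,
\end{equation*}
so the inner minimization decouples across coordinates: for each $j$ one must minimize $\sum_{i=1}^k\prod_{h\neq i} v_{h,j}^2$ subject to $\prod_{h=1}^k v_{h,j}=x_j$.

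First I would solve this coordinate subproblem. Writing $t_h:=v_{h,j}^2\geq 0$, the constraint becomes $\prod_h t_h = x_j^2$ and the objective becomes $\sum_{i=1}^k\prod_{h\neq i}t_h$. If $x_j=0$, then choosing $v_{h,j}=0$ for every $h$ makes all terms vanish, so the minimum is $0=|x_j|^{2-2/k}$ (here $2-2/k>0$ since $k\geq 2$). If $x_j\neq 0$, then all $t_h>0$ and the objective equals $x_j^2\sum_{i=1}^k t_i^{-1}$; the arithmetic--geometric mean inequality gives $\sum_i t_i^{-1}\geq k\big(\prod_i t_i\big)^{-1/k}=k|x_j|^{-2/k}$, with equality exactly when all $t_h$ coincide, i.e.\ $|v_{1,j}|=\cdots=|v_{k,j}|=|x_j|^{1/k}$. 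Hence the coordinate minimum is $k D_{jj}|x_j|^{2-2/k}$ (using $D_{jj}=|D_{jj}|\geq 0$), attained precisely by equal-modulus factorizations, and summing over $j$ yields
\begin{equation*}
\min_{v_1\odot\cdots\odot v_k=x}\ \tr(D^2 f([v_i]_{i=1}^k)) = k\sum_{j=1}^d |D_{jj}|\,|x_j|^{2-2/k}.
\end{equation*}

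Combining the two stages, minimizing $\tr(D^2 f)$ over $\{(v_i):A(v_1\odot\cdots\odot v_k)=b\}$ coincides, up to the harmless positive constant $k$, with minimizing $\sum_j|D_{jj}||x_j|^{2-2/k}$ over $\{x:Ax=b\}$, i.e.\ with \eqref{eq: sparse_Perturbed_L_1_norm}; this already gives equality of optimal values. The two bullet points then follow by tracking where equality is achieved: if $x$ solves \eqref{eq: sparse_Perturbed_L_1_norm}, any factorization with $x=v_1\odot\cdots\odot v_k$ and $|v_{1,j}|=\cdots=|v_{k,j}|$ for all $j$ attains the two-stage optimum and hence solves \eqref{def: flat_solution_sparse_vector_recovery}; conversely, any optimal $(v_i)$ must have product $x=v_1\odot\cdots\odot v_k$ that is optimal for the outer problem, so $x$ solves \eqref{eq: sparse_Perturbed_L_1_norm}. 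The step I expect to require the most care is the $x_j=0$ case, where one must verify the coordinate minimum is genuinely \emph{attained} rather than merely approached and that it equals $|x_j|^{2-2/k}=0$; a secondary (genuinely minor) point is the degenerate possibility $D_{jj}=0$, which merely leaves coordinate $j$ unconstrained in the trace and remains consistent with the stated equivalence.
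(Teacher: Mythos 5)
Your proof is correct and follows essentially the same route as the paper's: a coordinate-wise application of the AM--GM inequality to the $k$ products $\prod_{h\neq i}v_{h,j}^2$, with equality exactly when $|v_{1,j}|=\cdots=|v_{k,j}|$, followed by reduction to the variable $x=v_1\odot\cdots\odot v_k$. Your treatment is slightly more careful on the attainment in the $x_j=0$ case and on retaining the constant factor $k$ (which the paper's last displayed line drops, harmlessly), but the substance is identical.
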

\begin{proof}
	According to \eqref{eqn: trace}, the trace of the Hessian is 
	\begin{equation}
		\begin{aligned}
			\tr(D^2 f([v_i]_{i=1}^k)) &  = \sum_{i=1}^k \twonorm{\sqrt{D} (v_1\odot \dots v_{i-1}\odot v_{i+1}\dots \odot v_k)}^2 \\
				&  = \sum_{i=1}^k \sum_{j=1}^\dm D_{jj} \prod_{ h\not = i} v_{h,j}^2\\
				& = \sum_{j=1}^\dm D_{jj}  \sum_{i=1}^k  \prod_{ h\not = i} v_{h,j}^2\\
				& \overset{(a)}{\geq}  \sum_{j=1}^\dm D_{jj} k \left( \prod_{ i=1}^k v_{i,j}^{2(k-1)} \right)^{\frac{1}{k}}\\
				& =  \sum_{j=1}^\dm D_{jj} \left( \prod_{ i=1}^k |v_{i,j}| \right)^{2-\frac{2}{k}}.\\
		\end{aligned}
	\end{equation}
In the step $(a)$, 
we use the well-known AM-GM inequality. The equality holds if and only if $|v_{1,j}| = \dots =|v_{k,j}|$ for any 
$1\leq j\leq \dm$. The rest 
follows by letting $x = v_1\odot \dots \odot v_k$.

\end{proof}

For the case $k=2$, the objective is $\sum_{i=1}^d |D_{ii}| |x_i| = \|Dx\|_1$ which is the rescaled 
$\ell_1$ norm for a near-identity matrix $D$. Hence, an argument similar to those in Section \ref{sec:RIP} reveals the minimizer is uniquely $\trux$. 
We state this result formally below. 

\begin{lem}
	There is a universal constant $c>0$ such that if the linear map $A$ satisfying $(1-\delta,1+\delta)$ RIP 
	with $0<\delta <c$. Then for $k=2$, any solution $(v_1,v_2)$ to \eqref{def: flat_solution_sparse_vector_recovery}
	satisfies $\trux = v_1\odot v_2$.
\end{lem}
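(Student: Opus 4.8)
The plan is to reduce the claim to a classical exact-recovery statement for weighted $\ell_1$-minimization. By the preceding lemma, for $k=2$ any solution $(v_1,v_2)$ of \eqref{def: flat_solution_sparse_vector_recovery} yields a minimizer $x=v_1\odot v_2$ of \eqref{eq: sparse_Perturbed_L_1_norm}, which for $k=2$ becomes
\begin{equation*}
	\min_{x\in\RR^d}~\|Dx\|_1\qquad\text{subject to}\qquad Ax=b,
\end{equation*}
since $2-\tfrac{2}{k}=1$. Hence it suffices to prove that $\trux$ is the \emph{unique} minimizer of this weighted $\ell_1$ program; then $v_1\odot v_2=\trux$ for every flat solution.

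To handle the weighted program I would pass to the variable $y:=Dx$; since $D$ is diagonal and invertible (by the lemma bounding $(1-\delta)^2 I\preceq D\preceq(1+\delta)^2 I$) this is a bijective change of coordinates, and the program becomes the standard $\ell_1$-minimization $\min_y\|y\|_1$ subject to $(AD^{-1})y=b$. The ground truth in the new coordinates is $y_\natural:=D\trux$, which has the same support as $\trux$ and is therefore $r$-sparse, and it is feasible because $AD^{-1}y_\natural=A\trux=b$. The key step is to check that $\tfrac{1}{\sqrt m}AD^{-1}$ enjoys a restricted isometry property of order $2r$ with constant close to zero: for any $2r$-sparse $z$, the vector $D^{-1}z$ is again $2r$-sparse with $(1+\delta)^{-2}\|z\|_2\le\|D^{-1}z\|_2\le(1-\delta)^{-2}\|z\|_2$, so the $(1-\delta,1+\delta)$ RIP of $A$ yields
\begin{equation*}
	\frac{1-\delta}{(1+\delta)^2}\,\|z\|_2\;\le\;\frac{1}{\sqrt m}\,\|AD^{-1}z\|_2\;\le\;\frac{1+\delta}{(1-\delta)^2}\,\|z\|_2 .
\end{equation*}
Both constants converge to $1$ as $\delta\downarrow 0$, so for a small enough universal $c$ the order-$2r$ isometry constant of $\tfrac{1}{\sqrt m}AD^{-1}$ falls below the threshold (e.g.\ $\sqrt2-1$) that guarantees $r$-sparse vectors are recovered \emph{uniquely} by $\ell_1$-minimization — the vector analogue of Lemma~\ref{lem: exactrecoverConvex}, which follows from standard compressed-sensing theory. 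Consequently $y_\natural$ is the unique minimizer in the $y$-coordinates, and undoing the substitution shows $\trux=D^{-1}y_\natural$ uniquely minimizes the weighted program, as needed.

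The only delicate point is the bookkeeping of constants: one must confirm that the asymmetric isometry bounds $\big((1-\delta)(1+\delta)^{-2},\,(1+\delta)(1-\delta)^{-2}\big)$ derived above actually satisfy the hypothesis of whichever $\ell_1$-recovery theorem one invokes — some are stated through a symmetric constant $\delta_{2r}$, others through a ratio of the two bounds as in condition~2 of Lemma~\ref{lem: exactrecoverConvex} with $l=2$ — and to shrink $c$ accordingly. Everything else is the routine change of variables together with the already-established equivalence between \eqref{def: flat_solution_sparse_vector_recovery} and \eqref{eq: sparse_Perturbed_L_1_norm}.
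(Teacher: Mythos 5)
Your proposal is correct and follows exactly the route the paper intends: the paper gives no detailed proof of this lemma, stating only that "an argument similar to those in Section~\ref{sec:RIP} reveals the minimizer is uniquely $\trux$," and your argument is precisely the vector analogue of that recipe — reduce via the equivalence lemma to the weighted $\ell_1$ program, absorb the diagonal near-identity matrix $D$ into the sensing matrix (the sparse counterpart of Lemma~\ref{lem:reparam}, valid because diagonal scaling preserves support), and invoke standard RIP-based uniqueness of $\ell_1$ recovery for small enough $\delta$ (the counterpart of Lemma~\ref{lem: exactrecoverConvex}). Your closing caveat about matching the asymmetric isometry bounds to the hypothesis of the chosen recovery theorem is the right bookkeeping to flag, and shrinking the universal constant $c$ handles it.
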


On the other hand, higher values of $k$ \emph{do not} encourage sparsity. In the extreme case $k\rightarrow \infty$, the objective function in \eqref{eq: sparse_Perturbed_L_1_norm} is close to $\twonorm{x}^2$ which should give a dense solution in general. Indeed, in Figure \ref{fig: depthEffec}, we plot the solution performance of \eqref{eq: sparse_Perturbed_L_1_norm} for different values $k =\{2,3,\dots,10\}$ and $\trur = \{1,2,3,4,5\}$ measured by the relative error $\frac{\twonorm{x-\trux}}{\twonorm{\trux}}$. \footnote{We set $d=1000$ and $m = 3 \lceil \trur \log \dm \rceil$ and generate the signal $\trux$ with first $k$ components being $1$ and zero otherwise. For each configuration of $(k,\trur)$, we randomly generate $25$ realizations of the Gaussian sensing matrix $A$ and solve \eqref{eq: sparse_Perturbed_L_1_norm} for each $A$. The performance metric  $\frac{\twonorm{x-\trux}}{\twonorm{\trux}}$ is averaged over these $25$ trials.} Indeed, exact recovery is observed for $k=2$, while the relative error degrades significantly as   $k$ increases. %Note that the performance is better for larger $\trur$ and fixed $k$. Thus for dense solutions, the negative effect of larger $k$ on recovery is reduced.

\begin{figure}[h]
	\centering
	\includegraphics[width= 0.75 \textwidth]{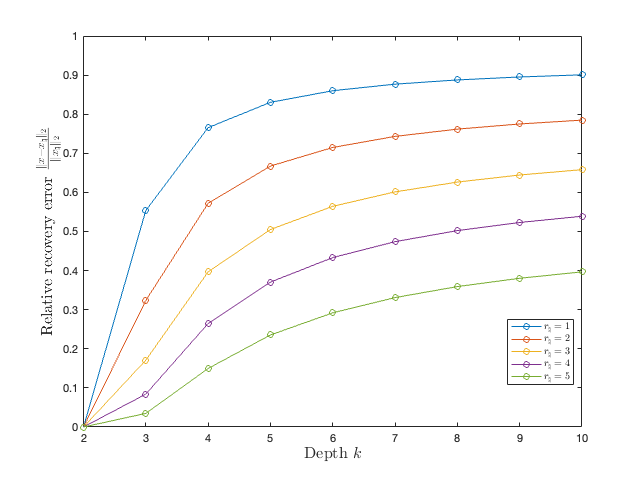}
	\caption{The effect of depth for different choice of sparsity $\trur$}\label{fig: depthEffec}
\end{figure}
\bibliographystyle{plain}
\bibliography{reference}

\appendix 
\section{Extension to noisy observation}\label{sec: noisyobservation}
This section considers an extension of the flat solution concept to the setting where the observations are corrupted by noise: 
\begin{equation}
	b = \Amap(\truM) + e, \quad e\sim N(0,\sigma ^2I_m),
\end{equation}
where $\sigma>0$ is the noise level and $N(0,I_m)$ is the  standard $m$-dimensional Gaussian. Our discussion in the rest of the paper focused on the simpler case $\sigma= 0$. 
We define the flat solution in this setting as follows. We continue to use the scaled trace $\str(D^2 f(L,R))$ as the flatness measure of the objective function. However, instead of considering all solutions $(L,R)$ that interpolate the data, we consider those pairs $(L,R)$ that are in the sublevel set: 
\begin{equation}\label{eq: noiseLevelSubset}
\{(L,R)\mid \twonorm{\Amap(LR^\top)-b}\leq \twonorm{e}\}.
\end{equation} 
The reason for this choice is that 
in the noisy observation setting, the global solution of \eqref{eq: main minimization} (with $k=\min\{\dmone,\dmtwo\}$) has the potential of overfitting no matter what regularization has been enforced. Indeed, consider the simplest case $\Amap = \mathcal{I}$, i.e.,
the map $\Amap$ is the identity map. In this setting, any global minimizer  $LR^\top$ is simply the observation $b = \truM +e$ itself. %Observe that the error $\fronorm{LR^\top -\truM}^2 =\bigO(d^2\sigma)$ is much larger than the error $\bigO(\trur d\sigma)$ corresponding to the soft-thresholding estimator. 
With the above preparation, we define the flat solutions to be the minimizers of the following problem.
\begin{equation}\label{def: noisyflatsolution}
	\min_{L\in \RR^{d_1\times k}, \; R\in \RR^{d_2\times k}} \str(D^2 f(L,R))\quad \text{subject to}\quad  \twonorm{\Amap(LR^\top)-b}\leq \twonorm{e}.
\end{equation}

The goal of the section is to prove the following.

\begin{thm}[Noisy matrix sensing]\label{thm: thmMS}
	Suppose that  $\mathcal{A}$ is  a Gaussian ensemble and the noise follows $e\sim N(0,\sigma^2 I_m)$. Then there exists universal constants $c,C$ such that for any $m \geq C\trur d_{\max}$, with probability at least $1-C\exp(-c(\dmone+\dmtwo))$, any solution  $(L_f,R_f)$ of \eqref{def: noisyflatsolution} satisfies 
	\begin{equation}\label{eq: mainerrorboundInformal}
		\fronorm{L_f R^\top_f -\truM}\leq C
		\sigma\sqrt{\frac{ \trur (\dmone +\dmtwo)}{m}}.
	\end{equation}
\end{thm}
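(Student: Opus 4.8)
\emph{Overview and reduction to a convex problem.} The plan is to mirror the noiseless matrix-sensing argument of Section~\ref{sec:RIP}: rescale the flat-solution problem to an $\ell_2$-constrained nuclear norm minimization, verify that the rescaled measurement map satisfies $\ell_2/\ell_2$ RIP, and then apply the standard RIP recovery bound for \emph{noisy} nuclear norm minimization. By Lemma~\ref{lem: scaledtraceFronormSquareSum} the identity $\str(D^2f(L,R))=2m(\fronorm{D_1L}^2+\fronorm{D_2R}^2)$ holds on all of the sublevel set \eqref{eq: noiseLevelSubset}, so \eqref{def: noisyflatsolution} is the problem of minimizing $\tfrac{1}{2}(\fronorm{D_1L}^2+\fronorm{D_2R}^2)$ over the pairs $(L,R)$ with $\twonorm{\Amap(LR^\top)-b}\le\twonorm{e}$. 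Substituting $X=D_1LR^\top D_2$ and using \eqref{eqn:srebro} together with $\trur\le k$, exactly as in Theorem~\ref{thm: equivalenceConvexNonconvexScaledTraceHessioan}, the matrix $X_f:=D_1L_fR_f^\top D_2$ associated with any flat solution is feasible for the convex program that minimizes $\nucnorm{X}$ over $\twonorm{\mathcal{B}(X)-b}\le\twonorm{e}$, and it satisfies $\nucnorm{X_f}\le\nucnorm{D_1\truM D_2}$, where $\mathcal{B}(\cdot):=\Amap(D_1^{-1}(\cdot)D_2^{-1})$ and $b=\mathcal{B}(D_1\truM D_2)+e$.

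\emph{RIP of $\mathcal{B}$ and the recovery bound.} By Lemma~\ref{lem:mat_sense_22}, when $m\gtrsim\trur d_{\max}$ the Gaussian ensemble $\Amap$ satisfies $\ell_2/\ell_2$ RIP with parameters $(c_1\trur,1-\delta,1+\delta)$ for an arbitrarily small fixed $\delta$, with probability at least $1-e^{-c_2m}$; Lemma~\ref{lem:cond_num} then forces $(1-\delta)I\preceq D_i\preceq(1+\delta)I$, and Lemma~\ref{lem:reparam} shows that $\mathcal{B}$ satisfies $\ell_2/\ell_2$ RIP with parameters $(c_1\trur,(1+\delta)^{-2}(1-\delta),(1-\delta)^{-2}(1+\delta))$, which for $\delta$ small and $c_1$ a large enough absolute constant meet the hypotheses of the noisy recovery theorems. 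Now set $Z:=X_f-D_1\truM D_2$. Feasibility of $X_f$ together with $b=\mathcal{B}(D_1\truM D_2)+e$ gives $\twonorm{\mathcal{B}(Z)-e}\le\twonorm{e}$, hence after squaring $\twonorm{\mathcal{B}(Z)}^2\le 2\inprod{Z}{\mathcal{B}^*(e)}$; the optimality inequality $\nucnorm{X_f}\le\nucnorm{D_1\truM D_2}$ places $Z$ in the descent cone of the nuclear norm at the rank-$\trur$ matrix $D_1\truM D_2$, so the usual tangent-space decomposition gives $\nucnorm{Z}\lesssim\sqrt{\trur}\,\fronorm{Z}$, and the RIP of $\mathcal{B}$ gives the restricted-strong-convexity bound $\twonorm{\mathcal{B}(Z)}^2\ge c_0m\,\fronorm{Z}^2$ on this cone. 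Combining these with $\inprod{Z}{\mathcal{B}^*(e)}\le\nucnorm{Z}\,\opnorm{\mathcal{B}^*(e)}$ yields $\fronorm{Z}\lesssim\sqrt{\trur}\,\opnorm{\mathcal{B}^*(e)}/m$; equivalently, one may simply quote the $\ell_2$-constrained version of the Recht-Fazel-Parrilo / Cai-Zhang recovery theorems applied to $\mathcal{B}$.

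\emph{Noise term and conclusion.} Since $\opnorm{D_i^{-1}}\le(1-\delta)^{-1}$, we have $\opnorm{\mathcal{B}^*(e)}\le(1-\delta)^{-2}\opnorm{\sum_{i=1}^m e_iA_i}$; conditionally on $e$, the matrix $\sum_{i=1}^m e_iA_i$ has i.i.d.\ $N(0,\twonorm{e}^2)$ entries, so $\opnorm{\sum_{i=1}^m e_iA_i}\lesssim\twonorm{e}(\sqrt{d_1}+\sqrt{d_2})$ with probability at least $1-e^{-c(d_1+d_2)}$, while $\twonorm{e}\lesssim\sigma\sqrt{m}$ with probability at least $1-e^{-cm}$. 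Hence $\opnorm{\mathcal{B}^*(e)}\lesssim\sigma\sqrt{m(d_1+d_2)}$, and the previous paragraph gives $\fronorm{Z}\lesssim\sigma\sqrt{\trur(d_1+d_2)/m}$. Finally, $\fronorm{L_fR_f^\top-\truM}=\fronorm{D_1^{-1}ZD_2^{-1}}\le\opnorm{D_1^{-1}}\opnorm{D_2^{-1}}\fronorm{Z}\lesssim\fronorm{Z}$, which is \eqref{eq: mainerrorboundInformal}; a union bound over the three high-probability events gives the claimed probability $1-C\exp(-c(d_1+d_2))$.

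\emph{Main obstacle.} The technical heart is the recovery estimate of the second paragraph: one must use the restricted-strong-convexity bound $\twonorm{\mathcal{B}(Z)}^2\gtrsim m\fronorm{Z}^2$ on the descent cone together with the \emph{adjoint}-noise quantity $\opnorm{\mathcal{B}^*(e)}$, rather than the crude estimate $\twonorm{\mathcal{B}(Z)}\le 2\twonorm{e}$ (which would only deliver the suboptimal rate $\fronorm{Z}\lesssim\sigma$), and one must check that the RIP parameters of $\mathcal{B}$---which are only close to, not equal to, $(1-\delta,1+\delta)$---still satisfy the numerical constants demanded by the noisy recovery theorem. This is exactly the content of the classical RIP-based noisy nuclear-norm guarantees, so the cleanest route is to invoke them for $\mathcal{B}$; the only genuinely new ingredients are the rescaling reduction and the operator-norm estimate of $\sum_{i=1}^m e_iA_i$, which together explain why the rescaled problem retains the minimax-optimal $\sigma\sqrt{\trur d_{\max}/m}$ rate.
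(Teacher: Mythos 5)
Your proposal follows essentially the same route as the paper's own proof in Appendix~\ref{sec: noisyobservation}: reduce the noisy flat-solution problem to the $\ell_2$-ball-constrained nuclear-norm program \eqref{eq: PerturbnucnormXnoisy} via the rescaling $X=D_1LR^\top D_2$ (using that the scaled-trace identity needs no interpolation condition), then combine feasibility, the descent-cone/tangent-space bound $\nucnorm{Z}\lesssim\sqrt{\trur}\fronorm{Z}$, restricted strong convexity of the rescaled map from $\ell_2/\ell_2$ RIP, and the operator-norm bound $\opnorm{\Amap^*e}\lesssim\sigma\sqrt{m(d_1+d_2)}$. The only cosmetic difference is that you inline the Gaussian conditional argument for $\opnorm{\sum_ie_iA_i}$ where the paper cites Wainwright, and you phrase the cone/RSC step as quotable standard theorems where the paper writes out the Recht--Fazel--Parrilo partitioning explicitly; the mathematics is the same and the argument is correct.
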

Note that the bound $\sigma\sqrt{\frac{ \trur (\dmone +\dmtwo)}{m}}$  is minimax optimal according to \cite{candes2011tight}.

\subsection{Proof of Theorem \ref{thm: thmMS}}
Following \eqref{eq: scaled trace Sum of Square} and \eqref{eq: PerturbnucnormX} in Section \ref{sec: A_convex_relaxation_for_flat_solutions}, we see that  \eqref{def: noisyflatsolution} is equivalent to (in the sense of Theorem \ref{thm: equivalenceConvexNonconvexScaledTraceHessioan})
minimizing the nuclear norm over rank constrained matrices so long as $D_1,D_2$ matrices are invertible\footnote{
	Note that the condition
	$\Amap(LR^\top) = b$ is not needed  for \eqref{eqn:scaled_trace_general} to hold, which is critical for the step  \eqref{eq: scaled trace Sum of Square} to hold in the noisy case.
}:
\begin{equation}\label{eq: PerturbnucnormXnoisy}
	\min_{X\in \RR^{d_1\times d_2}:~\rank(X)\leq k} \nucnorm{X}\qquad\textrm{subject to}\qquad \norm{\Amap(D_1^{-1}XD_2^{-1})-b}\leq \norm{e}. 
\end{equation}
Let $\hat{Y}$ be any minimizer of \eqref{eq: PerturbnucnormXnoisy}. Also denote the scaled linear map $\tilde{\Amap} (\cdot)= \Amap(D_1^{-1} \cdot D_2^{-1})$, the scaled ground truth $Y_0 = D_1\truM D_2$, 
and the difference $\Delta = \hat{Y} - Y_0$. 
Our task is to show  $\fronorm{\Delta}\lesssim 
\sigma \sqrt{\frac{\trur(\dmone +\dmtwo}{m}}$. 
The bound \eqref{eq: mainerrorboundInformal} then immediately follows using $D_1LR^\top D_2 = X$ and the fact that $D_1$ and $D_2$ are near identity from Lemma \ref{lem:cond_num}.

\subsubsection{Bound $\fronorm{\Delta}$} Our proof is based on the argument in \cite{recht2010guaranteed}.  Starting with the feasibility of $\hat{Y}$, we have 
\begin{equation}\label{eq: feasibilityBound}
	\begin{aligned}
		& \frac{1}{2m}\twonorm{\tilde{\Amap}(\hat{Y})-b}^2  \leq \frac{1}{2m}\twonorm{e}^2 
		\overset{(a)}
		{\iff} 
		 \frac{1}{2m}\twonorm{\tilde{\Amap}(\Delta)}^2 +\frac{1}{m} \inprod{\tilde{\Amap}(\Delta)}{e}\leq 0\\ 
		\implies 
		& \frac{1}{2m}\twonorm{\tilde{\Amap}(\Delta)}^2 \leq 
		-\frac{1}{m} \inprod{\Delta}{\tilde{\Amap}^*e} 
		\overset{(b)}{\leq} 
		\opnorm{\tilde{\Amap}^*e}\nucnorm{\Delta}.
	\end{aligned}
\end{equation}
Here the step $(a)$ is due to expanding the square for the left hand side, and the step $(b)$ is due to H\"{o}lder's inequality. 
We next try to lower bound $\frac{1}{2m}\twonorm{\tilde{\Amap}(\Delta)}^2 $ and upper bound 	$\opnorm{\tilde{\Amap}^*e}$ and $\nucnorm{\Delta}$.

\paragraph{Upper bound $\nucnorm{\Delta}$} First, let us introduce a lemma that decomposes $\Delta$. 
\begin{lem}\cite[Lemma 2.3 and 3.4]{recht2010guaranteed}\label{lem: AB}
	For any $A,B\in \real^{\dm\times \dm}$, there exists $B_1,B_2$ such that (1) $B= B_1+B_2$, (2) $\rank(B_1)\leq \rank(A)$, (3) $AB^\top_2 =0$ and $A^\top B_2 =0$, (4) $\nucnorm{A+B_2} = \nucnorm{A}+\nucnorm{B_2}$, and (5) $\inprod{B_1}{B_2}=0$.
\end{lem}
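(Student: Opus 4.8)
The plan is to construct $B_1$ and $B_2$ explicitly from a compact singular value decomposition of $A$ and then verify the five properties by short direct computations. Write $A=U\Sigma V^\top$ with $U,V\in\RR^{\dm\times r}$ having orthonormal columns, where $r=\rank(A)$; the columns of $U$ and $V$ span the column space and row space of $A$. Let $P_U=UU^\top$ and $P_V=VV^\top$ be the orthogonal projections onto these subspaces, and put $P_U^\perp=I-P_U$, $P_V^\perp=I-P_V$. Define
\begin{equation*}
	B_2 := P_U^\perp\, B\, P_V^\perp, \qquad\qquad B_1 := B-B_2 = P_U B + P_U^\perp B\, P_V,
\end{equation*}
where the second formula for $B_1$ follows by expanding $B=(P_U+P_U^\perp)B(P_V+P_V^\perp)$ and cancelling the $P_U^\perp B P_V^\perp$ block. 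Property (1) then holds by construction.

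Next I would verify properties (3), (5), and (4) in turn. For (3): since $V^\top P_V^\perp=0$ and $U^\top P_U^\perp=0$, one computes directly $AB_2^\top = U\Sigma\,(V^\top P_V^\perp)\,B^\top P_U^\perp = 0$ and $A^\top B_2 = V\Sigma\,(U^\top P_U^\perp)\,B\,P_V^\perp = 0$. For (5): expand $\inprod{B_1}{B_2} = \inprod{P_U B}{P_U^\perp B P_V^\perp} + \inprod{P_U^\perp B P_V}{P_U^\perp B P_V^\perp}$; the first inner product vanishes because $P_U P_U^\perp=0$ and the second because $P_V P_V^\perp=0$. Property (4) is then immediate from (3): the pair $(A,B_2)$ satisfies the hypotheses $AB_2^\top=0$, $A^\top B_2=0$ of the nuclear-norm additivity lemma \cite[Lemma 2.3]{recht2010guaranteed}, which gives $\nucnorm{A+B_2}=\nucnorm{A}+\nucnorm{B_2}$.

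It remains to establish the rank bound (2), namely $\rank(B_1)\le\rank(A)$. The plan here is a rank count on the decomposition $B_1 = P_U B + P_U^\perp B\,P_V$, exploiting that $P_U B$ has column space inside the $r$-dimensional range of $U$ while $P_U^\perp B\,P_V$ has row space inside the $r$-dimensional range of $V$; tracking $\rank(B_1)$ through this block structure and the subadditivity of rank is meant to yield the claim. I expect this to be the main point requiring care: properties (1), (3), (4), and (5) are one-line consequences of the orthogonality relations $P_UP_U^\perp=0$, $P_VP_V^\perp=0$ and of the cited additivity lemma, whereas pinning down the exact rank of $B_1$ is where the structure of the construction must be used most delicately. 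Once the lemma is in hand, it applies in the matrix-sensing error analysis with $A=Y_0=D_1M_\natural D_2$ and $B=\Delta$, feeding the rank control on $B_1$ into the usual Recht--Fazel--Parrilo argument bounding $\fronorm{\Delta}$.
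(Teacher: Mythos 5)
Your construction is exactly the one in \cite[Lemma 3.4]{recht2010guaranteed} (the paper offers no proof of its own, only that citation), and your verifications of (1), (3), (5), and of (4) via the nuclear-norm additivity lemma \cite[Lemma 2.3]{recht2010guaranteed} are correct. The genuine gap is property (2), which you defer, and the rank count you sketch cannot deliver it as stated. From $B_1=P_UB+P_U^{\perp}BP_V$, the first summand has column space inside the $r$-dimensional range of $U$ and the second can be written as $(P_U^{\perp}BV)V^{\top}$, so each has rank at most $r$ and subadditivity of rank gives only $\rank(B_1)\le 2r=2\rank(A)$.

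This is not a defect of your construction: the inequality $\rank(B_1)\le\rank(A)$ is false in general for \emph{any} $B_1,B_2$ satisfying (1), (3), (5). Take $A=e_1e_1^{\top}$ and $B=e_1e_2^{\top}+e_2e_1^{\top}$. Condition (3) forces the first row and first column of $B_2$ to vanish, i.e.\ $B_2=P_U^{\perp}B_2P_V^{\perp}$; since $P_U^{\perp}BP_V^{\perp}=0$ here, one gets $\inprod{B_1}{B_2}=\inprod{B}{B_2}-\fronorm{B_2}^2=-\fronorm{B_2}^2$, so (5) forces $B_2=0$ and $B_1=B$, which has rank $2>1=\rank(A)$. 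The statement as printed is a typo relative to \cite[Lemma 3.4]{recht2010guaranteed}, which asserts $\rank(B_1)\le 2\rank(A)$; that weaker bound is also what the surrounding argument actually uses (the decomposition $\Delta=R_0+R_c$ with $\rank(R_0)\le 2\trur$, where $\rank(Y_0)=\trur$). You should therefore complete your rank count to the factor-of-two bound rather than attempt to prove the stated inequality, which would fail.
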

Using Lemma \ref{lem: AB}, we can decompose $\Delta =R_0 + R_c$ such that $Y_0 R_c^\top = 0, Y_0^\top R_c =0$, $R_0\leq 2\trur$, $\inprod{R_0}{R_c}=0$, and $\nucnorm{Y_0 + R_c} = \nucnorm{Y_0} + \nucnorm{R_c}$. Hence, we have 
\begin{equation}\label{eq: Rinequality}
	\nucnorm{\hat{Y}}= \nucnorm{Y_0 + \Delta} \geq  \nucnorm{Y_0+R_c} - \nucnorm{R_0} =  \nucnorm{Y_0}+\nucnorm{R_c} - \nucnorm{R_0}.
\end{equation}
Using the optimality of $\nucnorm{\hat{Y}}\leq \nucnorm{Y_0}$, we have that 
\begin{equation}\label{eq: Rinequality2}
	\nucnorm{R_c} \leq \nucnorm{R_0}.
\end{equation}
Using the fact that $R_0$ has rank no more than $2\trur$ and $\inprod{R_0}{R_c} =0$, we have 
\begin{equation} \label{eq: UboundDelta}
\nucnorm{\Delta}\leq \nucnorm{R_c} + \nucnorm{R_0}\leq 
2\nucnorm{R_0}\overset{(b)}{\leq} 2\sqrt{2\trur}\fronorm{R_0} \overset{(c)}{\leq} 2\sqrt{2\trur}\fronorm{\Delta}. 
\end{equation} 
Here in the first step, we use the triangle inequality for $\nucnorm{\Delta}$. This finishes the upper bound of $\nucnorm{\Delta}$.

\paragraph{Lower bound on$\frac{1}{m} \twonorm{\tilde{\Amap}(\Delta)}^2$.} Next we partition $R_c$ into a sum of matrices $R_1, R_2, \ldots$ 
each of rank at most $3\trur$ as in 
\cite[Theorem 3.3]{recht2010guaranteed}.  Let $R_c= U\diag(\sigma)V'$ be the
singular value decomposition of $R_c$.  For each $i\geq 1$ define
the index set $I_i = \{3\trur (i-1)+1,\ldots,3\trur i\}$, and let
$R_i:=U_{I_i}\diag(\sigma_{I_i})V_{I_i}'$. Using the fact that $\inprod{R_c}{R_0}=0$ and the construction of $R_i$, $i\geq 1$, we also have 
\begin{equation}\label{eq: Riorthogonal}
	\langle R_i,
	R_j \rangle = 0,\quad \forall \,i \not = j, i,j\geq 0.
\end{equation}
By construction, we have
\begin{equation}
	\sigma_k \leq \frac{1}{3\trur} \sum_{j\in I_i} \sigma_j
	\qquad \forall\, j\in I_{i+1},
\end{equation}
which implies $\|R_{i+1}\|_F^2 \leq \frac{1}{3\trur} \|R_i\|_*^2$.  We
can then compute the following bound
\begin{equation}
	\begin{aligned}\label{eq: boundIntermediateTerm}
		\sum_{j\geq 2} \|R_j\|_F 
		\leq \frac{1}{\sqrt{3\trur}} \sum_{j\geq 1}
		\|R_j\|_* 
		= \frac{1}{\sqrt{3\trur}} \|R_c\|_* 
		\overset{(a)}{\leq} \frac{1}{\sqrt{3\trur}}
		\|R_0\|_* 
		\overset{(b)}{\leq} \frac{\sqrt{2\trur}}{\sqrt{3\trur}}
		\fronorm{R_0} \\ 
	\end{aligned} 
\end{equation}
where the step $(a)$ is due to \eqref{eq: Rinequality}, and the step $(b)$ is due to the fact
that $\rank(R_0) \leq 2\trur$. From this inequality, we also have 
\begin{equation}\label{eq: deltaFronorm}
	\fronorm{\Delta}\leq \fronorm{R_0+R_1} +\sum_{j\geq 2} \fronorm{R_j} 
	\leq \fronorm{R_0+R_1} + 2\fronorm{R_0} \leq 3\fronorm{R_0+R_1}
\end{equation}
The last equality is due to that $\inprod{R_0}{R_1}=0$.
Hence, we have that
\begin{equation}\label{eq:l1rec}
	\begin{aligned}
		\frac{1}{\sqrt{m}}\|\tilde{\Amap}(\Delta)\|_2 
		\overset{(a)}{\geq}& \frac{1}{\sqrt{m}}\|\tilde{\Amap}(R_0+R_1)\|_2 -
		\sum_{j\geq 2} \frac{1}{\sqrt{m}}\|\tilde{\Amap}(R_j)\|_2 \\
		\geq & (1-\delta) \fronorm{R_0+R_1} -
		(1+\delta) \, \sum_{j\geq 2} \fronorm{R_j}\\
		\overset{(b)}{\geq} & \left(1-\delta\right) \fronorm{R_0+R_1} - \sqrt{\tfrac{2}{3}}(1+\delta)
		\fronorm{R_0} \\
		\overset{(c)}{\geq} & c_1 \fronorm{R_0+R_1}\\ 
		\overset{(d)}{\geq} & c_2 \fronorm{\Delta}.
	\end{aligned}
\end{equation}
Here, in the step $(a)$, we use the reversed triangle inequality. In the step $(b)$, we use \eqref{eq: boundIntermediateTerm}. In the step $(c)$, we use $\inprod{R_0}{R_1}=0$ and  the choice of $\delta$. The last step $(d)$ is due to \eqref{eq: deltaFronorm}. This finishes the lower bound on $	\frac{1}{\sqrt{m}}\|\tilde{\Amap}(\Delta)\|_2 ^2$.

\paragraph{Estimating $\opnorm{\tilde{\Amap}^*e}$.} Note that $\tilde{\Amap}^*(e)= D_1^{-1}( \Amap^* e )D_2^{-2}$. Hence $\opnorm{\tilde{\Amap}^*(e)}= \opnorm{D_1^{-1} \Amap^* (e) D_2^{-2}} \leq \opnorm{D_1^{-1}} \opnorm{\Amap^* e}\opnorm{D_2^{-1}}$. From \cite[Proof of Corollary 10.10]{wainwright2019high}, we know that with probability at least $1-c\exp(-c(\dmone + \dmtwo))$, the inequality $\frac{1}{m}\opnorm{\Amap^* e}\leq C\sigma \sqrt{\frac{\dmone+\dmtwo}{m}}$ holds. Since $0.9I\preceq D_i\preceq 1.1I$, we have
\begin{equation} \label{eq: noiseOpBound}
\frac{1}{m}\opnorm{\tilde{A}(e)}\leq C'\sigma \sqrt{\frac{\dmone+\dmtwo}{m}}.
\end{equation} 

Combining  \eqref{eq: feasibilityBound}, \eqref{eq: UboundDelta}, \eqref{eq:l1rec}, and \eqref{eq: noiseOpBound}, we conclude  $\fronorm{\Delta}\lesssim 
\sigma \sqrt{\frac{\trur(\dmone +\dmtwo}{m}}$, as claimed.

\subsection{A numerical demonstration}
Finally, we validate Theorem \ref{thm: thmMS} via a numerical experiments. We compare the performance of the minimizer $\hat{X}$ of 
Problem \eqref{eq: PerturbnucnormXnoisy} for the case $k = \min\{\dmone ,\dmtwo\}$ and the solution $\hat{X}_{\text{nuc}}$ of the 
nuclear norm minimization (Problem \eqref{eq: PerturbnucnormXnoisy} with $D_1,D_2$ being the identity). 

\paragraph{Experiment setup} We set $d = \dmone = \dmtwo = 25$ and $m = 1000$. We generate the underlying unit Frobenius norm ground truth matrix $\truM$ randomly with rank $\trur =\{1,2,3,\dots,10\}$. We 
vary the noise level $\sigma = \{0.1,0.2,\dots, 1.3,1.4,1.5\}$. For each rank $\trur$, we generate the sensing Gaussian 
ensemble $\Amap$ with $m=1000$ and use the same one for different noise levels. 
Then for each noise level, we generate 25 
realization of the noise $e$ following $N(0,\sigma^2 I)$, and solve the corresponding Problem \eqref{eq: PerturbnucnormXnoisy} 
and the nuclear norm minimization problem. We then  average the error 
$\fronorm{D_1^{-1}\hat{X}D_2^{-1}-\truM}$ and $\fronorm{\hat{X}-\truM}$ over the $25$ trials for each 
configuration of $\trur$ and $\sigma$. 

\paragraph{Recovery Performance} We plot the error in Figure \ref{fig: noisyCase}. The white color indicates 
small error and the dark color indicates large error. It can be seen that it is hard to differentiate the performance 
of the solution to the nuclear norm minimization problem and the solution to Problem \eqref{eq: PerturbnucnormXnoisy}. This 
result validates our theoretical results in Theorem \ref{thm: thmMS}.

\begin{figure}[h]
		\centering
	\begin{subfigure}{0.48\textwidth}
			\centering
			\includegraphics[width=\textwidth]{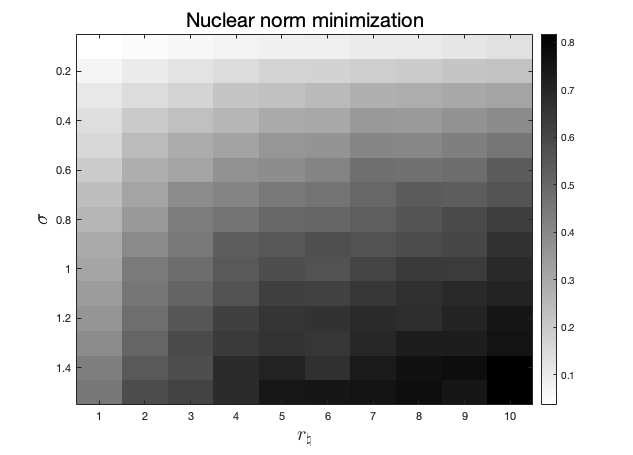}
		\end{subfigure}
	\begin{subfigure}{0.45 \textwidth}
			\includegraphics[width= \textwidth]{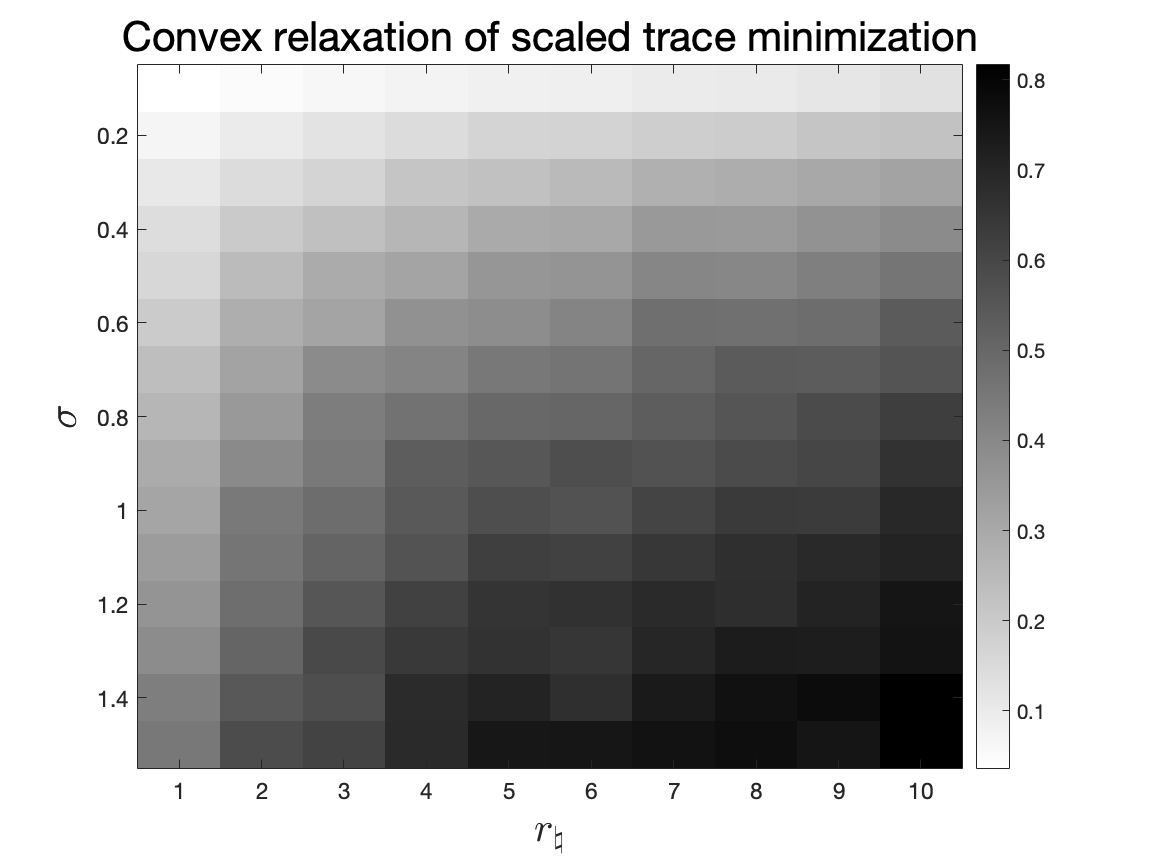}
		\end{subfigure}
	\caption{Matrix Sensing with nuclear norm minimization (left) and the convex 
		relaxation (or $k=d$) of Problem \eqref{eq: PerturbnucnormXnoisy} (right) for 
		different configurations of $(\trur,\sigma)$.
		}\label{fig: noisyCase}
\end{figure}

\end{document}